%% file: main.tex
\documentclass{article}

\PassOptionsToPackage{numbers, 
compress}{natbib}

\usepackage[preprint]{neurips_2026}

\usepackage[utf8]{inputenc} \usepackage[T1]{fontenc}    \usepackage{microtype}
\usepackage{graphicx}
\usepackage{subcaption}
\usepackage{booktabs} \usepackage{enumitem} \usepackage{wrapfig}  \usepackage{hyperref}
\usepackage{url}
\usepackage{float}
\usepackage{algorithm}
\usepackage{algorithmic}

\usepackage{amsmath}
\usepackage{amssymb}
\usepackage{amsfonts}
\usepackage{nicefrac}
\usepackage{mathtools}
\usepackage{amsthm}
\usepackage[capitalize,noabbrev]{cleveref}
\usepackage{listings}
\usepackage{xcolor}
\usepackage{bm}
\usepackage{bbm}

\definecolor{codegreen}{rgb}{0,0.6,0}
\definecolor{codegray}{rgb}{0.5,0.5,0.5}
\definecolor{codepurple}{rgb}{0.58,0,0.82}
\definecolor{backcolour}{rgb}{0.95,0.95,0.92}

\definecolor{softred}{RGB}{220, 80, 80}
\definecolor{softgreen}{RGB}{60, 150, 80}
\definecolor{softorange}{RGB}{230, 140, 60}
\definecolor{softblue}{RGB}{70, 130, 200}

\hypersetup{
    colorlinks=true,
    linkcolor=softblue,
    citecolor=softblue,
    urlcolor=softblue,
    filecolor=softblue
}

\lstdefinestyle{mystyle}{
    backgroundcolor=\color{backcolour},
    commentstyle=\color{codegreen},
    keywordstyle=\color{magenta},
    numberstyle=\tiny\color{codegray},
    stringstyle=\color{codepurple},
    basicstyle=\ttfamily\footnotesize,     breakatwhitespace=false,
    breaklines=true,
    captionpos=b,
    keepspaces=true,
    numbers=left,
    numbersep=5pt,
    showspaces=false,
    showstringspaces=false,
    showtabs=false,
    tabsize=4,
    language=Python
}

\theoremstyle{plain}
\newtheorem{theorem}{Theorem}[section]
\newtheorem{proposition}[theorem]{Proposition}

\theoremstyle{definition}

\theoremstyle{remark}

\usepackage[textsize=tiny]{todonotes}

\title{On Advantage Estimates for Max@K Policy Gradients}

\author{\makebox[\linewidth][c]{\hspace{-1.5em}
\begin{tabular}{@{}c@{}}
Shota Takashiro\textsuperscript{*} ~~ Soichiro Nishimori\textsuperscript{*} ~~  Paavo Parmas\textsuperscript{*} ~~  Yongmin Kim ~~  Kohsei Matsutani\\[0.3em]
 Gouki Minegishi ~~ Yusuke Iwasawa ~~  Takeshi Kojima ~~  Yutaka Matsuo
\\[0.5em]
\normalfont\footnotesize\textsuperscript{*}Equal contribution.\\[0.7em]
\normalfont{The University of Tokyo}\\
\texttt{\{takashiro,paavo.parmas\}@weblab.t.u-tokyo.ac.jp, nishimori@ms.k.u-tokyo.ac.jp}
\end{tabular}}
}

\begin{document}

\maketitle

\input{preamble.tex}

\newcommand{\fix}[1]{\textcolor{blue}{#1}}
\newcommand{\nishimori}[1]{\textcolor{blue}{[\textbf{Nishimori}]#1}}

\begin{abstract}
  Reinforcement learning with verifiable rewards is widely used for post-training reasoning models, but sparse outcome rewards make exploration difficult.
  A complementary approach is to optimize inference-time objectives such as pass@$K$ and max@$K$ directly, yet existing policy-gradient estimators for these objectives use different signals, baselines, and normalizations, making their relationships unclear.
  We study this issue through baseline design and advantage centering.
  Starting from the advantage estimator of a leading method in the field, we show that it is policy-gradient unbiased but yields a non-centered advantage.
  We then introduce a Leave-Two-Out baseline that preserves policy-gradient unbiasedness while making realized batch advantages exactly centered.
  The resulting method, MaxPO, has an efficient quadratic-time implementation and integrates naturally into group-based RL for LLM post-training.
  We further derive the canonical finite-batch advantage for max@$K$, providing a unified view of existing advantage estimators.
  Empirically, we verify that the L2O baseline reduces gradient variance and outperforms non-centered alternatives.
\end{abstract}

\input{sec/intro.tex}
\input{sec/background.tex}
\input{sec/method.tex}
\input{sec/unified.tex}
\input{sec/unified_experiment}
\input{sec/conclusion.tex}

\bibliography{ref}
\bibliographystyle{plainnat}

\newpage
\appendix
\input{sec/app/additional_related_work.tex}
\input{sec/app/code.tex}
\input{sec/app/algo.tex}
\input{sec/app/proof.tex}
\input{sec/app/computation.tex}
\input{sec/app/unified.tex}
\input{sec/app/toy_experiments.tex}
\input{sec/app/llm_experiments}

\section{LLM Usage}
LLMs were used to edit the manuscript and to assist with algebraic manipulations in derivations proposed by the authors. The key research ideas were proposed by the authors.

\end{document}

%% file: preamble.tex
\newcommand{\showfont}{encoding: \f@encoding{},
  family: \f@family{},
  series: \f@series{},
  shape: \f@shape{},
  size: \f@size{}
}

\urlstyle{same}

\def\eqcolon{=\mathrel{\mathop:}}
\newcommand{\argmax}{\operatornamewithlimits{argmax}}
\newcommand{\argmin}{\operatornamewithlimits{argmin}}

\newtheorem{example}{\textbf{例}}
\newtheorem*{example*}{\textbf{例}}
\numberwithin{example}{subsection}

\renewcommand*{\proofname}{\textbf{Proof}}

\newcommand{\R}{\mathbb{R}} \newcommand{\C}{\mathbb{C}} \newcommand{\Q}{\mathbb{Q}} \newcommand{\N}{\mathbb{N}} \newcommand{\Z}{\mathbb{Z}} \newcommand{\K}{\mathbb{K}} 
\newcommand{\sign}[1]{\mathrm{sign}\paren{#1}}

\newcommand{\abs}[1]{\left\lvert{#1}\right\rvert} \newcommand{\norm}[2][]{\lVert{#2}\rVert_{#1}} \newcommand{\Norm}[2][]{\left\lVert{#2}\right\rVert_{#1}} \newcommand{\supnorm}[1]{\norm[\infty]{#1}} \newcommand{\Supnorm}[1]{\Norm[\infty]{#1}} \newcommand{\enorm}[1]{\norm[2]{#1}} \newcommand{\opnorm}[1]{\norm[\mathrm{op}]{#1}} \newcommand{\fnorm}[1]{\norm[\mathrm{F}]{#1}} 
\newcommand{\psinorm}[2][]{\norm[\psi_{#1}]{#2}} \newcommand{\sgnorm}[1]{\psinorm[2]{#1}} \newcommand{\senorm}[1]{\psinorm[1]{#1}} \newcommand{\lipnorm}[1]{\norm[\mathrm{Lip}]{#1}}

\newcommand{\Enorm}[1]{\norm[2]{#1}} \newcommand{\Opnorm}[1]{\norm[\mathrm{op}]{#1}} \newcommand{\Fnorm}[1]{\norm[\mathrm{F}]{#1}} 
\newcommand{\Psinorm}[2][]{\Norm[\psi_{#1}]{#2}} \newcommand{\Sgnorm}[1]{\psiNorm[2]{#1}} \newcommand{\Senorm}[1]{\psiNorm[1]{#1}} \newcommand{\Lipnorm}[1]{\Norm[\mathrm{Lip}]{#1}}

\newcommand{\inv}[1]{{#1}^{-1}}
\newcommand{\Span}{\mathrm{span}}

\newcommand{\E}[2][]{\mathbb{E}_{#1} \left[ {#2} \right]} \renewcommand{\P}[2][]{P_{#1} \left[ {#2} \right]} \newcommand{\Var}[2][]{\mathrm{Var}_{#1} \left[ {#2} \right]} \newcommand{\Cov}[2][]{\mathrm{Cov}_{#1} \left[ {#2} \right]} \newcommand{\cov}[1]{\mathrm{cov}\paren{#1}} 
\newcommand{\Esub}[2]{\mathbb{E}_{#1} \left[ {#2} \right]}
\newcommand{\Varsub}[2]{\mathrm{Var}_{#1} \left[ {#2} \right]}

\newcommand{\B}{\mathcal{B}}
\newcommand{\D}{\mathcal{D}} \newcommand{\X}{\mathcal{X}} \newcommand{\Y}{\mathcal{Y}} \renewcommand{\H}{\mathcal{H}} \newcommand{\F}{\mathcal{F}}
\newcommand{\G}{\mathcal{G}}
\renewcommand{\L}{\mathcal{L}} \newcommand{\Rad}{\mathfrak{R}} \newcommand{\A}{\mathcal{A}} \renewcommand{\O}{\mathcal{O}} \newcommand{\I}{\mathcal{I}}

\newcommand{\ind}[1]{1_{#1}} \newcommand{\indic}[1]{\mathbbm{1}\left[{#1}\right]} 
\newcommand{\vol}{\mathrm{vol}}

\renewcommand{\epsilon}{\varepsilon}

\newcommand{\cA}{\mathcal{A}}
\newcommand{\cB}{\mathcal{B}}
\newcommand{\cC}{\mathcal{C}}
\newcommand{\cD}{\mathcal{D}}
\newcommand{\cE}{\mathcal{E}}
\newcommand{\cF}{\mathcal{F}}
\newcommand{\cG}{\mathcal{G}}
\newcommand{\cH}{\mathcal{H}}
\newcommand{\cI}{\mathcal{I}}
\newcommand{\cJ}{\mathcal{J}}
\newcommand{\cK}{\mathcal{K}}
\newcommand{\cL}{\mathcal{L}} \newcommand{\cM}{\mathcal{M}}
\newcommand{\cN}{\mathcal{N}}
\newcommand{\cO}{\mathcal{O}} \newcommand{\cP}{\mathcal{P}}
\newcommand{\cQ}{\mathcal{Q}}
\newcommand{\cR}{\mathcal{R}}
\newcommand{\cS}{\mathcal{S}}
\newcommand{\cT}{\mathcal{T}}
\newcommand{\cU}{\mathcal{U}}
\newcommand{\cX}{\mathcal{X}}
\newcommand{\cY}{\mathcal{Y}}
\newcommand{\cZ}{\mathcal{Z}}

\newcommand{\fA}{\mathfrak{A}}
\newcommand{\fB}{\mathfrak{B}}
\newcommand{\fC}{\mathfrak{C}}
\newcommand{\fD}{\mathfrak{D}}
\newcommand{\fE}{\mathfrak{E}}
\newcommand{\fF}{\mathfrak{F}}
\newcommand{\fG}{\mathfrak{G}}
\newcommand{\fH}{\mathfrak{H}}
\newcommand{\fI}{\mathfrak{I}}
\newcommand{\fJ}{\mathfrak{J}}
\newcommand{\fK}{\mathfrak{K}}
\newcommand{\fL}{\mathfrak{L}}
\newcommand{\fM}{\mathfrak{M}}
\newcommand{\fN}{\mathfrak{N}}
\newcommand{\fO}{\mathfrak{O}}
\newcommand{\fP}{\mathfrak{P}}
\newcommand{\fQ}{\mathfrak{Q}}
\newcommand{\fR}{\mathfrak{R}}
\newcommand{\fS}{\mathfrak{S}}
\newcommand{\fT}{\mathfrak{T}}
\newcommand{\fU}{\mathfrak{U}}
\newcommand{\fV}{\mathfrak{V}}
\newcommand{\fW}{\mathfrak{W}}
\newcommand{\fX}{\mathfrak{X}}
\newcommand{\fY}{\mathfrak{Y}}
\newcommand{\fZ}{\mathfrak{Z}}

\newcommand{\sA}{\mathscr{A}}
\newcommand{\sB}{\mathscr{B}}
\newcommand{\sC}{\mathscr{C}}
\newcommand{\sD}{\mathscr{D}}
\newcommand{\sE}{\mathscr{E}}
\newcommand{\sF}{\mathscr{F}}
\newcommand{\sG}{\mathscr{G}}
\newcommand{\sH}{\mathscr{H}}
\newcommand{\sI}{\mathscr{I}}
\newcommand{\sJ}{\mathscr{J}}
\newcommand{\sK}{\mathscr{K}}
\newcommand{\sL}{\mathscr{L}}
\newcommand{\sM}{\mathscr{M}}
\newcommand{\sN}{\mathscr{N}}
\newcommand{\sO}{\mathscr{O}}
\newcommand{\sP}{\mathscr{P}}
\newcommand{\sQ}{\mathscr{Q}}
\newcommand{\sR}{\mathscr{R}}
\newcommand{\sS}{\mathscr{S}}
\newcommand{\sT}{\mathscr{T}}
\newcommand{\sU}{\mathscr{U}}
\newcommand{\sV}{\mathscr{V}}
\newcommand{\sW}{\mathscr{W}}
\newcommand{\sX}{\mathscr{X}}
\newcommand{\sY}{\mathscr{Y}}
\newcommand{\sZ}{\mathscr{Z}}

\newcommand{\rA}{\mathrm{A}}
\newcommand{\rB}{\mathrm{B}}
\newcommand{\rC}{\mathrm{C}}
\newcommand{\rD}{\mathrm{D}}
\newcommand{\rE}{\mathrm{E}}
\newcommand{\rF}{\mathrm{F}}
\newcommand{\rG}{\mathrm{G}}
\newcommand{\rH}{\mathrm{H}}
\newcommand{\rI}{\mathrm{I}}
\newcommand{\rJ}{\mathrm{J}}
\newcommand{\rK}{\mathrm{K}}
\newcommand{\rL}{\mathrm{L}}
\newcommand{\rM}{\mathrm{M}}
\newcommand{\rN}{\mathrm{N}}
\newcommand{\rO}{\mathrm{O}}
\newcommand{\rP}{\mathrm{P}}
\newcommand{\rQ}{\mathrm{Q}}
\newcommand{\rR}{\mathrm{R}}
\newcommand{\rS}{\mathrm{S}}
\newcommand{\rT}{\mathrm{T}}
\newcommand{\rU}{\mathrm{U}}
\newcommand{\rV}{\mathrm{V}}
\newcommand{\rW}{\mathrm{W}}
\newcommand{\rX}{\mathrm{X}}
\newcommand{\rY}{\mathrm{Y}}
\newcommand{\rZ}{\mathrm{Z}}

\newcommand{\ra}{\mathrm{a}}
\newcommand{\rb}{\mathrm{b}}
\newcommand{\rc}{\mathrm{c}}
\newcommand{\rd}{\mathrm{d}}
\newcommand{\re}{\mathrm{e}}
\newcommand{\rf}{\mathrm{f}}
\newcommand{\rg}{\mathrm{g}}
\newcommand{\rh}{\mathrm{h}}
\newcommand{\ri}{\mathrm{i}}
\newcommand{\rj}{\mathrm{j}}
\newcommand{\rk}{\mathrm{k}}
\newcommand{\rl}{\mathrm{l}}
\newcommand{\rn}{\mathrm{n}}
\newcommand{\ro}{\mathrm{o}}
\newcommand{\rp}{\mathrm{p}}
\newcommand{\rr}{\mathrm{r}}
\newcommand{\rs}{\mathrm{s}}
\newcommand{\rt}{\mathrm{t}}
\newcommand{\ru}{\mathrm{u}}
\newcommand{\rv}{\mathrm{v}}
\newcommand{\rw}{\mathrm{w}}
\newcommand{\rx}{\mathrm{x}}
\newcommand{\ry}{\mathrm{y}}
\newcommand{\rz}{\mathrm{z}}

\newcommand{\bone}{\mathbf{1}}
\newcommand{\bc}{\mathbf{c}}
\newcommand{\bx}{\mathbf{x}}
\newcommand{\bw}{\mathbf{w}}
\newcommand{\bK}{\mathbf{K}}

\newcommand{\bmA}{\bm{A}}
\newcommand{\bmB}{\bm{B}}
\newcommand{\bmC}{\bm{C}}
\newcommand{\bmD}{\bm{D}}
\newcommand{\bmE}{\bm{E}}
\newcommand{\bmF}{\bm{F}}
\newcommand{\bmG}{\bm{G}}
\newcommand{\bmH}{\bm{H}}
\newcommand{\bmI}{\bm{I}}
\newcommand{\bmJ}{\bm{J}}
\newcommand{\bmK}{\bm{K}}
\newcommand{\bmL}{\bm{L}}
\newcommand{\bmM}{\bm{M}}
\newcommand{\bmN}{\bm{N}}
\newcommand{\bmO}{\bm{O}}
\newcommand{\bmP}{\bm{P}}
\newcommand{\bmQ}{\bm{Q}}
\newcommand{\bmR}{\bm{R}}
\newcommand{\bmS}{\bm{S}}
\newcommand{\bmT}{\bm{T}}
\newcommand{\bmU}{\bm{U}}
\newcommand{\bmV}{\bm{V}}
\newcommand{\bmW}{\bm{W}}
\newcommand{\bmX}{\bm{X}}
\newcommand{\bmY}{\bm{Y}}
\newcommand{\bmZ}{\bm{Z}}
\newcommand{\bma}{\bm{a}}
\newcommand{\bmb}{\bm{b}}
\newcommand{\bmc}{\bm{c}}
\newcommand{\bmd}{\bm{d}}
\newcommand{\bme}{\bm{e}}
\newcommand{\bmf}{\bm{f}}
\newcommand{\bmg}{\bm{g}}
\newcommand{\bmh}{\bm{h}}
\newcommand{\bmi}{\bm{i}}
\newcommand{\bmj}{\bm{j}}
\newcommand{\bmk}{\bm{k}}
\newcommand{\bml}{\bm{l}}
\newcommand{\bmm}{\bm{m}}
\newcommand{\bmn}{\bm{n}}
\newcommand{\bmo}{\bm{o}}
\newcommand{\bmp}{\bm{p}}
\newcommand{\bmq}{\bm{q}}
\newcommand{\bmr}{\bm{r}}
\newcommand{\bms}{\bm{s}}
\newcommand{\bmt}{\bm{t}}
\newcommand{\bmu}{\bm{u}}
\newcommand{\bmv}{\bm{v}}
\newcommand{\bmw}{\bm{w}}
\newcommand{\bmx}{\bm{x}}
\newcommand{\bmy}{\bm{y}}
\newcommand{\bmz}{\bm{z}}
\newcommand{\bmsigma}{\bm{\sigma}}
\newcommand{\bmmu}{\bm{\mu}}

\newcommand{\bfA}{\mathbf{A}}
\newcommand{\bfB}{\mathbf{B}}
\newcommand{\bfC}{\mathbf{C}}
\newcommand{\bfD}{\mathbf{D}}
\newcommand{\bfE}{\mathbf{E}}
\newcommand{\bfF}{\mathbf{F}}
\newcommand{\bfG}{\mathbf{G}}
\newcommand{\bfH}{\mathbf{H}}
\newcommand{\bfI}{\mathbf{I}}
\newcommand{\bfJ}{\mathbf{J}}
\newcommand{\bfK}{\mathbf{K}}
\newcommand{\bfL}{\mathbf{L}}
\newcommand{\bfM}{\mathbf{M}}
\newcommand{\bfN}{\mathbf{N}}
\newcommand{\bfO}{\mathbf{O}}
\newcommand{\bfP}{\mathbf{P}}
\newcommand{\bfQ}{\mathbf{Q}}
\newcommand{\bfR}{\mathbf{R}}
\newcommand{\bfS}{\mathbf{S}}
\newcommand{\bfT}{\mathbf{T}}
\newcommand{\bfU}{\mathbf{U}}
\newcommand{\bfV}{\mathbf{V}}
\newcommand{\bfW}{\mathbf{W}}
\newcommand{\bfX}{\mathbf{X}}
\newcommand{\bfY}{\mathbf{Y}}
\newcommand{\bfZ}{\mathbf{Z}}
\newcommand{\bfa}{\mathbf{a}}
\newcommand{\bfb}{\mathbf{b}}
\newcommand{\bfc}{\mathbf{c}}
\newcommand{\bfd}{\mathbf{d}}
\newcommand{\bfe}{\mathbf{e}}
\newcommand{\bff}{\mathbf{f}}
\newcommand{\bfg}{\mathbf{g}}
\newcommand{\bfh}{\mathbf{h}}
\newcommand{\bfi}{\mathbf{i}}
\newcommand{\bfj}{\mathbf{j}}
\newcommand{\bfk}{\mathbf{k}}
\newcommand{\bfl}{\mathbf{l}}
\newcommand{\bfm}{\mathbf{m}}
\newcommand{\bfn}{\mathbf{n}}
\newcommand{\bfo}{\mathbf{o}}
\newcommand{\bfp}{\mathbf{p}}
\newcommand{\bfq}{\mathbf{q}}
\newcommand{\bfr}{\mathbf{r}}
\newcommand{\bfs}{\mathbf{s}}
\newcommand{\bft}{\mathbf{t}}
\newcommand{\bfu}{\mathbf{u}}
\newcommand{\bfv}{\mathbf{v}}
\newcommand{\bfw}{\mathbf{w}}
\newcommand{\bfx}{\mathbf{x}}
\newcommand{\bfy}{\mathbf{y}}
\newcommand{\bfz}{\mathbf{z}}

\newcommand{\bbA}{\mathbb{A}}
\newcommand{\bbB}{\mathbb{B}}
\newcommand{\bbC}{\mathbb{C}}
\newcommand{\bbD}{\mathbb{D}}
\newcommand{\bbE}{\mathbb{E}}
\newcommand{\bbF}{\mathbb{F}}
\newcommand{\bbG}{\mathbb{G}}
\newcommand{\bbH}{\mathbb{H}}
\newcommand{\bbI}{\mathbb{I}}
\newcommand{\bbJ}{\mathbb{J}}
\newcommand{\bbK}{\mathbb{K}}
\newcommand{\bbL}{\mathbb{L}}
\newcommand{\bbM}{\mathbb{M}}
\newcommand{\bbN}{\mathbb{N}}
\newcommand{\bbO}{\mathbb{O}}
\newcommand{\bbP}{P}
\newcommand{\bbQ}{\mathbb{Q}}
\newcommand{\bbR}{\mathbb{R}}
\newcommand{\bbS}{\mathbb{S}}
\newcommand{\bbT}{\mathbb{T}}
\newcommand{\bbU}{\mathbb{U}}
\newcommand{\bbV}{\mathbb{V}}
\newcommand{\bbW}{\mathbb{W}}
\newcommand{\bbX}{\mathbb{X}}
\newcommand{\bbY}{\mathbb{Y}}
\newcommand{\bbZ}{\mathbb{Z}}
\newcommand{\eR}{\overline{\R}}
\newcommand{\eO}{\overline{\cO}}

\newcommand{\st}{\,\mathrm{s.t.}\,}
\newcommand{\iid}{\mathrm{i.i.d.}}

\newcommand{\set}[1]{\left\{{#1}\right\}}
\newcommand{\seq}[3][]{\set{#2}_{#3}^{#1}}
\newcommand{\infseq}[3][1]{\seq[\infty]{#2}{{#3}={#1}}}
\newcommand{\ninfseq}[2][1]{\seq[\infty]{#2}{n={#1}}}

\newcommand{\infbigcup}[2][1]{\bigcup_{{#2}={#1}}^\infty}
\newcommand{\ninfbigcup}[1][1]{\infbigcup[#1]{n}}

\newcommand{\comp}[1]{\left.{#1}\right.^\mathrm{c}}
\newcommand{\interior}[1]{\left.{#1}\right.^\mathrm{\circ}}
\newcommand{\exterior}[1]{\left.{#1}\right.^\mathrm{e}}
\newcommand{\boundary}[1]{\mathrm{bd}\paren{#1}}

\newcommand{\pcomp}[1]{\left({#1}\right)^\mathrm{c}}
\newcommand{\pinterior}[1]{\left({#1}\right)^\mathrm{\circ}}
\newcommand{\pexterior}[1]{\left({#1}\right)^\mathrm{e}}
\newcommand{\pboundary}[1]{\left({#1}\right)^\mathrm{f}}

\newcommand{\relint}[1]{\mathrm{relint}\paren{#1}}
\newcommand{\inter}[1]{\mathrm{int}\paren{#1}}
\newcommand{\conv}[1]{\mathrm{conv}\paren{#1}}
\newcommand{\aff}[1]{\mathrm{aff}\paren{#1}}
\newcommand{\cone}[1]{\mathrm{cone}\paren{#1}}

%% file: sec/intro.tex
\section{Introduction}
Reinforcement learning with verifiable rewards has become a common post-training method for large language models, particularly for reasoning tasks where the final answer can be checked automatically~\citep{shao2024deepseekmath}.
In such settings, rewards are typically sparse and outcome-based: a sampled completion receives a positive reward only if its final answer is correct.
This sparsity makes exploration difficult, since the policy must discover useful reasoning paths in a large combinatorial space.
Existing work has addressed this issue by adding exploration incentives, including entropy regularization~\citep{cheng2025reasoning, zheng2025first, cui2025entropy} and count-based bonuses~\citep{song2025outcome, bai2025online}.
These methods can be effective, but they introduce auxiliary objectives and require careful tuning of their trade-off with the task reward.

A complementary direction is to optimize the objective that is actually used at inference time.
Reasoning models are often evaluated by repeated sampling: a prompt is considered solved if at least one of several sampled completions is correct.
This motivates pass@$K$, the probability that at least one out of $K$ samples is correct, and its continuous-reward generalization max@$K$.
Several recent works have proposed unbiased policy-gradient estimators for pass@$K$ and max@$K$~\citep{koyamada2023emergence,tang2025optimizing, zheng2025act, chen2025pass, walder2025pass, peng2025simko, bagirov2025best, tuyls2025representation}.
However, because the max operation couples multiple samples in a nontrivial way, the relationship among these estimators is not immediately clear.
As a result, it remains unclear which estimator should be preferred, even when they target the same objective.

We study this question through a classical criterion from the policy-gradient literature: the quality of the advantage estimator.
Subtracting a baseline can reduce variance while preserving unbiasedness~\citep{Williams1992-rp, greensmith2004variance, mei2022role}, and a particularly desirable advantage estimator is centered.
For batched policy-gradient estimators, an even stronger property is exact centering: the realized batch-average advantage is zero for every sampled batch.
This property removes unnecessary common-mode variation in the gradient estimate and provides a principled way to compare otherwise unbiased estimators.

From this perspective, we first revisit the estimator of \citet{walder2025pass} from the
Expected Improvement (EI) viewpoint from our prior work \citep{nishimori2026emergence}.
This estimator provides a general unbiased policy-gradient estimator for max@$K$ with $K \le B$, where $B$ is the number of sampled responses.
We show that the PG estimator is unbiased but the resulting advantage is not centered, leaving room for variance reduction through an additional baseline.
To address this, we introduce a \textbf{Leave-Two-Out (L2O)} baseline.
The resulting estimator remains unbiased as a policy-gradient estimator, while its advantage is zero-mean in expectation and exactly centered within every realized batch.
We also derive an efficient $O(B^2)$ vectorized implementation and incorporate the estimator into group-based reinforcement learning for LLM post-training, which we call MaxPO, or Max@$K$ Policy Optimization.

We then step back from the EI estimator and ask what the advantage should be for a finite-batch max@$K$ policy gradient.
In ordinary policy gradients, an advantage is a return minus a baseline.
For max@$K$, however, the relevant return of a response is not its individual reward alone, because the objective evaluates the maximum reward within a group of $K$ samples.
Thus, the natural return assigned to response $i$ is the expected max@$K$ value of a group conditioned on containing $i$, which we denote by $u_i$.
The corresponding baseline should remove the contribution of response $i$ while preserving the same max@$K$ objective; this gives the leave-one-out expected max@$K$ value $v_i$, computed from groups that exclude $i$.
This leads to the canonical finite-batch advantage $u_i-v_i$: the max@$K$ analogue of a return minus a leave-one-out baseline.
This view explains why MaxPO recovers the centered canonical signal, and it also provides a common language for comparing existing estimators.
Under this view, current methods fall into two categories: \textbf{uncentered} signals such as EI-only/PKPO, and \textbf{(normalized) canonical} signals such as MaxPO.

Empirically, we first validate the variance-reduction effect of our method in controlled bandit and maze environments, with the maze results reported in Appendix~\ref{app:toy_experiments:maze_environment}.
We also validate the practical efficacy of our method for LLM reasoning tasks using Llama-3.2-3B-Instruct and Qwen2.5-Math-7B (Sec.~\ref{sec:reasoning_experiments}).
Specifically, on the Llama-3.2-3B-Instruct model, our method (EI+L2O) achieved an average reduction of 77.4\% in gradient variance during training compared to PKPO, an existing method without L2O \citep{walder2025pass}.
Our method also improved task-average pass@256 performance by 5.2\% on Qwen2.5-Math-7B and by 2.4\% on Llama-3.2-3B-Instruct relative to PKPO across five math reasoning benchmarks (AIME24, AIME25, AMC23, MATH500~\citep{hendrycks2021math}, and Minerva~\citep{lewkowycz2022solving}).
Furthermore, we demonstrated that our method consistently outperforms strong representative LLM post-training baselines (PKPO, GRPO, Entropy-Adv).

%% file: sec/background.tex
\section{Background}\label{sec:background}
In this section, we first formalize the setting of reasoning tasks and review the objectives of pass@\textit{K} and max@\textit{K} policy optimization.
We then outline the motivation for our work and introduce the fundamental concepts required to construct our proposed estimator.

\paragraph{Setting.}
\label{sec:background:setting}
We consider a setting in which an agent generates an action $a \in \mathcal{A}$, where $\mathcal{A}$ is a finite set, and the action is evaluated by a reward function $r: \mathcal{A} \to \mathbb{R}$.
Our goal is to learn a policy $\pi_\theta \in \Delta(\mathcal{A})$, parameterized by $\theta \in \mathbb{R}^d$, that maximizes the expected reward
$J_{\mathrm{RL}}(\theta) = \mathbb{E}_{a \sim \pi_\theta}[r(a)]$.
When it is clear from the context, we omit the dependence on $\pi_\theta$ for brevity.
In practice, the policy may take additional information as input (e.g., questions in LLM reasoning), but we omit this dependence, and the same statements naturally apply to that setting (Sec.~\ref{sec:method:integration}).
During optimization, we assume access to a batch of $B$ actions $a_1,\ldots,a_B \in \mathcal{A}$ sampled from the policy and rewards $r_1,\ldots,r_B \in \mathbb{R}$.
We denote the batch by $\mathcal{D}=(a_{1:B},r_{1:B})$.

\subsection{Policy Gradient Estimation and Variance Reduction}\label{sec:background:pg}
In this study, we focus on the policy gradient (PG) method, which directly optimizes $\pi_\theta$ via gradient ascent using $\nabla J(\theta) = \mathbb{E}_{a \sim \pi_\theta}\!\left[r(a)\nabla_{\theta}\log \pi_\theta(a)\right]$ \citep{Williams1992-rp}.
Because policy gradient estimates can exhibit high variance, we introduce a constant baseline $b$ to reduce variance \citep{greensmith2004variance}.
This yields the following expression.
\begin{equation}\label{eq:policy_gradient}
\nabla J(\theta)
= \mathbb{E}_{a \sim \pi_\theta}
\left[(r(a) - b)\nabla_{\theta}\log \pi_\theta(a)\right].
\end{equation}
We use the term \emph{advantage} to denote the action-dependent scalar multiplier of the score function (i.e., \mbox{$r(a)-b$} in Eq.~\eqref{eq:policy_gradient}).
As long as the baseline is action-independent, the PG estimator remains unbiased because $\mathbb{E}_{a \sim \pi_\theta}\!\left[b\,\nabla_\theta \log \pi_\theta(a)\right] = 0$.
A common choice is to use the expected reward as the baseline, $b = \mathbb{E}_{a \sim \pi_\theta}[r(a)]$, so that the advantage has zero mean: \mbox{$\mathbb{E}_{a \sim \pi_\theta}[r(a)-b]=0$} \citep{greensmith2004variance}.
Using such a zero-mean advantage removes a constant offset from the reward signal, which is often a dominant source of variance.

\paragraph{Batch estimation.}
Given the batch $\mathcal{D}$, we can approximate the gradient using the REINFORCE estimator \citep{Williams1992-rp} $\hat{g} = \frac{1}{B} \sum_{i=1}^B (r_i - b_i)\nabla_{\theta} \log \pi_\theta(a_i).$
Here, $b_i$ is the baseline for sample $i$.
We distinguish two notions of unbiasedness and one notion of centering:
\begin{enumerate}
\item \textbf{Unbiased PG estimator:} We call a PG estimator $\hat{g}$ unbiased when $\mathbb{E}_{\mathcal{D} \sim \pi_\theta}\!\left[\hat{g}\right] = \nabla J(\theta)$.
Here, $\mathbb{E}_{\mathcal{D} \sim \pi_\theta}[\cdot]$ denotes expectation over random batches $\mathcal{D}=(a_{1:B}, r_{1:B})$ obtained by i.i.d. sampling from $\pi_\theta$ and evaluating rewards under $r$.
This requires that the baseline $b_i$ does not depend on $r_i$, which would introduce action dependence into the PG estimator.
\item \textbf{Unbiased advantage:} We call an advantage estimator (e.g., \mbox{$r_i-b_i$}) unbiased when \mbox{$\mathbb{E}_{\mathcal{D} \sim \pi_\theta}[r_i-b_i]=0$}.
\item \textbf{Centering:} We call the realized batch-average advantage centered when \mbox{$\frac{1}{B}\sum_{i=1}^B (r_i-b_i)=0$}.
This is a stronger per-batch property than unbiasedness in expectation; the latter does not in general imply centering.
\end{enumerate}
Unbiasedness of the PG estimator is a prerequisite for policy gradient methods, and an unbiased advantage is crucial for variance reduction \citep{greensmith2004variance}.
Therefore, we aim to achieve both forms of unbiasedness, and it turns out that centering also holds.
For a batch PG estimator, a na\"ive way to obtain an unbiased advantage estimator is to set the baseline to the batch mean reward, $b_i = \frac{1}{B}\sum_{j=1}^B r_j$, as an estimator of $\mathbb{E}_{a \sim \pi_\theta}[r(a)]$.
However, this violates the independence condition because $b_i$ depends on $r_i$, resulting in a biased PG estimator.
A common remedy is to use a Leave-One-Out (L1O) baseline, $b_i = \frac{1}{B-1}\sum_{j \neq i} r_j$, to avoid dependence on $r_i$ \citep{pmlr-v80-parmas18a,mnih2016asynchronous}.
Although L1O works in this simple RL setting, it may fail for more complicated objectives, as is the case for max@K (see Sec.~\ref{sec:method:ei}).

\subsection{Pass@K and max@K Policy Optimization}\label{sec:background:pass@K}
Recently, in the context of RL for reasoning tasks with binary rewards (correct/incorrect), directly optimizing the pass@\textit{K} metric has been proposed to incentivize answer diversity \citep{tang2025optimizing, zheng2025act, chen2025pass, walder2025pass, peng2025simko, bagirov2025best, tuyls2025representation}.
Pass@\textit{K} optimization aims to maximize the probability that at least one of $K$ sampled answers is correct.
The max@\textit{K} objective generalizes pass@\textit{K} to continuous reward functions \citep{walder2025pass} and is defined as the expected maximum reward among $K \leq B$ samples.
\begin{equation}\label{eq:max@K}
J^{K}(\theta) := \mathbb{E}_{a_{1:K} \sim \pi_\theta}\!\left[\max_{k=1,\dots,K} r(a_k)\right].
\end{equation}

The PG for this objective was first given in our prior work with \citet{koyamada2023emergence}:
\begin{equation}\label{eq:policy_gradient_max@K}
\nabla J^{K}(\theta)
:= \mathbb{E}_{a_{1:K}}
\left[\max_{k=1,\dots,K} r(a_k) \sum_{k=1}^K \nabla_{\theta} \log \pi_\theta(a_k)\right].
\end{equation}

Our goal is to propose an estimator that is unbiased as a PG estimator and also yields an advantage with zero expectation.
However, the formulation in Eq.~\eqref{eq:policy_gradient_max@K} makes it difficult to evaluate the advantage because the $K$ actions are coupled through the max operator.
To address this issue, we leverage an alternative formulation of the max@\textit{K} PG.

\paragraph{Expected Improvement Formulation.}
\citet{nishimori2026emergence} proposed a method to decouple the max@\textit{K} PG into a per-action expectation by building on the baseline proposed by \citet{tang2025optimizing}:
$W_{-k} := \max_{k' \neq k} r(a_{k'})$.
Using the identity $\max_{k} r_k - W_{-k} = (r_k - W_{-k})_+$, where $(z)_+ := \max(z, 0)$, the PG can be rewritten as:
\begin{align*}
\nabla J^{K}(\theta)&= \mathbb{E}\left[ \sum_{k=1}^K \nabla_{\theta} \log \pi_\theta(a_k)\, \bigl(r(a_k) - W_{-k}\bigr)_+ \right].
\end{align*}

By exploiting the symmetry under the i.i.d.\ assumption, we arrive at the simplified form (Prop.~2 of \citet{nishimori2026emergence}):
\begin{equation}\label{eq:pg_ei}
\nabla J^{K}(\theta) = K \mathbb{E}_{a \sim \pi_\theta}
\left[\nabla_{\theta} \log \pi_\theta(a)\, s(a) \right], \quad s(a) := \mathbb{E}_{a_{1:K-1} \sim \pi_\theta}
\left[ \bigl(r(a) - W_{K-1}\bigr)_+ \right],
\end{equation}
with $W_{K-1} = \max_{j=1,\dots,K-1} r(a_j)$.
$s(a)$ is referred to as the \textbf{Expected Improvement (EI)} of action $a$, as it quantifies the expected gain of $a$ over the best of $K-1$ other samples \citep{nishimori2026emergence}.

\paragraph{Motivation for Further Variance Reduction.}
Compared to the standard PG in RL (Eq.~\eqref{eq:policy_gradient}), the EI term $s(a)$ serves as the advantage.
Moreover, $s(a)$ corresponds to the expected value of the advantage in the $\max@K - \max@(K-1)$ estimator proposed by \citet{walder2025pass}.
While their estimator reduces variance compared to na\"ive approaches, Eq.~\eqref{eq:pg_ei} reveals a critical limitation.
Because $s(a)$ is the output of a ReLU, it is non-negative, implying $\mathbb{E}_{a \sim \pi_\theta}[s(a)] > 0$.
This consistently positive advantage corresponds to an underestimated baseline, which can be particularly harmful because it may steer policy optimization toward suboptimal actions \citep{chung2021beyond}.
Thus, \textbf{their estimator is unbiased as a PG estimator but not as an advantage estimator}, leaving room for further variance reduction through appropriate baseline subtraction.

%% file: sec/method.tex
\section{Proposed Method}\label{sec:method}
In Sec.~\ref{sec:method:ei}, we derive a baseline that \textbf{guarantees unbiasedness of both the PG estimator and the resulting advantage and exactly centers the realized batch-average advantage}, and we present an efficient computation method.
Finally, Sec.~\ref{sec:method:integration} describes how to integrate our approach into modern RL algorithms, instantiating it as MaxPO (Max@K Policy Optimization).

\subsection{Unbiased Advantage Estimation}\label{sec:method:ei}
Given a batch $\mathcal{D} = (a_{1:B}, r_{1:B})$, we first construct unbiased estimators $s_i$ of the expected improvement $s(a_i)$ defined in Eq.~\eqref{eq:pg_ei}.
\begin{equation}
s_i := \mathbb{E}_{\mathcal{I}}\!\left[\bigl(r_i - \max_{j \in \mathcal{I}} r_j\bigr)_+\right].
\end{equation}
Here, $\mathcal{I}$ is a subset of size $K-1$ drawn uniformly \textit{without replacement} from the indices $\{1, \dots, B\}\setminus\{i\}$.
By U-statistics theory \citep{hoeffding1992class}, $s_i$ is an unbiased estimator of $s(a_i)$.
Note that $s_i$ is equivalent to one of \citet{walder2025pass}'s advantage estimators (max@K - max@(K-1)).

Recall the three notions introduced in Sec.~\ref{sec:background:pg}.
For the desired estimator, we seek a baseline $b_i$ that satisfies the following three conditions:
(1) \textbf{Independence:} $b_i$ does not depend on $r_i$, ensuring an unbiased PG estimator;
(2) \textbf{Unbiased advantage:} \mbox{$\mathbb{E}_{\mathcal{D} \sim \pi_\theta}[s_i-b_i]=0$}; and
(3) \textbf{Centering:} \mbox{$\frac{1}{B}\sum_{i=1}^B (s_i-b_i)=0$} for every realized batch.
As discussed in Sec.~\ref{sec:background:pg}, a common approach is the L1O baseline, $b_{-i}^{\mathrm{L1O}} := \frac{1}{B-1} \sum_{j \neq i} s_j$.
However, in the max@K setting, the estimator $s_j, j \neq i$ can implicitly depend on $r_i$, because $r_i$ acts as a potential comparator (maximum over the subset of size $K-1$) in the construction of $s_j$.
Consequently, $b_{-i}^{\mathrm{L1O}}$ can correlate with $r_i$, thereby violating the independence condition and leading to a biased PG estimator.

\paragraph{Leave-Two-Out (L2O) Baseline.}
To satisfy all three conditions, we propose the \textit{Leave-Two-Out} (L2O) baseline:
\begin{equation}\label{eq:l2o_baseline}
b_{-i}^{\mathrm{L2O}} := \frac{1}{B-1} \sum_{j \neq i} s^{(-i)}_j, \quad s^{(-i)}_j := \mathbb{E}_{\mathcal{I}'}\!\left[ \bigl(r_j - \max_{k \in \mathcal{I}'} r_k\bigr)_+ \right].
\end{equation}
Here, $\mathcal{I}'$ is a subset of size $K-1$ drawn uniformly from $\{1, \dots, B\}\setminus\{i, j\}$.
We exclude $i$ to ensure independence from $r_i$, while keeping the comparator subset size $K-1$ consistent with the original EI definition.
This requires $K - 1 \leq B - 2$, i.e., $K \leq B - 1$.
By construction, $b_{-i}^{\mathrm{L2O}}$ is independent of $r_i$ because $r_i$ does not appear as a comparator in its computation.
The following proposition establishes the centering condition. Together with the independence property above, it implies that the full estimator $\frac{K}{B} \sum_{i=1}^B \nabla_{\theta} \log \pi_\theta(a_i) (s_i - b_{-i}^{\mathrm{L2O}})$ is an unbiased PG estimator for the max@\textit{K} objective, and that $s_i-b_{-i}^{\mathrm{L2O}}$ is an unbiased advantage estimator.

\begin{proposition}[Unbiasedness and Centering of the L2O Baseline]\label{prop:l2o_baseline_unbiasedness}
For any $i$, the L2O advantage satisfies
\begin{equation}
    \mathbb{E}_{\mathcal{D} \sim \pi_\theta}\!\left[s_i - b_{-i}^{\mathrm{L2O}}\right] = 0, \quad \text{and} \quad     \frac{1}{B}\sum_{i=1}^B \left(s_i - b_{-i}^{\mathrm{L2O}}\right) = 0,
\end{equation}
for any realized batch $\mathcal{D}$.
Hence $s_i - b_{-i}^{\mathrm{L2O}}$ is unbiased as an advantage estimator and exactly centered at the batch level.
\end{proposition}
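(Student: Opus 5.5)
The plan is to prove centering first, as an exact combinatorial identity valid for every realized batch, and then derive unbiasedness from it by exchangeability. For brevity write $f(x,S) := \bigl(x - \max_{k\in S} r_k\bigr)_+$. By definition, $s_i$ is the average of $f(r_i,\mathcal{I})$ over all $\binom{B-1}{K-1}$ subsets $\mathcal{I}\subseteq[B]\setminus\{i\}$ of size $K-1$. Likewise, $s_j^{(-i)}$ is the average of $f(r_j,\mathcal{I}')$ over all $\binom{B-2}{K-1}$ subsets $\mathcal{I}'\subseteq[B]\setminus\{i,j\}$ of size $K-1$.

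\emph{Centering.} I will show that $\sum_i b_{-i}^{\mathrm{L2O}} = \sum_i s_i$.
\begin{itemize}
\item Expand $\sum_i b_{-i}^{\mathrm{L2O}} = \frac{1}{B-1}\sum_{j}\sum_{i\neq j} s_j^{(-i)}$, swapping the order of summation.
\item For fixed $j$, look at the inner sum $\sum_{i\neq j} s_j^{(-i)}$. It is a weighted sum of $f(r_j,S)$ over pairs $(i,S)$ with $S\subseteq[B]\setminus\{i,j\}$ and $|S|=K-1$.
\item Each fixed subset $S\subseteq[B]\setminus\{j\}$ with $|S|=K-1$ arises from exactly $(B-1)-(K-1) = B-K$ choices of $i$, namely those $i\notin S\cup\{j\}$. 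Each occurrence carries weight $1/\binom{B-2}{K-1}$.
\item Hence $\sum_{i\neq j} s_j^{(-i)} = \frac{B-K}{\binom{B-2}{K-1}}\sum_{S} f(r_j,S) = \frac{B-K}{\binom{B-2}{K-1}}\binom{B-1}{K-1}\, s_j$.
\item Use $\binom{B-1}{K-1}/\binom{B-2}{K-1} = \frac{B-1}{B-K}$. The inner sum then equals $(B-1)s_j$, so $\sum_i b_{-i}^{\mathrm{L2O}} = \sum_j s_j$.
\end{itemize}
This gives $\frac1B\sum_i(s_i-b_{-i}^{\mathrm{L2O}})=0$ identically. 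The requirement $K\le B-1$ guarantees $B-K\ge1$ and that the binomial coefficients are nonzero.

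\emph{Unbiasedness.} Take expectations over the i.i.d. batch. Because the batch is exchangeable and the construction of $s_i$ and $b_{-i}^{\mathrm{L2O}}$ is permutation-equivariant, $\mathbb{E}[s_i - b_{-i}^{\mathrm{L2O}}]$ takes the same value $c$ for every $i$. Taking expectations of the centering identity then gives $Bc=0$, so $c=0$.

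As a sanity check, one can also argue directly. By U-statistic unbiasedness, $\mathbb{E}[s_i]=\mathbb{E}[s(a)]$. Each $s_j^{(-i)}$ is likewise an average over comparator subsets of i.i.d. draws independent of $a_j$, so $\mathbb{E}[s_j^{(-i)}]=\mathbb{E}[s(a)]$ as well.

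The only real obstacle is the counting step: getting the multiplicity $B-K$ and the binomial ratio right, so that the weights collapse exactly to $(B-1)s_j$. Everything else is bookkeeping.
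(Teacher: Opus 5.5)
Your proof is correct. The centering half is the paper's own argument: fix $j$, count each comparator set $S\subseteq\{1,\dots,B\}\setminus\{j\}$ with $|S|=K-1$ once for each of the $B-K$ removed indices $i\notin S\cup\{j\}$, and collapse the weights using $\binom{B-1}{K-1}/\binom{B-2}{K-1}=(B-1)/(B-K)$.

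You differ on the expectation claim. The paper proves it directly and without using centering. By the i.i.d.\ assumption, $\mathbb{E}[s_i]=\mu_{EI}$ and $\mathbb{E}[s_j^{(-i)}]=\mu_{EI}$ for every $j\neq i$, so $\mathbb{E}[b_{-i}^{\mathrm{L2O}}]=\mu_{EI}$; this is exactly your ``sanity check.'' Your main argument instead deduces it from centering plus exchangeability. That is valid because $s_i-b_{-i}^{\mathrm{L2O}}$ is the same function of $r_i$ and the multiset $\{r_k\}_{k\neq i}$ for every $i$, and rewards are bounded since $\mathcal{A}$ is finite.

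Your route is shorter and never has to identify the common target. The paper's route gives more. It shows that $b_{-i}^{\mathrm{L2O}}$ is, on its own, an unbiased estimate of the population mean EI built without sample $i$, which is the conceptual reason it is the right baseline. It would also still hold if the exact all-subsets averages were replaced by Monte Carlo subsampling of comparator sets, where centering generally fails.

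Your route also makes visible that the zero-mean claim is weak by itself. Mean-centered EI, $s_i-\frac{1}{B}\sum_j s_j$, is also exactly centered and permutation-equivariant, hence also zero-mean, yet in general it gives a biased policy gradient. Policy-gradient unbiasedness therefore still rests on the separate fact that $b_{-i}^{\mathrm{L2O}}$ does not depend on $r_i$.
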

The proof is in Appendix~\ref{app:proof:l2o_baseline_unbiasedness}.
While the theoretical properties of the L2O baseline are desirable, its practical application hinges on computational efficiency.
Computing $\mathbf{b}^{\mathrm{L2O}} := [b_{-1}^{\mathrm{L2O}}, \dots, b_{-B}^{\mathrm{L2O}}]^\top$ naively costs $O(B^3)$ because it evaluates $O(B)$ EI terms for each of the $B$ indices $i$, and each EI term naively scans $O(B)$ candidate comparators in the (reduced) batch.
However, leveraging the ReLU-based formulation of the EI estimator, we can compute the L2O baseline in $O(B^2)$.

\begin{theorem}[Efficient Computation of the L2O Baseline]\label{thm:l2o_baseline_informal}
    Given a batch of rewards $\mathbf{r} = [r_1, \dots, r_B]^\top$, the L2O baseline $\mathbf{b}^{\mathrm{L2O}}$ can be computed in $O(B^2)$ time.
\end{theorem}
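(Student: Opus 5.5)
We first sort the rewards, which costs $O(B\log B)$. Write the sorted values as $r_{(1)}\le\dots\le r_{(B)}$, and break ties by index so that the order is strict. For a fixed reduced index set $S$ of size $m$ and an element $j\in S$, the quantity $\mathbb{E}_{\mathcal{I}}[(r_j-\max_{k\in\mathcal{I}} r_k)_+]$ over uniform $(K-1)$-subsets $\mathcal{I}\subseteq S\setminus\{j\}$ is an order-statistic average. The maximum of $\mathcal{I}$ equals the element of rank $\ell$ among $S\setminus\{j\}$ with probability $\binom{\ell-1}{K-2}/\binom{m-1}{K-1}$, and only elements strictly below $r_j$ contribute. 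Hence, if $j$ has rank $q$ in $S$,
\begin{equation*}
s^{S}_j=\sum_{\ell=1}^{q-1}\frac{\binom{\ell-1}{K-2}}{\binom{m-1}{K-1}}\bigl(r_j-r^{S}_{(\ell)}\bigr),
\end{equation*}
where $r^S_{(\ell)}$ denotes the $\ell$-th smallest reward in $S$. Because $(\cdot)_+$ vanishes for comparators above $r_j$, only lower-ranked elements enter.

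Summing over $j\in S$ and exchanging the order of summation gives
\begin{equation*}
\sum_{j\in S}s^S_j=\binom{m-1}{K-1}^{-1}\sum_{\ell<q}\binom{\ell-1}{K-2}\bigl(r^S_{(q)}-r^S_{(\ell)}\bigr).
\end{equation*}
This is a fixed linear functional of the sorted vector of $S$,
\begin{equation*}
\sum_{j\in S}s^S_j=\sum_{p=1}^{m}c_p^{(m)}\,r^S_{(p)},\qquad c^{(m)}_p=\binom{m-1}{K-1}^{-1}\Bigl[\sum_{\ell<p}\binom{\ell-1}{K-2}-(m-p)\binom{p-1}{K-2}\Bigr],
\end{equation*}
where the bracketed sum collapses to $\binom{p-1}{K-1}$ by the hockey-stick identity. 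The weights $c^{(m)}_p$ for $m=B-1$ are computed once, in $O(B)$ time with $O(1)$ binomial ratio updates or log-gamma.

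For the L2O baseline, $b^{\mathrm{L2O}}_{-i}=\frac{1}{B-1}\sum_{j\ne i}s^{(-i)}_j$, and $s^{(-i)}_j$ is exactly $s^{S}_j$ with $S=\{1,\dots,B\}\setminus\{i\}$ and $m=B-1\ge K$. The requirement $m\ge K$ is guaranteed by $K\le B-1$. Therefore
\begin{equation*}
b^{\mathrm{L2O}}_{-i}=\frac{1}{B-1}\sum_{p=1}^{B-1}c^{(B-1)}_p\,r^{(-i)}_{(p)}.
\end{equation*}
The sorted vector with the $i$-th element removed is obtained from the global sort in $O(B)$. If $i$ has global rank $t$, then $r^{(-i)}_{(p)}=r_{(p)}$ for $p<t$ and $r^{(-i)}_{(p)}=r_{(p+1)}$ for $p\ge t$. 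Each $b_{-i}$ is then an $O(B)$ dot product, giving $O(B^2)$ in total.

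This already proves the theorem. If one wants to go further, prefix sums of $c_p r_{(p)}$ and $c_p r_{(p+1)}$ give $O(1)$ per $i$. The same order-statistic formula with $m=B$ also yields all $s_i$ in $O(B^2)$, or better.

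The main obstacle is correctness of the order-statistic probability under ties and the boundary cases. Ties are harmless because tied comparators contribute $(r_j-r_k)_+=0$ under either ordering, so any tie-breaking gives the same value. The formula also needs $K\ge2$. When $K=1$, $s^S_j=r_j$, since the maximum over the empty set is taken to contribute nothing, and the baseline is trivially a leave-one-out mean. I would verify the weight formula against the naive $O(B^3)$ definition on small cases.
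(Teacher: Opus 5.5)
Your proof is correct, and it reaches the bound by a different route from the paper's. The paper fixes $i$ and notes that the hypergeometric probability $W^{\mathrm{L2O}}_{i,l}=\binom{(l-1)-\mathbb{I}[l>i]}{K-2}/\binom{B-2}{K-1}$ for the comparator maximum does not depend on the target $j$. This holds as long as $l<j$, which are the only ranks that contribute. The paper then exchanges the sums over $j$ and $l$ to get $b^{\mathrm{L2O}}_{-i}=\sum_l W^{\mathrm{L2O}}_{i,l}\,\mathcal{M}_{i,l}$. Here $\mathcal{M}_{i,l}=(S_l-D_{i,l})/(B-1)$ are leave-one-out column means of the $B\times B$ ReLU-difference matrix, and the $O(B^2)$ cost comes from forming $D$ and taking a row-wise product with $W^{\mathrm{L2O}}$.

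You instead sum over $j$ first. You recognize $(B-1)\,b^{\mathrm{L2O}}_{-i}$ as the total EI of the reduced batch $\mathcal{B}\setminus\{i\}$, and use the hockey-stick identity to write it as a fixed linear combination of order statistics, $\sum_p c^{(B-1)}_p r^{(-i)}_{(p)}$, with $c^{(m)}_p=\binom{m-1}{K-1}^{-1}\bigl[\binom{p-1}{K-1}-(m-p)\binom{p-1}{K-2}\bigr]$. These weights do not depend on $i$; removing $i$ only shifts the sorted vector. The coefficients check out on small cases; for example, $K=2$, $m=3$ gives $c=(-1,0,1)$, which matches direct computation.

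What each route buys:
\begin{itemize}
\item Your route gives an explicit closed form and a strictly better complexity: $O(B)$ per index, or $O(B\log B)$ in total with prefix sums, without building a $B\times B$ matrix.
\item The paper's route mirrors its EI formula $\mathbf{s}=D\mathbf{w}$ and maps directly onto dense broadcasted GPU operations, which is the practical motivation the authors give.
\end{itemize}

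Your tie handling is fine, since a strict order consistent with the reward values makes the rank-probability formula exact. The $K=1$ aside is only a convention and falls outside the paper's standing assumption $2\le K\le B-1$, so it does not affect the argument.
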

The proof is provided in Appendix~\ref{app:computation}.
The resulting vectorized formulation maps directly to GPU-friendly dense primitives (e.g., broadcasted elementwise operations and matrix--vector products), making it straightforward to leverage GPU parallelism in practice.

\begin{algorithm}[t]
    \caption{Max@K Policy Optimization}
    \label{alg:maxpo}
    \begin{algorithmic}[1]
    \STATE \textbf{Input:} policy $\pi_\theta$, objective size $K$, number of questions per batch $M$, group size $G$ with $2 \le K \le G-1$.
    \WHILE{not converged}
        \STATE Sample questions $\{x_m\}_{m=1}^M$.
        \FOR{each question $x_m$}
            \STATE Sample $G$ outputs $\{a_j^m\}_{j=1}^G \sim \pi_\theta(\cdot \mid x_m)$ and evaluate rewards $\{r_j^m\}_{j=1}^G$.
            \STATE Compute EI scores $\mathbf{s}^m$ and L2O baselines $\mathbf{b}^{m,\mathrm{L2O}}$.
            \STATE Set sequence-level advantages $\mathbf{A}^m=\mathbf{s}^m-\mathbf{b}^{m,\mathrm{L2O}}$.
        \ENDFOR
        \STATE Update $\theta$ using a group-based policy-gradient objective with advantages $\{\mathbf{A}^m\}_{m=1}^M$.
    \ENDWHILE
    \end{algorithmic}
\end{algorithm}

\subsection{Integration with RL Algorithms for LLM Training}
\label{sec:method:integration}

We integrate L2O-based advantage estimation into group-based RL for LLM post-training as \textbf{MaxPO} (Max@K Policy Optimization).
Algorithm~\ref{alg:maxpo} summarizes the procedure.
For each question $x_m$, we sample $G$ outputs $\{a_j^m\}_{j=1}^G$ from $\pi_\theta(\cdot \mid x_m)$ and evaluate their sequence-level rewards $\{r_j^m\}_{j=1}^G$.
Within each question-specific group, $G$ plays the role of the batch size $B$ used in the EI and L2O computations.
We compute group-wise EI scores $\mathbf{s}^m$ and L2O baselines $\mathbf{b}^{m,\mathrm{L2O}}$, and define the sequence-level advantage $A_j^m = s_j^m - b_j^{m,\mathrm{L2O}}$.
Before clipping and regularization, the corresponding group-wise estimator is $\frac{K}{G} \sum_{j=1}^{G} \nabla_\theta \log \pi_\theta(a_j^m \mid x_m) A_j^m$, which is an unbiased policy-gradient estimator for the max@\textit{K} objective within the group.
MaxPO is applicable to any policy-gradient algorithm that accepts sequence-level advantages.
In our LLM experiments, we instantiate it with group-based training \citep{shao2024deepseekmath, liu2025understanding}.
Since verifier rewards are assigned at the sequence level, we broadcast $A_j^m$ to every token in output $a_j^m$ when forming the token-level loss.
The resulting objective is identical to standard group-based clipped policy optimization except for the advantage construction.
EI-only methods use the raw EI scores $\mathbf{s}^m$, whereas MaxPO subtracts the L2O baseline and uses the centered advantages $\mathbf{s}^m-\mathbf{b}^{m,\mathrm{L2O}}$.
We give the exact token-level clipped objective and KL regularization term in Appendix~\ref{app:algo:maxpo}.

%% file: sec/unified.tex
\section{Canonical Finite-Batch Advantage for Max@K Policy Gradient}
\label{sec:unified_view}

Sec.~\ref{sec:method} derived MaxPO by adding an L2O baseline to the EI estimator of \citet{walder2025pass}, yielding an unbiased policy-gradient estimator with a centered advantage for max@\textit{K}.
However, EI is only one possible starting point for constructing pass@\textit{K}/max@\textit{K} advantages.
Recent methods use different primitives, including analytical pass@\textit{K} signals, all-subsets reward transformations, and standard-deviation-normalized advantages \citep{chen2025pass, bagirov2025best, tang2025optimizing}.
These differences make direct comparison difficult: the estimators may center different vectors, subtract different baselines, or apply different batch-dependent normalizations.

This section provides a unified baseline view of these estimators.
We first propose a canonical finite-batch advantage form based on the leave-one-out principle \citep{pmlr-v80-parmas18a,mnih2016asynchronous}.
We then show that existing estimators can be understood as uncentered signals, fixed-scale versions of this canonical direction, or normalized variants of it.

\subsection{Canonical Finite-Batch Marginal Advantage}
\label{subsec:canonical_marginal_contribution}

For a fixed prompt, let $\mathcal{B}=\{1,\ldots,B\}$ denote the sampled responses and define $M(S):=\max_{j\in S}r_j$ for any subset $S\subseteq\mathcal{B}$.
We define
\begin{equation}
    u_i
    :=
    \frac{1}{\binom{B-1}{K-1}}
    \sum_{\substack{S\subseteq\mathcal{B}\setminus\{i\}\\ |S|=K-1}}
    M(S\cup\{i\}),
    \qquad
    v_i
    :=
    \frac{1}{\binom{B-1}{K}}
    \sum_{\substack{T\subseteq\mathcal{B}\setminus\{i\}\\ |T|=K}}
    M(T).
\label{eq:unified_ui_vi}
\end{equation}
Here, $u_i$ is the conditional expected max@\textit{K} value of a group \textit{containing} response $i$.
It is the max@\textit{K} analogue of the return of action $i$.
The quantity $v_i$ is the leave-one-out expected max@\textit{K} value computed \textit{without} response $i$.
It is the corresponding leave-one-out baseline.
The signal $u_i-v_i$ is therefore the finite-batch max@\textit{K} analogue of a return minus a leave-one-out baseline.
Because $v_i$ excludes response $i$, subtracting it preserves policy-gradient unbiasedness.
Its centering is less immediate than in the ordinary return-minus-mean case, but follows by a finite-subset counting argument.
Let $V$ range over all size-$K$ subsets of $\mathcal{B}$.
When we sum $u_i$ over $i$, each subset $V$ contributes $M(V)$ exactly $K$ times, once for each $i\in V$:
\begin{equation}
    \sum_{i=1}^B u_i
    =
    \frac{K}{\binom{B-1}{K-1}}
    \sum_{\substack{V\subseteq\mathcal{B}\\ |V|=K}} M(V).
\end{equation}
When we sum $v_i$ over $i$, the same subset $V$ contributes exactly $B-K$ times, once for each $i\notin V$:
\begin{equation}
    \sum_{i=1}^B v_i
    =
    \frac{B-K}{\binom{B-1}{K}}
    \sum_{\substack{V\subseteq\mathcal{B}\\ |V|=K}} M(V).
\end{equation}
Since
$K/\binom{B-1}{K-1}=(B-K)/\binom{B-1}{K}$,
the two sums are equal, and hence $\sum_i (u_i-v_i)=0$ for every realized batch.
Thus, $u_i-v_i$ is the canonical centered finite-batch advantage; the full proof and its equivalence to EI+L2O are given in Appendix~\ref{app:unified}.

\subsection{Existing Estimators as Baseline Choices}
\label{subsec:existing_estimators_unified}

\begin{table}[t]
    \centering
    \small
    \caption{Baseline view of pass@\textit{K}/max@\textit{K} advantages.
        \checkmark\ indicates methods used in the original papers' experiments (for MaxPO, this refers to our experiments). Fixed constants are shown when they clarify equivalence up to learning-rate scaling for fixed $B$ and $K$.
    The methods used in the experiments fall into two categories: (1) \textcolor{softorange}{Uncentered}, which does not center the signal, and (2) \textcolor{softblue}{(Normalized) Canonical}, which is canonical up to a fixed scale.
    }
        \label{tab:unified_baseline_view}
    \makebox[\linewidth][c]{    \begin{tabular}{clll}
    \toprule
    Used & Method & Mathematical signal & Relation to canonical \\
    \midrule
    \checkmark & \citet{walder2025pass}, \texttt{EI-only}
    & $s_i = u_i - w_i$
    & \textcolor{softorange}{Uncentered} \\
    \checkmark & MaxPO
    & $s_i - b_i^{\mathrm{L2O}} = u_i - v_i$
    & \textcolor{softblue}{Canonical}. \\
    \checkmark & \citet{chen2025pass}, Eq. (14,15)
    & $\frac{B-K}{B}(u_i-v_i)/\sigma_{\mathrm{group}}$
    & \textcolor{softblue}{Normalized}. For pass@\textit{K} \\
    & \citet{chen2025pass}, w.o. std.
    & $\frac{B-K}{B}(u_i-v_i)$
    & Proportional. For pass@\textit{K} \\
    & \citet{bagirov2025best}, Eq. (9)
    & $\tilde r_i = \frac{K}{B}u_i$
    & Proportional to $u_i$ \\
    & \citet{bagirov2025best}, mean-centered
    & $\tilde r_i - \bar{\tilde r} = \frac{K(B-K)}{B^2}(u_i-v_i)$
    & Proportional \\
        \checkmark & \citet{bagirov2025best}, BoN mean, Appendix
    & $\frac{K(B-K)}{B^2}(u_i-v_i)/\mathrm{std}(\tilde r)$
    & \textcolor{softblue}{Normalized} \\
    \bottomrule
    \end{tabular}    }
\end{table}

Having identified the canonical coefficient, we can now relate the existing estimators to the canonical advantage signal.

\paragraph{EI-based advantage estimators.}
To connect EI with the marginal view, define $w_i := \frac{1}{\binom{B-1}{K-1}} \sum_{\substack{S\subseteq\mathcal{B}\setminus\{i\}\\ |S|=K-1}} M(S)$
the expected maximum of the $K-1$ comparator responses excluding $i$.
The EI-only estimator can then be written as $s_i=u_i-w_i$, because
$M(S\cup\{i\})=M(S)+(r_i-M(S))_+$.
Thus, EI subtracts the comparator-max baseline $w_i$ rather than the leave-one-out max@\textit{K} baseline $v_i$.
This is sufficient for policy-gradient unbiasedness, because $w_i$ excludes response $i$, but it is not the canonical centered advantage.
In general, $u_i-w_i$ is nonnegative and non-centered.

MaxPO corrects this difference via the L2O baseline.
Indeed, we can show
\begin{proposition}\label{prop:unified:ei_l2o_equivalence}
        Given a batch $\mathcal{B}$ and a subset size $K$, the L2O baseline $b_i^{\mathrm{L2O}}$ is equal to $v_i - w_i$ and therefore satisfies
    \begin{equation}
        s_i - b_i^{\mathrm{L2O}} = u_i - v_i,
    \end{equation}
    for any $i \in \mathcal{B}$.
\end{proposition}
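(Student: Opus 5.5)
Since $s_i=u_i-w_i$ was already established from the identity $M(S\cup\{i\})=M(S)+(r_i-M(S))_+$, it suffices to prove $b_{-i}^{\mathrm{L2O}}=v_i-w_i$. The second claim then follows by subtraction: $s_i-b_{-i}^{\mathrm{L2O}}=(u_i-w_i)-(v_i-w_i)=u_i-v_i$. The plan is to recognise the L2O baseline as the EI-only construction applied to the reduced batch $\mathcal{B}\setminus\{i\}$ of size $B-1$, and to sum the per-element EI decomposition over that reduced batch.

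Fix $i$ and write $\mathcal{B}_{-i}:=\mathcal{B}\setminus\{i\}$. For $j\in\mathcal{B}_{-i}$, the quantity $s_j^{(-i)}$ in Eq.~\eqref{eq:l2o_baseline} is the average of $(r_j-M(S))_+$ over size-$(K-1)$ subsets $S\subseteq\mathcal{B}_{-i}\setminus\{j\}$. Using the same ReLU identity, I would write $s_j^{(-i)}=u_j^{(-i)}-w_j^{(-i)}$. Here $u_j^{(-i)}$ is the average of $M(S\cup\{j\})$ over those $S$, and $w_j^{(-i)}$ is the average of $M(S)$; both normalise by $\binom{B-2}{K-1}$. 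The baseline is then
\begin{equation*}
b_{-i}^{\mathrm{L2O}}=\frac{1}{B-1}\sum_{j\in\mathcal{B}_{-i}}\bigl(u_j^{(-i)}-w_j^{(-i)}\bigr).
\end{equation*}

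Next I would evaluate the two sums by double counting over subsets of $\mathcal{B}_{-i}$, as in the centering argument of Sec.~\ref{subsec:canonical_marginal_contribution}.

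\emph{First sum.} Each $K$-subset $T\subseteq\mathcal{B}_{-i}$ appears as $S\cup\{j\}$ exactly $K$ times, once for each $j\in T$. Hence
\begin{equation*}
\frac{1}{B-1}\sum_j u_j^{(-i)}=\frac{K}{(B-1)\binom{B-2}{K-1}}\sum_{T}M(T)=\frac{1}{\binom{B-1}{K}}\sum_T M(T)=v_i,
\end{equation*}
using $(B-1)\binom{B-2}{K-1}=K\binom{B-1}{K}$.

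\emph{Second sum.} Each $(K-1)$-subset $S\subseteq\mathcal{B}_{-i}$ is counted once for each $j\in\mathcal{B}_{-i}\setminus S$, that is, $B-K$ times. Hence
\begin{equation*}
\frac{1}{B-1}\sum_j w_j^{(-i)}=\frac{B-K}{(B-1)\binom{B-2}{K-1}}\sum_S M(S).
\end{equation*}
The identity $(B-1)\binom{B-2}{K-1}=(B-K)\binom{B-1}{K-1}$ reduces this to $w_i$.

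The only step needing care is this binomial bookkeeping. It requires checking both identities and confirming that every subset involved lies inside $\mathcal{B}_{-i}$, so that the normalisations of $v_i$ and $w_i$ are recovered exactly. These facts hold under the standing assumption $K\le B-1$, which guarantees that all the subset families are nonempty.
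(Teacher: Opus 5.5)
Your proposal is correct and follows essentially the same route as the paper's proof: expand each leave-two-out EI term as an average of $M(S\cup\{j\})-M(S)$, then double-count within $\mathcal{B}\setminus\{i\}$. Each $K$-subset appears $K$ times, which gives $v_i$, and each $(K-1)$-subset appears $B-K$ times, which gives $w_i$; your two binomial identities are exactly the ones needed to recover those normalisations.
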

The proof is provided in Appendix~\ref{app:unified}.
This equality shows that L2O rectifies the EI-based advantage estimator to the canonical centered advantage.

\paragraph{Generality of L2O.}
This cancellation is not specific to choosing EI as the starting signal.
Suppose the starting signal for response $i$ is an exact average over groups containing $i$ of $M(S\cup\{i\})-C(S)$, where $C(S)$ is any comparator-only baseline depending on the other $K-1$ responses $S$ but not on response $i$.
Then applying the same L2O construction cancels the averaged $C(S)$ term and again yields $u_i-v_i$.
Thus, L2O maps any exact comparator-only baseline of this form to the canonical centered finite-batch advantage; Appendix~\ref{app:unified:arbitrary_comparator_baselines} gives the proof.

\paragraph{Non-EI estimators.}
The same view clarifies how non-EI estimators \citep{chen2025pass, bagirov2025best} relate to MaxPO.
Table~\ref{tab:unified_baseline_view} summarizes how existing estimators defined with $K\leq B$ relate to the canonical direction.
The table suggests that the estimators used in their experiments \citep{walder2025pass, chen2025pass, bagirov2025best} fall into two classes: \emph{uncentered} and \emph{(normalized) canonical}.
The first class is \emph{uncentered}: PKPO/EI-only is policy-gradient unbiased but uses the non-centered signal $u_i-w_i$.
The second class is \emph{(normalized) canonical}, or canonical up to a fixed scale: MaxPO exactly recovers $u_i-v_i$, while raw analytical pass@\textit{K} and mean-centered all-subsets estimators recover the same direction up to fixed constants.
Thus, for the analytical signal of Chen et al.\ and the mean-centered all-subsets signal of Bagirov et al., what is biased relative to MaxPO is the magnitude, not the direction; in the standard-deviation-normalized versions used in their experiments, this fixed positive scale is normalized away or absorbed into the effective step size, so it does not change the normalized update direction.
This view reveals that although the derivations are different, some estimators are equivalent to MaxPO up to a fixed scale.

%% file: sec/unified_experiment.tex
\section{Experiments}\label{sec:unified_experiment}\label{sec:toy_experiment}

In Sec.~\ref{sec:toy_experiment:bandits}, we validate the theoretical properties of the L2O baseline in bandits.
In Sec.~\ref{sec:reasoning_experiments}, we validate the efficacy of MaxPO on reasoning tasks.

\subsection{Bandits}\label{sec:toy_experiment:bandits}

\begin{figure}[t]
    \vspace{-1.0em}
    \centering
    \includegraphics[width=0.9\textwidth]{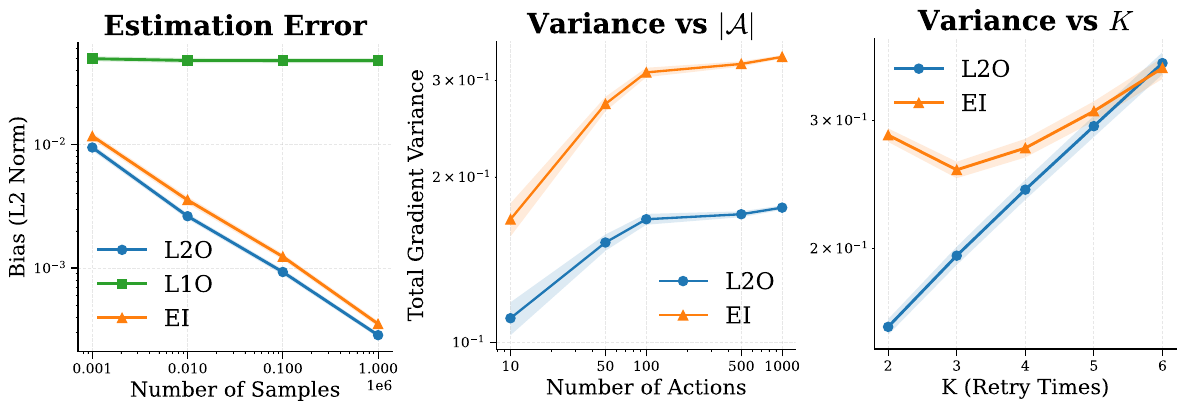}
    \caption{
        Estimation error (left), variance vs.\ action space size (center), and variance vs.\ $K$ (right).
        Mean and standard error over 100 random seeds.
    }
    \label{fig:bandit_bias_variance}
\end{figure}

Here, we validate the two theoretical properties of the L2O baseline: (1) unbiasedness as a PG estimator and (2) variance reduction over the raw EI estimator due to centering the advantage.

\paragraph{Setting.}
To emulate a discrete-action setting such as LLM training, we consider a multi-armed bandit with rewards $\bfr \in \R^{|\cA|}$ and logits $\bfl \in \R^{|\cA|}$ each sampled from $\cN(0, 1)$.
We sample a batch of $B$ actions $a_i \sim \mathrm{softmax}(\bfl)$, observe rewards $r_i$, and construct three PG estimators: (1) EI: $s_i$, (2) EI+L2O: $s_i - b^{\mathrm{L2O}}_{-i}$, and (3) EI+L1O: $s_i - b^{\mathrm{L1O}}_{-i}$, as defined in Sec.~\ref{sec:method}.
The ground-truth gradient $g_{\text{true}}$ is computed analytically via the closed-form derivative of the expected improvement (Proposition~1 of \citet{nishimori2026emergence}).
For the bias plot, we vary the number of batches $N \in \{10^3, 10^4, 10^5, 10^6\}$ and measure the estimation error $\|\frac{1}{N}\sum_{j=1}^N \hat{g}_j - g_{\text{true}}\|$ (fixing $B=8$, $K=2$).
For variance, we report the empirical total variance $\frac{1}{N}\sum_{j=1}^N \|\hat{g}_j - \bar g\|^2$ with $\bar g = \frac{1}{N}\sum_{j=1}^N \hat{g}_j$ and $N=10^5$, sweeping (i) the action space size $|\cA|\in\{10, 50, 100, 1000\}$ at $K=2, B=8$, and (ii) the comparator size $K \in \{2,\dots,6\}$ at $|\cA|=100, B=8$.
Detailed protocols and results for additional batch sizes are reported in Appendix~\ref{app:toy_experiments:bandits}.
\paragraph{Results.}
The results in Fig.~\ref{fig:bandit_bias_variance} validate our theoretical claims.
First, the estimation errors of EI+L2O and EI both decrease at the theoretical $\cO(1/\sqrt{N})$ rate (left), confirming that they are unbiased PG estimators, whereas EI+L1O exhibits an approximately constant error, confirming that L1O yields a biased PG.
Regarding variance, the variance-reduction effect of L2O over raw EI grows with the action space size (center), a regime particularly relevant for LLMs.
Furthermore, L2O is most effective for moderate $K$ relative to $B=8$ (right): the L2O baseline must form $K-1$ comparators from $B-2$ samples to preserve unbiasedness, so a too-small comparator budget yields a noisy baseline and diminishes the variance-reduction effect.

\subsection{LLM Reasoning Experiments}
\label{sec:reasoning_experiments}

We evaluate the efficacy of our variance-reduced objective on challenging math reasoning tasks.
Our experiments address two questions:
(i) whether our estimator improves pass@\textit{k} on standard reasoning benchmarks, and
(ii) whether it stabilizes RL optimization by reducing the variance of the gradient estimator during training.

\begin{figure}[t]
  \centering
  \begin{subfigure}[t]{0.45\linewidth}
    \centering
    \includegraphics[width=\linewidth]{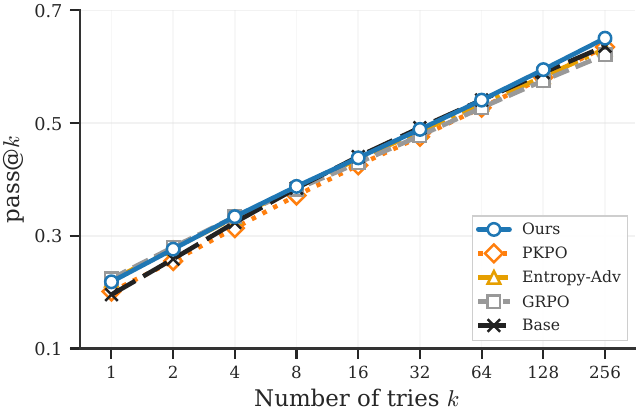}
                                    \caption{Llama-3.2-3B-Instruct}
    \label{fig:passk_avg_256_llama}
  \end{subfigure}\hfill
  \begin{subfigure}[t]{0.45\linewidth}
    \centering
    \includegraphics[width=\linewidth]{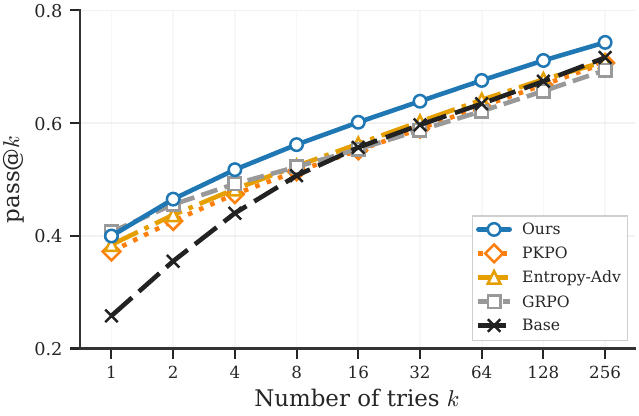}
                                    \caption{Qwen2.5-Math-7B}
    \label{fig:passk_avg_256_qwen}
  \end{subfigure}

  \caption{Task-average pass@k ($k \le 256$).
  Unweighted average over AIME24, AIME25, AMC23, MATH500, and Minerva (temperature 0.6, top-p 0.95).
  Our method demonstrates consistent improvement over strong baselines. }
  \label{fig:passk_avg_256}
\end{figure}

\subsubsection{Experimental Setting}
\label{sec:reasoning_setup}

\paragraph{Training.}
We perform RL fine-tuning on Llama-3.2-3B-Instruct~\citep{grattafiori2024llama3} and Qwen2.5-Math-7B~\citep{yang2025qwen25math}.
We compare our method (EI+L2O) against three baselines:
GRPO~\citep{shao2024deepseekmath},
Entropy-Adv~\citep{cheng2025reasoning},
and PKPO~\citep{walder2025pass}.
Our method follows the algorithm described in Sec.~\ref{sec:method:integration}, and we implement the LLM RL training pipeline using the \texttt{verl} framework~\citep{sheng2024hybridflow}.
We include PKPO as the closest prior method that directly optimizes pass@\textit{K}/max@\textit{K} using an unbiased expected-improvement (EI) policy-gradient estimator; in our notation, PKPO can be viewed as the EI-only variant, so this comparison isolates the benefit of our centered advantage estimation.
Note that the concurrent work of \citet{bagirov2025best} and prior work of \citet{chen2025pass} are equivalent to MaxPO up to a
constant rescaling factor and standard deviation normalization (which removes any effect from rescaling). Therefore, these works
also provide experimental evidence for the overall efficacy of the approach. Our contribution relative to the literature is
the canonical $u_i-v_i$ unbiased form of the gradient estimator, thus our experiments focus on how this form improves over the non-centered PKPO variant.
Unless otherwise stated, we use a fixed training objective size $K=2$.
Training data and hyperparameters are provided in App.~\ref{app:llm_experiments:setting}.
Throughout this section, $K$ refers to the training objective size (max@\textit{K}/pass@\textit{K} objective), while $k$ denotes the evaluation compute in pass@\textit{k}.

\paragraph{Evaluation.}
We evaluate on five math reasoning benchmarks: AIME24, AIME25, AMC23, MATH500~\citep{hendrycks2021math}, and Minerva~\citep{lewkowycz2022solving}.
All evaluations use nucleus sampling with temperature 0.6 and top-p 0.95.
To reduce evaluation variance, we generate $n=1024$ samples for every benchmark.
We report the unbiased pass@\textit{k}~\citep{chen2021evaluating} metric for $k \in \{1,2,4,8,\ldots\}$, computed as
\begin{equation}
\text{pass@}k \;:=\; \mathbb{E}_{x\sim \mathcal{D}}\!\left[\,1 - \frac{\binom{n-c}{k}}{\binom{n}{k}}\,\right],
\label{eq:passk}
\end{equation}
where $n$ is the number of sampled completions and $c$ is the number of correct completions among them.
For consistent task-averaging in the main text, we report results up to $k\le256$ for all benchmarks,
and additionally report $k\le1024$ for AIME24, AIME25, and AMC23 in Appendix~\ref{app:llm_experiments:additional_results}.

\subsubsection{Main Results}
\label{sec:reasoning_main_results}

Figure~\ref{fig:passk_avg_256} shows the task-average pass@\textit{k} curves up to $k=256$ for both Llama and Qwen.
The main pattern is that EI+L2O becomes strongest as inference compute increases: although GRPO is competitive at very small $k$, our method overtakes the baselines at moderate-to-large $k$, which is the regime most relevant to pass@K optimization.
Compared to PKPO, our method improves task-average pass@256 by 5.2\% on Qwen2.5-Math-7B and by 2.4\% on Llama-3.2-3B-Instruct relative to PKPO across five math reasoning benchmarks.
These gains persist over a wide range of $k$, supporting the claim that centering the advantage helps the policy make better use of additional test-time samples.
We provide extended results up to $k=1024$ for AIME24, AIME25, and AMC23 in Appendix~\ref{app:llm_experiments:additional_results}.
For an ablation over different $K$ values, refer to Appendix~\ref{app:ablation_k}.

\subsubsection{Variance Reduction During Training}
\label{sec:variance_reduction_training}

\begin{wrapfigure}{r}{0.4\textwidth}
  \vspace{-1.0em}
  \centering
  \includegraphics[width=0.4\textwidth]{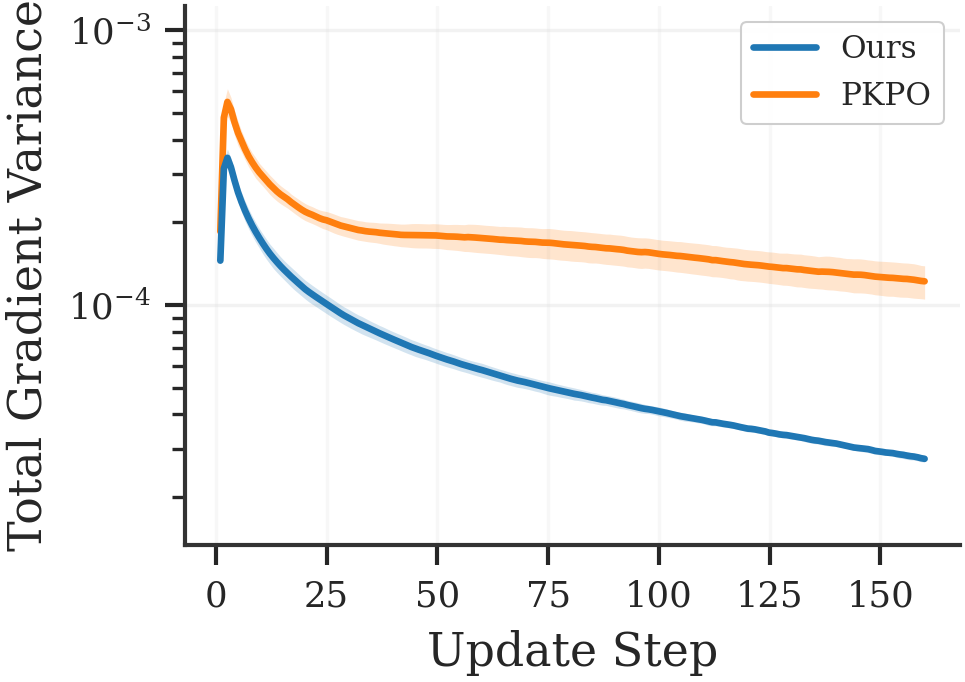}
    \caption{Adam-moment variance proxy during training (3 seeds).
  The proxy is estimated from Adam states via $\mathrm{Var}(g)\approx \hat v_t-\hat m_t^{\,2}$ and aggregated across parameters.
  Our method (EI+L2O) reduces this variance proxy compared to PKPO across training, supporting the proposed variance-reduction mechanism.}
  \label{fig:grad_var_3seeds}
  \vspace{-1.0em}
\end{wrapfigure}

To test our theoretical motivation, we measure a smoothed proxy for the variance of model gradients during RL training on Llama-3.2-3B-Instruct.
Directly computing gradient variance is expensive at LLM scale, so we estimate this proxy from the states of the Adam optimizer~\citep{kingma2017adam}.
Specifically, Adam maintains exponential moving averages of the first and second moments of gradients; after bias correction, we estimate a smoothed proxy for the element-wise variance as
$\mathrm{Var}(g)\approx \hat v_t - \hat m_t^{\,2}$,
and aggregate it across all parameters by summing.
This is not the exact instantaneous gradient variance, but it is practical at LLM scale and computed identically for all methods, making it suitable for comparison.

Figure~\ref{fig:grad_var_3seeds} reports mean $\pm$ standard deviation over three random seeds.
Our method yields consistently lower values of this Adam-moment variance proxy than PKPO throughout training, indicating more stable policy gradient updates.
Notably, at the end of training, our method achieved a 77.4\% relative reduction in this proxy compared to PKPO.
This provides empirical evidence that centering the EI signal with L2O reduces estimator variance in practice, matching the paper's main theoretical claim.

%% file: sec/conclusion.tex
\section{Conclusion}\label{sec:conclusion}
In this paper, we revisit pass@\textit{K} and max@\textit{K} policy optimization through the lens of advantage estimation, with a focus on unbiased advantage estimation.
We first identified that the advantage estimator of \citet{walder2025pass}, a leading method in the field, is unbiased as a PG estimator but not centered as an advantage estimator.
We then introduced a Leave-Two-Out (L2O) baseline that preserves policy-gradient unbiasedness while making the realized batch advantages exactly centered.
We also derived the canonical finite-batch advantage form that enables us to organize and relate existing estimators in a unified manner, classifying them into uncentered, canonical, and normalized canonical classes.
Finally, we empirically verified that the L2O baseline reduces gradient variance and outperforms non-centered alternatives.

\paragraph{Limitations and Future Work.}
Our analysis focuses on finite-batch, on-policy estimation under i.i.d. sampling within each group.
Extending the same canonical view to off-policy training, adaptive sampling, and settings with correlated generations is an important direction for future work.
Our L2O construction also requires $K \le B - 1$, which may limit its direct use when the optimization objective size approaches the group size.
In addition, our LLM experiments focus on mathematical reasoning tasks with verifiable sequence-level rewards.
Further experiments on broader domains, noisy or partial rewards, larger models, and different verifier designs are needed to understand the generality of the method.

\paragraph{Broader Impacts.}
Max@\textit{K} and pass@\textit{K} are increasingly important metrics for reasoning models because they capture the quality of a model under repeated sampling, which is a common inference-time protocol.
By identifying a canonical finite-batch advantage form for max@\textit{K}, this work provides a principled foundation for designing, comparing, and diagnosing policy-gradient estimators for such inference-time objectives.
This may help improve the efficiency and stability of post-training methods and reduce unnecessary variance in optimization.
At the same time, better optimization of repeated-sampling reasoning objectives may strengthen models that can be misused for large-scale cheating, deceptive assistance, or other forms of harmful automated problem solving.

\section*{Author Contributions}

{\bf Shota Takashiro}\textsuperscript{*}: Equal first author. LLM experiment lead, paper writing (particularly LLM sections) \\[0.25em]
{\bf Soichiro Nishimori}\textsuperscript{*}: Equal first author. Lead writer, toy experiments (bandit, maze) \\[0.25em]
{\bf Paavo Parmas}\textsuperscript{*}: Equal first author. Conceptualization, all theoretical derivations, proposed and oversaw project, significant comments and editing on the paper, example code. \\[0.25em]
Yongmin Kim: Helped on the LLM engineering side, L1O implementation, tests. \\[0.25em]
Kohsei Matsutani: Helped with the LLM experiments, dataset survey, evaluation results, contributed to LLM related work sections.\\[0.25em]
Gouki Minegishi: Entropy bonus baseline implementation, gradient variance estimation from Adam implementation. \\[0.25em]
Yusuke Iwasawa: Funding acquisition, overall management and student supervision in the lab. \\[0.25em]
Takeshi Kojima: Comments and writing, particularly from the LLM perspective. \\[0.25em]
Yutaka Matsuo: Funding acquisition, overall management and student supervision in the lab.

\section*{Acknowledgements}
Paavo Parmas was supported by JST ACT-X, Japan, Grant Number JPMJAX23CO. This work was supported by the UTokyo-Google AI Symbiotic Future Society Program.

%% file: sec/app/additional_related_work.tex
\section{Additional Related Work}\label{app:additional_related_works}
\subsection{RLVR in LLMs}\label{sec:related_work:rlvr}
Reinforcement learning with verifiable rewards (RLVR) \citep{lambert2025tulu,guo2025deepseek} is a training paradigm that optimizes LLM policies in domains such as math and code using deterministic feedback from objective verifiers to ground model reasoning in provable correctness rather than subjective human preference \citep{paul2017deep}. Despite this promise \citep{guo2025deepseek,jaech2024openai}, recent studies suggest that RLVR in LLMs primarily serves to amplify behaviors that are already present in the base model \citep{liu2025understanding,zhao2025echochamber,ai2025rethinking}. \citet{yue2025does} investigated the pass@K metric \citep{chen2021evaluating,song2025mind,dang2025weight,wen2025reinforcement,wu2025invisible}, which measures the probability that at least one correct solution is obtained when drawing $K$ independent samples (i.e., best-of-$K$), and found experimentally that as $K$ increases, the base model's pass@K eventually exceeds that of the RLVR-trained model. This phenomenon has been linked to diversity collapse \citep{dang2025weight,cui2025entropy} and squeezing reasoning paths \citep{matsutani2025rl,hu2025how,bu2025consistency}. Several studies \citep{wang2025octothinker,zhang2025learning,chen2025the,zhang2025interplay} argue that mid-training is important. Consistent with this view, empirical results suggest that RLVR underperforms on Llama models \citep{grattafiori2024llama3} relative to Qwen models \citep{yang2025qwen25,yang2025qwen3}.

\subsection{Exploration in RL for LLMs}\label{sec:related_work:exploration}
In light of these limitations, prior studies have incorporated exploration into RLVR. \citet{cui2025entropy,cheng2025reasoning,zheng2025first,shen2025entropy,jiang2025rethinking} leveraged entropy bonuses to encourage exploration via policy uncertainty. \citet{song2025outcome} proposed outcome-based exploration using UCB-style bonuses \citep{auer2002finite}. \citet{yu2025restrain} applied self-penalization by assigning negative rewards to high-confidence answers that deviate from the majority consensus. \citet{he2025rewarding} improved exploration by up-weighting low-probability but correct trajectories, and \citet{gao2025navigate} adopted Random Network Distillation (RND) \citep{burda2018exploration} to provide bonuses for unknown trajectories. \citet{zhou2025evolving} augmented training with a semantic novelty score computed from embeddings, \citet{li2025jointly} employed a semantic diversity score with an external semantic comparator, and \citet{tuyls2025representation} computed a representation-based novelty score from hidden states to boost exploration. \citet{liang2025can} leveraged reward-model gradients to improve temperature sampling. \citet{setlur2025e3} promoted in-context exploration via skill asymmetries and negative gradients, enabling reliable extrapolation with increased test-time compute.

Other studies directly optimize the pass@K metric; these are discussed in App.~\ref{sec:related_work:pass@K}.

\subsection{Policy Gradient Estimator and Baseline}
Policy gradients are a central approach for optimizing policies in RL \citep{Williams1992-rp,greensmith2004variance,parmas2018total,parmas2021unified}.
The high variance of gradient estimates is a fundamental challenge in policy gradient methods, often hindering stable convergence \citep{Williams1992-rp}.
To mitigate this, the method of control variates—subtracting a baseline $b(x)$ from the return—is the standard variance reduction technique.
Theoretically, the optimal baseline depends on the norm of the score function and the specific reward structure \citep{peters2008reinforcement, weaver2013optimal}.
\citet{greensmith2004variance} provided a comprehensive analysis, demonstrating that while the exact optimal baseline is computationally expensive, a baseline that approximates the expected return—thereby making the advantage approximately zero-mean—captures the majority of the variance reduction benefits.
This insight justifies the widespread adoption of value function baselines in modern algorithms like PPO \citep{schulman2017proximal} and A3C \citep{mnih2016asynchronous}.

In domains involving discrete latent variables or sequence generation, where learning a separate value function is often unstable or costly, \textit{sample-based baselines} have become the dominant approach.
This concept was further refined in the context of variational inference by \citet{mnih2016variational} (VIMCO) and \citet{gu2015muprop}, which utilize the average reward of other samples in the batch (Leave-One-Out; L1O) to construct a low-variance gradient estimator for discrete variables.
Specifically, \citet{tucker2017rebar} highlighted that such control variates are essential for training effective estimators in high-dimensional discrete spaces.

Recently, these multi-sample baseline techniques have been adapted for reasoning tasks in LLMs.
Group Relative Policy Optimization (GRPO) \citep{shao2024deepseekmath} applies group-based normalization, effectively an L1O baseline, to stabilize training without a critic network.
However, these standard L1O techniques rely on the linearity of the expectation operator.
\citet{wu2018variance} discussed the complexities of action-dependent baselines, but the specific challenges of the \textit{max@K} objective remain underexplored.
In the max@K setting, the non-linear dependency between samples introduced by the max operator renders standard L1O baselines biased, necessitating the development of our strictly unbiased Leave-Two-Out (L2O) approach.

\subsection{Pass@K Policy Optimization}\label{sec:related_work:pass@K}
Recent advancements in reasoning tasks have increasingly focused on directly optimizing the \textit{pass@K} metric, an evaluation criterion where $K$ independent samples are drawn from the model, and success is defined by at least one sample being correct \citep{tang2025optimizing, walder2025pass, peng2025simko, bagirov2025best, chen2025pass}. The primary objective of this approach is to maintain model diversity, thereby encouraging exploration to discover correct solutions.
\citet{tang2025optimizing} first proposed optimizing the pass@K objective using RL, while \citet{chen2025pass} provided an empirical analysis of its effects on model behavior, particularly regarding entropy and generation diversity.
Most relevant to our work is \citet{walder2025pass}, who generalize the pass@K objective to continuous rewards (denoted as \textit{max@K}) and propose multiple estimators for this objective.
However, we observe that the estimators proposed by \citet{walder2025pass} can lead to overestimated advantage terms.
To address this, we re-examine the max@K objective and propose a novel estimator designed to ensure the advantage function has an expected value of zero, thereby stabilizing optimization.

Prior to the development of pass@K policy optimization, the RL community has explored optimizing expected rewards across multiple trials \citep{koyamada2023emergence}.
\citet{koyamada2023emergence} formally introduced the \textit{ReMax} objective to maximize the max@K outcome, originally optimizing it within a resettable simulator.
Subsequently, \citet{nishimori2026emergence} extended this framework by deriving a policy gradient estimator.
We are inspired by their expected-improvement form of the policy-gradient estimator, which enables us to analyze the advantage term of the max@K policy gradient.
However, a direct application is infeasible in the language model setting, where computing rewards for the entire action space is intractable.
Therefore, we propose a practical estimator that approximates the EI using only a limited batch of samples.

%% file: sec/app/code.tex
\newpage
\section{Code to Compute the Estimators}\label{app:code}

Below, we provide code to compute the statistics proposed in Sec.~\ref{sec:method}, namely the EI estimator $s_i$ and the L2O baseline $b_{-i}^{\mathrm{L2O}}$.

\begin{lstlisting}[caption=Code to compute the statistics proposed in Sec. \ref{sec:method}, label={lst:code:statistics}]
import jax.numpy as jnp
from scipy.special import gammaln

def comb(n, k):
    """Computes binomial coefficient C(n, k) in log-space."""
    return jnp.exp(gammaln(n + 1) - gammaln(k + 1) - gammaln(n - k + 1))

def compute_batch_ei(returns: jnp.ndarray, K: int) -> jnp.ndarray:
    """
    Computes the unbiased EI estimator s_i using matrix operations.
        Corresponds to Theorem E.1.
    """
    B = returns.shape[0]
    # Sort rewards: r_{(1)} <= ... <= r_{(B)}
    order = jnp.argsort(returns)
    r_sorted = returns[order]

    # Compute Difference Matrix D_{i,j} = (r_{(i)} - r_{(j)})_+
    D = jnp.maximum(r_sorted[:, None] - r_sorted[None, :], 0.0)

        # Compute Weight Vector w_j (Theorem E.1)
    # w_j = C(j-1, K-2) / C(B-1, K-1)
    ranks = jnp.arange(B)
    w = comb(ranks, K - 2) / comb(B - 1, K - 1)
    w = jnp.nan_to_num(w) # Handle cases where rank < K-2
    # s = D @ w
    s_sorted = D @ w
    # Restore original order
    return s_sorted[jnp.argsort(order)]

def compute_l2o_baseline(returns: jnp.ndarray, K: int) -> jnp.ndarray:
    """
        Computes the L2O baseline b_{-i}^{L2O} (Theorem E.2) in O(B^2).
    """
    B = returns.shape[0]
    order = jnp.argsort(returns)
    r_sorted = returns[order]

    # Difference Matrix D and Column Sums S_l
    D = jnp.maximum(r_sorted[:, None] - r_sorted[None, :], 0.0)
    S = jnp.sum(D, axis=0) # S_l = sum_k D_{k,l}

        # LOO Column Mean Matrix M_{i,l} (Theorem E.2)
    # M_{i,l} = (S_l - D_{i,l}) / (B - 1)
    M = (S[None, :] - D) / (B - 1)

    # L2O Weight Matrix W^{L2O}_{i,l}
    # Rank adjustment: (l-1) - 1 if l > i else (l-1)
    i_idx = jnp.arange(B)[:, None]
    l_idx = jnp.arange(B)[None, :]
    adjusted_rank = l_idx - (l_idx > i_idx).astype(jnp.float32)

    W = comb(adjusted_rank, K - 2) / comb(B - 2, K - 1)
    W = jnp.nan_to_num(W)
    W = W * (1.0 - jnp.eye(B)) # Set diagonal (l=i) to 0

    # b_{-i}^{L2O} = sum_l M_{i,l} * W_{i,l}
    b_sorted = jnp.sum(M * W, axis=1)
    return b_sorted[jnp.argsort(order)]
  \end{lstlisting}

%% file: sec/app/algo.tex
\begin{algorithm}[t]
    \caption{Max@K Policy Optimization}
    \label{alg:grpo_ei_l2o}
    \begin{algorithmic}[1]
    \STATE \textbf{Input:} Policy $\pi_\theta$, objective size $K$, the number of questions per batch $M$, group size $G$ with $2 \le K \le G-1$.
    \WHILE{not converged}
    \STATE Sample $M$ questions $\{x_m\}_{m=1}^M$.
    \FOR{each question $x_m$}
    \STATE Sample $G$ outputs $\{a_j\}_{j=1}^G \sim \pi_\theta(\cdot \mid x_m)$ and evaluate rewards $\{r_j\}_{j=1}^G$.
    \STATE Compute EI scores $\mathbf{s}$ (Theorem~\ref{thm:batch_ei}) and the L2O baseline $\mathbf{b}^{\mathrm{L2O}}$ (Theorem~\ref{thm:l2o_baseline}).
    \STATE Compute advantages: $\mathbf{Adv} = \mathbf{s} - \mathbf{b}^{\mathrm{L2O}}$.
    \ENDFOR
    \STATE Update $\theta$ by maximizing Eq.~\eqref{eq:grpo_ei_l2o_loss} with $\mathbf{Adv}$.
    \ENDWHILE
    \end{algorithmic}
\end{algorithm}

\section{Details of Max@K Policy Optimization}\label{app:algo:maxpo}
We integrate our method into modern RL algorithms for LLM post-training.
Algorithm~\ref{alg:grpo_ei_l2o} summarizes the overall procedure when combining EI + L2O with group-based RL, which we call \textbf{Max}@K \textbf{P}olicy \textbf{O}ptimization (\textbf{MaxPO}).
In Sec.~\ref{sec:method:ei} and Appendix~\ref{app:computation}, $B$ denotes the number of samples used to compute EI and L2O for a single input.
In LLM training, this computation is applied independently within each question-specific group.
Concretely, for each question $x_m$, we sample $G$ outputs $\{a_j^m\}_{j=1}^G$ from $\pi_\theta(\cdot \mid x_m)$ and evaluate their sequence-level rewards $\{r_j^m\}_{j=1}^G$.
Within such a group, $G$ plays the role of $B$ from Sec.~\ref{sec:method:ei} and Appendix~\ref{app:computation}.
From these rewards, we compute group-wise EI scores $\mathbf{s}^m$ and L2O baselines $\mathbf{b}^{m,\mathrm{L2O}}$.
Accordingly, for each question-specific group, the estimator $\frac{K}{G} \sum_{j=1}^G \nabla_{\theta} \log \pi_\theta(a_j^m \mid x_m)\bigl(s_j^m - b_{-j}^{m,\mathrm{L2O}}\bigr)$ is unbiased as a PG estimator.
Thus, our method acts as a reward transformation from raw sequence rewards to centered sequence-level advantages $(s_j^m - b_{-j}^{m,\mathrm{L2O}})_{j=1}^G$, which optimize the max@K objective (Eq.~\eqref{eq:max@K}).

While applicable to any PG-based algorithm, we instantiate our method using group-based RL algorithms such as Group Relative Policy Optimization (GRPO) \citep{shao2024deepseekmath} and Dr.GRPO \citep{liu2025understanding} for their success in reasoning tasks.
An optimization batch contains $M$ questions, yielding question-specific groups $(x_m, a^m_{1:G}, r^m_{1:G})_{m=1}^M$.
Each output text consists of a sequence of tokens.
Because rewards in reasoning tasks are typically assigned at the sequence level, we first compute a sequence-level advantage $A^m_j = s^m_j - b_{-j}^{m,\mathrm{L2O}}$ for each output text.
We then broadcast this same scalar to all token positions when forming the token-level loss, i.e., $A^m_{j,i} = A^m_j$ for every token position $i$ in output $a^m_j$.
In practice, this corresponds to assigning the same reward to all tokens in an output text, $r^m_{j,i} = r^m_j$.
Given a reference policy $\pi_{\text{ref}}$, we optimize the following loss:
\begin{align}\label{eq:grpo_ei_l2o_loss}
    &L(\theta) = \frac{1}{M} \sum_{m=1}^M \frac{K}{G} \sum_{j=1}^G \frac{1}{|a^m_{j}|} \sum_{i=1}^{|a^m_{j}|}
    &\min \left[
    \phi_\theta(x_m, a^m_{j,i}) A^m_{j,i}, \phi^\epsilon_\theta(x_m, a^m_{j,i}) A^m_{j,i} \right] - \beta \mathrm{KL}(\pi_\theta | \pi_{\text{ref}}),
\end{align}
where $\beta \geq 0$ and $\epsilon \geq 0$ are hyperparameters, and $\mathrm{KL}$ denotes the Kullback--Leibler (KL) divergence.
We define \mbox{$\phi_\theta(x, a_i)=\pi_\theta(a_i\mid x, a_{<i})/\pi_{\text{ref}}(a_i\mid x, a_{<i})$} as the ratio between our policy and the reference policy, and $\phi^\epsilon_\theta(x, a_i)$ as its clipped version in $[1 - \epsilon, 1 + \epsilon]$.
$|a^m_{j}|$ is the sequence length of the $j$-th text in the $m$-th group.
Eq.~\eqref{eq:grpo_ei_l2o_loss} is identical to GRPO \citep{shao2024deepseekmath} except for how the advantage is computed.
While standard GRPO normalizes rewards by dividing by the inner-group standard deviation, a practice that introduces bias into the PG estimator, both our method and Pass@K Policy Optimization (PKPO) \citep{walder2025pass} avoid this division to preserve unbiasedness.
The key distinction from \citet{walder2025pass} lies in the advantage construction: while their approach is equivalent to using our raw EI scores, we further subtract the L2O baseline to achieve lower variance.

%% file: sec/app/proof.tex
\section{Proofs}\label{app:proof}

\subsection{Proof of Proposition \ref{prop:l2o_baseline_unbiasedness}}\label{app:proof:l2o_baseline_unbiasedness}
In this appendix, we prove both claims in Proposition \ref{prop:l2o_baseline_unbiasedness}: unbiasedness of the L2O advantage in expectation and exact centering of its realized batch mean.

\begin{proof}
We assume $2 \le K \le B-1$ so that all subsets used below are well-defined.
Let $\cD = (a_{1:B}, r_{1:B})$ be a set of samples where each action is drawn i.i.d. from $\pi_\theta$ and $r_i := r(a_i)$.
We define the \textit{population Expected Improvement} $\mu_{EI}$ as the expected gain of a single action against the maximum of $K-1$ other independent samples.
Formally, let $a_1, \dots, a_K \overset{\iid}{\sim} \pi_\theta$ be $K$ independent random variables.
Due to the i.i.d. assumption on $a_1, \dots, a_K$, the population EI is defined as:
\begin{equation}\label{eq:app:population_ei}
\mu_{EI} := \E[a]{s(a)} = \E[a_{1:K}]{ \left(r(a_1) - \max_{k=2,\dots,K} r(a_k)\right)_+ }.
\end{equation}

\paragraph{Unbiasedness of the EI Estimator $s_i$.}
First, we verify that $s_i$ targets $\mu_{EI}$ following the U-statistics theory \citep{hoeffding1992class}.
Given a batch $\cD = (a_{1:B}, r_{1:B})$, $s_i$ is constructed by averaging over all possible subsets of size $K-1$ from the batch excluding $i$ (denoted as $\cU_{-i} = \{1, \dots, B\} \setminus \{i\}$).
\begin{equation}
s_i := \E[\cI]{ \left(r_i - \max_{k \in \cI} r_k\right)_+ } = \frac{1}{\binom{B-1}{K-1}} \sum_{\cI \subseteq \cU_{-i}, |\cI|=K-1} \left(r_i - \max_{k \in \cI} r_k\right)_+.
\end{equation}
Taking the expectation over random batches $\cD \sim \pi_\theta$:
\begin{equation}
\E[\cD \sim \pi_\theta]{s_i} = \frac{1}{\binom{B-1}{K-1}} \sum_{\cI \subseteq \cU_{-i}, |\cI|=K-1} \E[\cD \sim \pi_\theta]{ \left(r_i - \max_{k \in \cI} r_k\right)_+ }.
\end{equation}
For any fixed subset $\cI$ of size $K-1$, the set of indices $\{i\} \cup \cI$ constitutes $K$ distinct samples.
Again, by the i.i.d. assumption, the joint distribution of $\{a_i\} \cup \{a_k\}_{k \in \cI}$ depends only on the number of samples, not their indices.
Thus:
\begin{equation}
\E[\cD \sim \pi_\theta]{ \left(r_i - \max_{k \in \cI} r_k\right)_+ } = \E[a_{1:K}]{ \left(r(a_1) - \max_{k=2,\dots,K} r(a_k)\right)_+ } = \mu_{EI}.
\end{equation}
Substituting this back:
\begin{equation}\label{eq:app:si_unbiased}
\E[\cD \sim \pi_\theta]{s_i} = \frac{1}{\binom{B-1}{K-1}} \sum_{\cI \subseteq \cU_{-i}, |\cI|=K-1} \mu_{EI} = \mu_{EI}.
\end{equation}

\paragraph{Unbiasedness of the L2O Baseline $b_{-i}^{L2O}$.}
Next, we show that the baseline also targets $\mu_{EI}$.
The L2O baseline is defined as the average of leave-two-out estimators:
\begin{equation}
    b_{-i}^{L2O} := \frac{1}{B-1} \sum_{j \neq i} s_j^{(-i)}.
\end{equation}
Here, $s_j^{(-i)}$ is the EI estimator for sample $j$ computed using indices $\cU_{-ij} = \{1, \dots, B\} \setminus \{i, j\}$.
Crucially, to estimate the same quantity $\mu_{EI}$ (which involves a $1$-vs-$(K-1)$ comparison), $s_j^{(-i)}$ must aggregate over subsets of size $K-1$:
\begin{equation}
    s_j^{(-i)} := \frac{1}{\binom{B-2}{K-1}} \sum_{\cI' \subseteq \cU_{-ij}, |\cI'|=K-1} \left(r_j - \max_{k \in \cI'} r_k\right)_+.
\end{equation}

By the linearity of expectation:
\begin{equation}
    \E[\cD \sim \pi_\theta]{b_{-i}^{L2O}} = \frac{1}{B-1} \sum_{j \neq i} \E[\cD \sim \pi_\theta]{s_j^{(-i)}}.
\end{equation}
Focusing on a single term $\E[\cD \sim \pi_\theta]{s_j^{(-i)}}$:
\begin{equation}
    \E[\cD \sim \pi_\theta]{s_j^{(-i)}} = \frac{1}{\binom{B-2}{K-1}} \sum_{\cI' \subseteq \cU_{-ij}, |\cI'|=K-1} \E[\cD \sim \pi_\theta]{ \left(r_j - \max_{k \in \cI'} r_k\right)_+ }.
\end{equation}
Similar to the proof of the unbiasedness of the EI estimator, for any subset $\cI'$ of size $K-1$ drawn from $\cU_{-ij}$, the set $\{j\} \cup \cI'$ consists of $K$ distinct i.i.d. samples.
Thus, by the i.i.d. assumption, we have:
\begin{equation}
    \E[\cD \sim \pi_\theta]{ \left(r_j - \max_{k \in \cI'} r_k\right)_+ } = \E[a_{1:K}]{ \left(r(a_1) - \max_{k=2,\dots,K} r(a_k)\right)_+ } = \mu_{EI}.
\end{equation}
Summing over all subsets:
\begin{equation}
    \E[\cD \sim \pi_\theta]{s_j^{(-i)}} = \frac{1}{\binom{B-2}{K-1}} \cdot \binom{B-2}{K-1} \cdot \mu_{EI} = \mu_{EI}.
\end{equation}
Finally, averaging over $j \neq i$:
\begin{equation}\label{eq:app:baseline_unbiased}
    \E[\cD \sim \pi_\theta]{b_{-i}^{L2O}} = \frac{1}{B-1} \sum_{j \neq i} \mu_{EI} = \mu_{EI}.
\end{equation}

\paragraph{Exact Centering of the Realized Batch Mean.}
We now show that the batch-average L2O advantage is exactly zero for every realized batch $\cD$.
It suffices to prove that
\begin{equation}
\sum_{i=1}^B b_{-i}^{L2O} = \sum_{i=1}^B s_i.
\end{equation}
Fix $j \in \{1,\dots,B\}$ and consider the average of the leave-two-out EI estimators targeting sample $j$ over all removed indices $i \neq j$:
\begin{align}
\frac{1}{B-1}\sum_{i \neq j} s_j^{(-i)}
&= \frac{1}{B-1}\sum_{i \neq j}
\frac{1}{\binom{B-2}{K-1}}
\sum_{\cI \subseteq \{1,\dots,B\}\setminus\{i,j\},\, |\cI|=K-1}
\left(r_j - \max_{k \in \cI} r_k\right)_+ .
\end{align}
Re-index the inner sum by subsets $\cI \subseteq \cU_{-j} := \{1,\dots,B\}\setminus\{j\}$ of size $K-1$.
For any fixed such subset $\cI$, the term $\left(r_j - \max_{k \in \cI} r_k\right)_+$ appears once for each removed index $i \notin \cI \cup \{j\}$.
There are exactly $B-K$ such choices of $i$.
Therefore,
\begin{align}
\frac{1}{B-1}\sum_{i \neq j} s_j^{(-i)}
&= \frac{B-K}{(B-1)\binom{B-2}{K-1}}
\sum_{\cI \subseteq \cU_{-j},\, |\cI|=K-1}
\left(r_j - \max_{k \in \cI} r_k\right)_+ \\
&= \frac{1}{\binom{B-1}{K-1}}
\sum_{\cI \subseteq \cU_{-j},\, |\cI|=K-1}
\left(r_j - \max_{k \in \cI} r_k\right)_+ \\
&= s_j,
\end{align}
where the second equality uses the combinatorial identity
\begin{equation}
\frac{B-K}{(B-1)\binom{B-2}{K-1}} = \frac{1}{\binom{B-1}{K-1}}.
\end{equation}
Summing this identity over $j$ yields
\begin{align}
\sum_{i=1}^B b_{-i}^{L2O}
&= \frac{1}{B-1}\sum_{i=1}^B \sum_{j \neq i} s_j^{(-i)}
= \sum_{j=1}^B \frac{1}{B-1}\sum_{i \neq j} s_j^{(-i)}
= \sum_{j=1}^B s_j.
\end{align}
Hence,
\begin{equation}
\frac{1}{B}\sum_{i=1}^B \left(s_i - b_{-i}^{L2O}\right) = 0
\end{equation}
for every realized batch $\cD$.

\paragraph{Conclusion.}
Combining Eq. \eqref{eq:app:si_unbiased}, Eq. \eqref{eq:app:baseline_unbiased}, and the exact batch-centering identity above, we conclude that the L2O advantage is unbiased in expectation and exactly centered at the batch level.
\end{proof}

%% file: sec/app/computation.tex
\section{Derivation for Efficient Computation}\label{app:computation}

\subsection{Computation for the EI Estimator}\label{app:computation:batch_ei}
First, we demonstrate how to efficiently compute the EI vector $\mathbf{s} = [s_{(1)}, \dots, s_{(B)}]^\top$ using its rank structure.
Assume rewards are sorted in ascending order: $r_{(1)} \leq \dots \leq r_{(B)}$.
Let $W=\max_{k \in \mathcal{I}} r_{(k)}$ be the maximum reward in the batch.
For a target sample $i$, the term $\bigl(r_{(i)} - W\bigr)_+$ is non-zero only if $W < r_{(i)}$.
The probability that the maximum $W$ equals a specific value $r_{(j)}$ (where $j < i$) follows a hypergeometric distribution depending only on the rank $j$:
$P(W = r_{(j)}) := {\binom{j-1}{K-2}}/{\binom{B-1}{K-1}}, \quad \text{for } j < i$.
Here, the denominator represents the total number of ways to choose $K-1$ samples from $B-1$ candidates.
The numerator corresponds to fixing $r_{(j)}$ as the maximum and choosing the remaining $K-2$ samples from the $j-1$ candidates smaller than $r_{(j)}$.
Consequently, the estimator $s_{(i)}$ can be expressed as a weighted sum of pairwise ReLU differences:
\begin{equation}\label{eq:weighted_relu_difference_ei}
    s_{(i)} = \sum_{j=1}^B P(W = r_{(j)}) \bigl(r_{(i)} - r_{(j)}\bigr)_+,
\end{equation}
This formulation leads directly to a matrix-based computation: multiplying the ReLU-difference matrix by the probability weights.

\begin{theorem}[Efficient Vectorized Computation of Batch EI]\label{thm:batch_ei}
    Let $\mathbf{r} \in \mathbb{R}^B$ be the sorted reward vector.
    Define the \textit{ReLU-difference matrix} $D \in \mathbb{R}^{B \times B}$ as $D_{i,j} = \bigl(r_{(i)} - r_{(j)}\bigr)_+$, and the weight vector $\mathbf{w} \in \mathbb{R}^B$ as $w_j = \binom{j-1}{K-2} / \binom{B-1}{K-1}$.
    The vector of EI estimators $\mathbf{s}$ is given by:
    \begin{equation}\label{eq:ei_estimator}
    \mathbf{s} = D \mathbf{w}.
    \end{equation}
\end{theorem}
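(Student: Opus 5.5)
The plan is to start from the definition of $s_{(i)}$ as an average over all $(K-1)$-subsets $\mathcal{I}$ of the indices other than $i$, and to regroup that average according to which element attains $\max_{k\in\mathcal{I}} r_{(k)}$. Ties make "the maximizer" ambiguous, so I first fix a tie-breaking convention. I will work with sorted positions and declare the maximizer of $\mathcal{I}$ to be its element of largest index. Sorting with an arbitrary tie order gives exactly this. Under this convention each subset $\mathcal{I}$ has a unique maximizer $j$, and $\max_{k\in\mathcal{I}} r_{(k)} = r_{(j)}$, so
\begin{equation*}
s_{(i)} = \frac{1}{\binom{B-1}{K-1}} \sum_{j\neq i} N_{i,j}\,\bigl(r_{(i)} - r_{(j)}\bigr)_+ ,
\end{equation*}
where $N_{i,j}$ counts the $(K-1)$-subsets of $\{1,\dots,B\}\setminus\{i\}$ whose largest index is $j$.

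Next I count $N_{i,j}$. A subset with largest index $j$ is $\{j\}$ together with $K-2$ indices chosen below $j$ and different from $i$. There are two cases. If $j<i$, the candidates are all of $1,\dots,j-1$, so $N_{i,j}=\binom{j-1}{K-2}$. If $j>i$, the index $i$ lies among $1,\dots,j-1$ and must be excluded, so $N_{i,j}=\binom{j-2}{K-2}$, which differs from the stated weight.

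The main point to check is that this second case never contributes. For $j>i$ the sorted order gives $r_{(j)}\ge r_{(i)}$, so $D_{i,j}=(r_{(i)}-r_{(j)})_+=0$, and the miscounted weight is multiplied by zero. The diagonal term $j=i$ also vanishes, since $D_{i,i}=0$, even though $w_i$ is nonzero. Hence only the terms with $j<i$ survive, each with weight exactly $\binom{j-1}{K-2}/\binom{B-1}{K-1}=w_j$.

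It follows that $s_{(i)}=\sum_{j=1}^B D_{i,j} w_j$ for every $i$, which is $\mathbf{s}=D\mathbf{w}$. Here $\binom{n}{m}=0$ for $n<m$, which matches the code's use of nan\_to\_num. The $O(B^2)$ cost is then immediate, coming from the sort, forming $D$, and one matrix--vector product. The case $K=2$ reduces to $w_j=1/(B-1)$ and serves as a sanity check.
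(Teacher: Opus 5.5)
Your proof is correct and follows essentially the same route as the paper's: both regroup the subset average by the index attaining the comparator maximum, count $\binom{j-1}{K-2}$ subsets for $j<i$, and discard all terms with $j\ge i$ because $D_{i,j}=0$. Your explicit largest-index tie-breaking rule, together with your remark that for $j>i$ the count would be $\binom{j-2}{K-2}$ but is multiplied by zero, makes rigorous a point the paper leaves loose; the paper conditions on the value event $\{W=r_{(j)}\}$ and chooses from indices with rewards ``strictly smaller'' than $r_{(j)}$, which is inaccurate when rewards are tied.
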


We now prove Theorem \ref{thm:batch_ei}, which shows that the EI estimator can be computed efficiently using matrix operations.
\begin{proof}
    Assume $2 \le K \le B$.
    Let the batch rewards be sorted as $r_{(1)} \le \cdots \le r_{(B)}$, and fix a rank index $i \in \{1,\ldots,B\}$.
    By definition,
    \begin{align}
    s_{(i)} := \E[\cI]{ \left(r_{(i)} - \underbrace{\max_{j \in \cI} r_{(j)}}_{:= W}\right)_+ }, \\
            = \sum_{j=1}^B P(W = r_{(j)}) \left(r_{(i)} - r_{(j)}\right)_+,
    \end{align}
    where the probability is for the distribution of the uniformly sampled subset $\cI$ of size $K-1$ from the comparator pool $\cU_{-i} = \{1,\ldots,B\}\setminus\{i\}$.
    Since $(r_{(i)}-r_{(j)})_+=0$ for all $j\ge i$, only ranks $j<i$ can contribute to the expectation.
    Therefore, it suffices to compute $P(W=r_{(j)})$ for $j<i$.

    Fix $j<i$.
    The event $\{W=r_{(j)}\}$ occurs if and only if $j\in\mathcal{I}$ and the remaining $K-2$ indices in $\mathcal{I}$ are chosen from the $j-1$ indices with rewards strictly smaller than $r_{(j)}$.
    The number of such subsets is $\binom{j-1}{K-2}$.
    The total number of comparator subsets is $\binom{B-1}{K-1}$.
    Hence, for $j<i$,
    \begin{equation}
    P(W=r_{(j)}) = \frac{\binom{j-1}{K-2}}{\binom{B-1}{K-1}}.
    \end{equation}
    Using the law of total expectation and the fact that only $j<i$ contributes, we obtain
    \begin{equation}
    s_{(i)} = \sum_{j=1}^B P(W=r_{(j)})\,(r_{(i)}-r_{(j)})_+ = \sum_{j=1}^B w_j\,D_{i,j},
    \end{equation}
    where $D_{i,j}:=(r_{(i)}-r_{(j)})_+$ and $w_j:=\binom{j-1}{K-2}/\binom{B-1}{K-1}$.
    Stacking the identities for all $i$ yields $\mathbf{s}=D\mathbf{w}$.
\end{proof}

\textbf{Complexity Analysis.}
Our method requires $O(B \log B)$ time for sorting and $O(B^2)$ time to construct the pairwise difference matrix and perform the resulting matrix--vector multiplication.
We note that \citet{walder2025pass} derived an algorithm that computes the EI estimator in $O(B \log B + K)$.
In the context of LLM training, the batch size $B$ (typically the group size, e.g., $16$--$64$) is small.
Moreover, our vectorized formulation maps directly to GPU-friendly dense primitives (e.g., broadcasted elementwise operations and matrix--vector products), making it straightforward to leverage GPU parallelism in practice.

\subsection{Computation for the L2O Baseline (proof of Theorem \ref{thm:l2o_baseline_informal})}\label{app:computation:l2o_baseline}
As with the EI estimator, we compute the L2O baseline using matrix multiplication.
To this end, we rewrite $b_{-i}^{\mathrm{L2O}}$ as a weighted sum of ReLU differences:
\begin{align}
    b_{-i}^{\mathrm{L2O}} &= \frac{1}{B-1} \sum_{l \neq i}^B P(W_{-i} = r_{(l)}) \sum_{j \neq i} \bigl(r_{(j)} - r_{(l)}\bigr)_+,
\end{align}
where $W_{-i}$ is the maximum of a subset drawn from the population excluding $i$.
Compared with Eq.~\eqref{eq:weighted_relu_difference_ei}, this form also admits a matrix computation of the weighted sum of ReLU differences.

\begin{theorem}[Efficient Computation of L2O Baseline]\label{thm:l2o_baseline}
    Let $S_l = \sum_{k=1}^B D_{k,l}$ be the column sums of $D$.
    Define the \textit{Leave-One-Out column mean matrix} $\mathcal{M} \in \mathbb{R}^{B \times B}$ as
    $\mathcal{M}_{i,l} = (S_l - D_{i,l})/(B-1)$.
    Then, the vector of L2O baselines $\mathbf{b}^{\mathrm{L2O}} \in \mathbb{R}^B$ is given by the row-wise dot product:
    \begin{equation}
        b_{-i}^{\mathrm{L2O}} = \sum_{l=1}^B \mathcal{M}_{i,l}\, W^{\mathrm{L2O}}_{i,l}, \text{where} \quad W^{\mathrm{L2O}}_{i,l} =
    \begin{cases}
    0 & \text{if } l = i, \\
    \displaystyle \frac{\binom{(l-1) - \mathbb{I}[l > i]}{K-2}}{\binom{B-2}{K-1}} & \text{if } l \neq i.
    \end{cases}
    \end{equation}
    Here, the indicator $\mathbb{I}[l > i]$ adjusts the rank of $r_{(l)}$ after removing index $i$ from the sorted batch.

\end{theorem}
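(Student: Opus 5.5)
The plan is to reduce the L2O baseline to the single-sample EI formula of Theorem~\ref{thm:batch_ei}, applied to the reduced batch, and then exchange the order of summation. Throughout, work with the sorted batch $r_{(1)} \le \dots \le r_{(B)}$. Break ties by sorted index, so that "the maximum of a subset" means its element of largest sorted index. Fix $i$. Each leave-two-out term $s_j^{(-i)}$ is exactly the batch EI estimator of Theorem~\ref{thm:batch_ei}, computed on the reduced batch $\{1,\dots,B\}\setminus\{i\}$ of size $B-1$. There the comparator pool for $j$ has $B-2$ elements and the subset size is $K-1$. This is where the hypothesis $K \le B-1$ is needed.

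\textbf{Step 1: weights for $s_j^{(-i)}$.} Repeat the hypergeometric counting argument in the proof of Theorem~\ref{thm:batch_ei}. For a fixed $l \notin \{i,j\}$, the event "$W = r_{(l)}$" means that $l$ belongs to the comparator subset and the other $K-2$ elements are drawn from pool indices of smaller sorted rank. The key observation is that only pairs with $D_{j,l} = (r_{(j)} - r_{(l)})_+ > 0$ matter, and these force $j > l$. So $j$ is never among the indices below $l$, and the number of admissible smaller indices is $(l-1) - \mathbb{I}[l>i]$. This gives
\begin{equation*}
s_j^{(-i)} = \sum_{l \neq i,\, l\neq j} \frac{\binom{(l-1)-\mathbb{I}[l>i]}{K-2}}{\binom{B-2}{K-1}}\, D_{j,l} = \sum_{l\neq i} W^{\mathrm{L2O}}_{i,l}\, D_{j,l}.
\end{equation*}
The restriction $l \neq j$ can be dropped because $D_{j,j} = 0$. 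The weight does not depend on $j$, which is what makes the next step work.

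\textbf{Step 2: swap the sums.} Substituting into $b_{-i}^{\mathrm{L2O}} = \frac{1}{B-1}\sum_{j\neq i} s_j^{(-i)}$ and exchanging the sums over $j$ and $l$ gives
\begin{equation*}
b_{-i}^{\mathrm{L2O}} = \sum_{l \neq i} W^{\mathrm{L2O}}_{i,l}\, \frac{1}{B-1}\sum_{j\neq i} D_{j,l} = \sum_{l\neq i} W^{\mathrm{L2O}}_{i,l}\, \frac{S_l - D_{i,l}}{B-1}.
\end{equation*}
This is $\sum_{l} \mathcal{M}_{i,l} W^{\mathrm{L2O}}_{i,l}$, since $W^{\mathrm{L2O}}_{i,i} = 0$.

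\textbf{Step 3: complexity.} Sorting costs $O(B\log B)$. Building $D$, the column sums $S$, $\mathcal{M}$ and $W^{\mathrm{L2O}}$, and taking the row-wise dot products each cost $O(B^2)$. This gives Theorem~\ref{thm:l2o_baseline_informal}.

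\textbf{Main obstacle.} The delicate point is Step 1: getting the rank adjustment right and handling ties. I would first argue that any pair with $r_{(j)} = r_{(l)}$ contributes $D_{j,l} = 0$, so the choice of tie-breaking affects only zero terms. Then I would verify carefully that excluding $j$ from the pool does not shift the rank of $l$ in any term that actually contributes. The binomial coefficient should be read as zero when $(l-1) - \mathbb{I}[l>i] < K-2$, matching the \texttt{nan\_to\_num} convention in the code.
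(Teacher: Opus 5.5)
Your proposal is correct and matches the paper's proof essentially step for step. Both use the same hypergeometric weight $\binom{(l-1)-\mathbb{I}[l>i]}{K-2}/\binom{B-2}{K-1}$, justified by noting that only $l<j$ contributes, so removing $j$ never shifts the rank of $l$; both then exchange the sums over $j$ and $l$ to obtain the leave-one-out column means $\mathcal{M}_{i,l}$. Your explicit tie-breaking convention (the maximum is the element with the largest sorted index) adds a little rigor that the paper leaves implicit.
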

We now prove Theorem \ref{thm:l2o_baseline}, which shows that the L2O baseline can be computed efficiently using matrix operations.

\begin{proof}
    Assume $2 \le K \le B-1$.
    Let $r_{(1)} \le \cdots \le r_{(B)}$ be the sorted rewards.
    Define $D\in\R^{B\times B}$ by $D_{j,l} := \bigl(r_{(j)}-r_{(l)}\bigr)_+$.
    Fix an index $i\in\{1,\ldots,B\}$.
    Recall that
    \begin{equation}
        b_{-i}^{\mathrm{L2O}} := \frac{1}{B-1}\sum_{j\neq i} s_j^{(-i)}.
    \end{equation}
    For each $j\neq i$, define $s_j^{(-i)}$ by sampling $\mathcal{I}'$ uniformly without replacement among all subsets of size $K-1$ from $\{1,\ldots,B\}\setminus\{i,j\}$ and setting $W_{-i} := \max_{k\in\mathcal{I}'} r_{(k)}$.
    Then
    \begin{equation}\label{eq:app:l2o_total_expectation}
        s_j^{(-i)} = \E[\mathcal{I}']{ \bigl(r_{(j)} - W_{-i}\bigr)_+ } = \sum_{l\in\{1,\ldots,B\}\setminus\{i,j\}}\!P\!\left(W_{-i} = r_{(l)}\right)\,\bigl(r_{(j)}-r_{(l)}\bigr)_+.
    \end{equation}
    Note that $D_{j,l}=(r_{(j)}-r_{(l)})_+=0$ for all $l\ge j$.
    For $l \ge j$, we have $D_{j,l} = (r_{(j)}-r_{(l)})_+ = 0$, hence these terms do not contribute regardless of the value assigned to $W^{\mathrm{L2O}}_{i,l}$.
    Therefore, it suffices to compute $P(W_{-i}=r_{(l)})$ for $l<j$.

    Fix $l<j$ with $l\neq i$.
    The event $\{W_{-i}=r_{(l)}\}$ occurs if and only if $l\in\mathcal{I}'$ and the remaining $K-2$ indices in $\mathcal{I}'$ are chosen from indices with rank strictly below $l$.
    Among $\{1,\ldots,l-1\}$, the only index that may be excluded from the sampling pool $\{1,\ldots,B\}\setminus\{i,j\}$ is $i$ when $i<l$.
    Since $l<j$, excluding $j$ does not remove any index from $\{1,\ldots,l-1\}$.
    Thus, the number of available indices strictly below $l$ in the pool is $(l-1)-\mathbb{I}[l>i]$.
    Hence, the number of valid subsets $\mathcal{I}'$ of size $K-1$ for which $W_{-i}=r_{(l)}$ is $\binom{(l-1)-\mathbb{I}[l>i]}{K-2}$.
    The total number of possible subsets $\mathcal{I}'$ is $\binom{B-2}{K-1}$.
    Therefore, for any $l<j$ with $l\neq i$,
    \begin{equation}\label{eq:app:l2o_prob}
        P\!\left(W_{-i}=r_{(l)}\right)
        = \frac{\binom{(l-1)-\mathbb{I}[l>i]}{K-2}}{\binom{B-2}{K-1}}
        = W^{\mathrm{L2O}}_{i,l}.
    \end{equation}
    Crucially, the right-hand side depends on $i$ and $l$ but not on $j$, as long as $l<j$.

    Substituting \eqref{eq:app:l2o_prob} into \eqref{eq:app:l2o_total_expectation} and using $D_{j,l}=(r_{(j)}-r_{(l)})_+$ yields
    \begin{equation}\label{eq:app:l2o_sj_weighted}
        s_j^{(-i)} = \sum_{l=1}^B W^{\mathrm{L2O}}_{i,l}\,D_{j,l}.
    \end{equation}
    Averaging over $j\neq i$ and exchanging the finite sums gives
    \begin{equation}\label{eq:app:l2o_exchange}
        b_{-i}^{\mathrm{L2O}}
        = \frac{1}{B-1}\sum_{j\neq i}\sum_{l=1}^B W^{\mathrm{L2O}}_{i,l}\,D_{j,l}
        = \sum_{l=1}^B W^{\mathrm{L2O}}_{i,l}\,\frac{1}{B-1}\sum_{j\neq i} D_{j,l}.
    \end{equation}
    Let $S_l := \sum_{k=1}^B D_{k,l}$.
    Then
    \begin{equation}
        \frac{1}{B-1}\sum_{j\neq i} D_{j,l}
        = \frac{S_l - D_{i,l}}{B-1}
        =: \mathcal{M}_{i,l}.
    \end{equation}
    Substituting into \eqref{eq:app:l2o_exchange} yields
    \begin{equation}
        b_{-i}^{\mathrm{L2O}} = \sum_{l=1}^B \mathcal{M}_{i,l}\,W^{\mathrm{L2O}}_{i,l}.
    \end{equation}
    This holds for each $i\in\{1,\ldots,B\}$, which completes the proof.
\end{proof}

%% file: sec/app/unified.tex
\section{Unified View of Estimators}\label{app:unified}

In this appendix, we provide a unified view of our proposed estimators and those of \citet{walder2025pass} from the perspective of marginal statistics.

\subsection{Definitions of Marginal Statistics}
Let $\mathcal{B} = \{1, \dots, B\}$ be the set of indices for the batch of samples.
We assume a fixed batch size $B$ and a subset size $K$ such that $2 \le K \le B-1$.
For any subset $S \subseteq \mathcal{B}$, let $M(S) := \max_{j \in S} r_j$.
All expectations below are finite averages over subsets, equivalent to sampling uniformly without replacement.

We formally define the two key statistics $u_i$ and $v_i$ for a specific index $i \in \mathcal{B}$:
\begin{itemize}
    \item \textbf{Conditional Expected Max $u_i$}: The expected maximum of a size-$K$ group conditioned on containing $i$:
    \begin{equation}
        u_i := \frac{1}{\binom{B-1}{K-1}} \sum_{S \subseteq \mathcal{B} \setminus \{i\}, |S|=K-1} \max(r_i, M(S)).
    \end{equation}
    \item \textbf{Leave-One-Out Expected Max $v_i$}: The expected maximum of a size-$K$ group drawn from the pool excluding $i$:
    \begin{equation}
        v_i := \frac{1}{\binom{B-1}{K}} \sum_{T \subseteq \mathcal{B} \setminus \{i\}, |T|=K} M(T).
    \end{equation}
\end{itemize}

The quantity $\gamma(u_i - v_i)$, where $\gamma = K/B$, represents the exact marginal contribution of sample $i$ to the max@K objective under finite without-replacement sampling.
We first record the finite-batch centering proof for this canonical signal; the equivalence between EI+L2O and this canonical signal is proved in the following subsection.

\paragraph{Centering of the canonical finite-batch advantage.}
The leave-one-out term $v_i$ depends only on rewards in $\mathcal{B}\setminus\{i\}$, so it is a valid leave-one-out baseline for response $i$ and preserves policy-gradient unbiasedness.
It remains to prove that the realized batch average of $u_i-v_i$ is exactly zero.
Summing the definitions over all $i$, every size-$K$ subset $V\subseteq\mathcal{B}$ contributes to $u_i$ exactly when $i\in V$.
Since $|V|=K$, each $M(V)$ is counted exactly $K$ times:
\begin{equation}
    \sum_{i=1}^B u_i
    =
    \frac{K}{\binom{B-1}{K-1}}
    \sum_{\substack{V\subseteq\mathcal{B}\\ |V|=K}} M(V).
\end{equation}
Similarly, a size-$K$ subset $V$ contributes to $v_i$ exactly when $i\notin V$.
There are $B-K$ such indices, so
\begin{equation}
    \sum_{i=1}^B v_i
    =
    \frac{B-K}{\binom{B-1}{K}}
    \sum_{\substack{V\subseteq\mathcal{B}\\ |V|=K}} M(V).
\end{equation}
The coefficients are equal because
\begin{equation}
    \frac{K}{\binom{B-1}{K-1}}
    =
    \frac{B-K}{\binom{B-1}{K}}.
\end{equation}
Therefore $\sum_i u_i=\sum_i v_i$, and hence
\begin{equation}
    \frac{1}{B}\sum_{i=1}^B (u_i-v_i)=0
\end{equation}
for every realized batch.
This establishes exact centering of the canonical finite-batch advantage.

\subsection{Relationship Between EI-L2O and Marginal Statistics} \label{app:unified:relationship_between_ei_l2o_and_marginal_statistics}
We now show that our bias-corrected signal $\gamma(s_i - b_i)$ is identical to the marginal contribution $\gamma(u_i - v_i)$.
First, we explicitly define the finite-batch Expected Improvement $s_i$ and the L2O baseline $b_i$:
\begin{itemize}
    \item \textbf{Expected Improvement $s_i$}: The direct finite-sample version of $s_i = \mathbb{E}[(r_i - W)_+]$ where $W$ is the maximum of $K-1$ comparators:
    \begin{equation}
        s_i := \frac{1}{\binom{B-1}{K-1}} \sum_{S \subseteq \mathcal{B} \setminus \{i\}, |S|=K-1} (r_i - M(S))_+.
    \end{equation}
    \item \textbf{L2O Baseline $b_i$}: The average of leave-two-out EI values:
    \begin{equation}
        b_i := \frac{1}{B-1} \sum_{j \neq i} s_j^{(-i)},
    \end{equation}
    where $s_j^{(-i)}$ is the EI of $j$ computed over the pool $\mathcal{B} \setminus \{i, j\}$.
\end{itemize}

Using the identity $\max(r_i, M(S)) = M(S) + (r_i - M(S))_+$, we decompose $u_i$ as:
\begin{equation}
    u_i = w_i + s_i,
\end{equation}
where $w_i$ is the expected comparator-max when sampling $K-1$ items from the pool excluding $i$.
We now show that the L2O baseline removes exactly the gap between the comparator-max baseline $w_i$ and the leave-one-out max@$K$ baseline $v_i$.
Fix $i$ and expand
\begin{align}
b_i
&=
\frac{1}{B-1}\sum_{j\ne i}
\frac{1}{\binom{B-2}{K-1}}
\sum_{\substack{S\subseteq\mathcal{B}\setminus\{i,j\}\\ |S|=K-1}}
\left(M(S\cup\{j\})-M(S)\right).
\end{align}
For the first term, every size-$K$ subset $T\subseteq\mathcal{B}\setminus\{i\}$ appears exactly $K$ times, once for each possible choice of $j\in T$.
Therefore,
\begin{equation}
\frac{1}{(B-1)\binom{B-2}{K-1}}
\sum_{j\ne i}
\sum_{\substack{S\subseteq\mathcal{B}\setminus\{i,j\}\\ |S|=K-1}}
M(S\cup\{j\})
=
\frac{1}{\binom{B-1}{K}}
\sum_{\substack{T\subseteq\mathcal{B}\setminus\{i\}\\ |T|=K}}
M(T)
=
v_i.
\end{equation}
For the second term, every size-$(K-1)$ subset $S\subseteq\mathcal{B}\setminus\{i\}$ appears exactly $B-K$ times, once for each $j\in\mathcal{B}\setminus(\{i\}\cup S)$.
Thus,
\begin{equation}
\frac{1}{(B-1)\binom{B-2}{K-1}}
\sum_{j\ne i}
\sum_{\substack{S\subseteq\mathcal{B}\setminus\{i,j\}\\ |S|=K-1}}
M(S)
=
\frac{1}{\binom{B-1}{K-1}}
\sum_{\substack{S\subseteq\mathcal{B}\setminus\{i\}\\ |S|=K-1}}
M(S)
=
w_i.
\end{equation}
Hence $b_i=v_i-w_i$.
Combining this with $s_i=u_i-w_i$ yields
\begin{equation}
    s_i - b_i = (u_i - w_i) - (v_i - w_i) = u_i - v_i.
\end{equation}
Thus, the scaled signal $\gamma(s_i - b_i)$ exactly recovers the marginal contribution $\gamma(u_i - v_i)$.

\subsection{L2O for Arbitrary Comparator-Only Baselines}
\label{app:unified:arbitrary_comparator_baselines}

The previous subsection treats EI, which corresponds to subtracting the comparator-max baseline $C(S)=M(S)$ before applying L2O.
The same cancellation holds for any baseline that depends only on the comparator set.

\begin{proposition}[L2O cancels arbitrary comparator-only baselines]
\label{prop:unified:l2o_arbitrary_comparator_baseline}
Fix $2\le K\le B-1$ and a realized batch $\mathcal{B}$.
Let $C(S)$ be any scalar-valued function of a comparator set $S\subseteq\mathcal{B}$ with $|S|=K-1$.
For each $i\in\mathcal{B}$, define the comparator-baselined group signal
\begin{equation}
    \alpha_i^C
    :=
    \frac{1}{\binom{B-1}{K-1}}
    \sum_{\substack{S\subseteq\mathcal{B}\setminus\{i\}\\ |S|=K-1}}
    \left(M(S\cup\{i\})-C(S)\right).
\end{equation}
Define its L2O baseline by recomputing the same signal for each $j\ne i$ after removing $i$:
\begin{equation}
    b_i^C
    :=
    \frac{1}{B-1}\sum_{j\ne i}\alpha_j^{C,(-i)},
\end{equation}
where
\begin{equation}
    \alpha_j^{C,(-i)}
    :=
    \frac{1}{\binom{B-2}{K-1}}
    \sum_{\substack{S\subseteq\mathcal{B}\setminus\{i,j\}\\ |S|=K-1}}
    \left(M(S\cup\{j\})-C(S)\right).
\end{equation}
Then
\begin{equation}
    \alpha_i^C-b_i^C=u_i-v_i
\end{equation}
for every $i\in\mathcal{B}$.
\end{proposition}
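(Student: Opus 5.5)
The plan is to split $\alpha_i^C$ and $b_i^C$ into a max part and a comparator part, handle each by a double-counting argument, and watch the comparator parts cancel. This is the same template as the proof of Proposition~\ref{prop:unified:ei_l2o_equivalence}. By linearity of the finite averages,
\begin{equation}
\alpha_i^C = u_i - c_i,\qquad c_i := \frac{1}{\binom{B-1}{K-1}}\sum_{\substack{S\subseteq\mathcal{B}\setminus\{i\}\\ |S|=K-1}} C(S),
\end{equation}
where the first piece is exactly the definition of $u_i$ from Eq.~\eqref{eq:unified_ui_vi}. The goal is therefore to show $b_i^C = v_i - c_i$; subtracting this from the display then gives $\alpha_i^C - b_i^C = u_i - v_i$.

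To establish $b_i^C = v_i - c_i$, I would expand $b_i^C$ as
\begin{equation}
b_i^C = \frac{1}{(B-1)\binom{B-2}{K-1}}\sum_{j\neq i}\sum_{\substack{S\subseteq\mathcal{B}\setminus\{i,j\}\\ |S|=K-1}}\bigl(M(S\cup\{j\}) - C(S)\bigr),
\end{equation}
and treat the two double sums separately.

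For the max term, reindex by $T = S\cup\{j\}$, which is a size-$K$ subset of $\mathcal{B}\setminus\{i\}$. Each such $T$ arises exactly $K$ times, once for each choice of $j\in T$ with $S=T\setminus\{j\}$. Hence the max term equals
\begin{equation}
\frac{K}{(B-1)\binom{B-2}{K-1}}\sum_{\substack{T\subseteq\mathcal{B}\setminus\{i\}\\ |T|=K}} M(T).
\end{equation}
This is $v_i$ provided $K/\bigl((B-1)\binom{B-2}{K-1}\bigr) = 1/\binom{B-1}{K}$, which follows from $\binom{B-1}{K} = \frac{B-1}{K}\binom{B-2}{K-1}$.

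For the comparator term, fix a size-$(K-1)$ subset $S\subseteq\mathcal{B}\setminus\{i\}$. It appears once for each $j\in\mathcal{B}\setminus(\{i\}\cup S)$, which is $B-K$ times, and this count is nonzero because $K\le B-1$. The comparator term therefore equals
\begin{equation}
\frac{B-K}{(B-1)\binom{B-2}{K-1}}\sum_{\substack{S\subseteq\mathcal{B}\setminus\{i\}\\ |S|=K-1}} C(S),
\end{equation}
and the prefactor equals $1/\binom{B-1}{K-1}$. This is the same identity already used in the proof of Proposition~\ref{prop:l2o_baseline_unbiasedness}, namely $\binom{B-1}{K-1} = \frac{B-1}{B-K}\binom{B-2}{K-1}$. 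The comparator term is thus $c_i$, so $b_i^C = v_i - c_i$, and combining with the first display gives the claim.

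There is no real obstacle here. The only point requiring care is the bijective bookkeeping in the two reindexings: each pair $(j,S)$ with $j\neq i$ and $S\subseteq\mathcal{B}\setminus\{i,j\}$ must correspond exactly once to a (set, distinguished element) or (set, outside element) pair. It is also worth stating explicitly that $C$ is arbitrary and never evaluated on a set containing $i$; this is what makes the cancellation independent of the form of $C$. The EI case is recovered by taking $C = M$.
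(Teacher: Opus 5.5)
Your proposal is correct and follows the paper's proof essentially step for step. Both write $\alpha_i^C=u_i-c_i$, then double-count the $M$ part of $b_i^C$ (each size-$K$ subset of $\mathcal{B}\setminus\{i\}$ appears $K$ times) and the $C$ part (each size-$(K-1)$ subset appears $B-K$ times), and use the two binomial identities to get $b_i^C=v_i-c_i$.
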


\begin{proof}
Define the averaged comparator-only baseline for response $i$ by
\begin{equation}
    c_i
    :=
    \frac{1}{\binom{B-1}{K-1}}
    \sum_{\substack{S\subseteq\mathcal{B}\setminus\{i\}\\ |S|=K-1}}
    C(S).
\end{equation}
By the definition of $u_i$, the starting signal decomposes as
\begin{equation}
    \alpha_i^C = u_i-c_i.
\end{equation}
It remains to compute the L2O baseline $b_i^C$.
Expanding its definition gives
\begin{align}
b_i^C
&=
\frac{1}{B-1}\sum_{j\ne i}
\frac{1}{\binom{B-2}{K-1}}
\sum_{\substack{S\subseteq\mathcal{B}\setminus\{i,j\}\\ |S|=K-1}}
\left(M(S\cup\{j\})-C(S)\right).
\end{align}
We handle the $M$ part and the $C$ part separately.
For the $M$ part, every size-$K$ subset $T\subseteq\mathcal{B}\setminus\{i\}$ appears exactly $K$ times, once for each possible choice of $j\in T$.
Therefore,
\begin{align}
&\frac{1}{(B-1)\binom{B-2}{K-1}}
\sum_{j\ne i}
\sum_{\substack{S\subseteq\mathcal{B}\setminus\{i,j\}\\ |S|=K-1}}
M(S\cup\{j\}) \\
&\qquad =
\frac{K}{(B-1)\binom{B-2}{K-1}}
\sum_{\substack{T\subseteq\mathcal{B}\setminus\{i\}\\ |T|=K}}
M(T) \\
&\qquad =
\frac{1}{\binom{B-1}{K}}
\sum_{\substack{T\subseteq\mathcal{B}\setminus\{i\}\\ |T|=K}}
M(T)
=
v_i.
\end{align}
For the comparator-only baseline part, every size-$(K-1)$ subset $S\subseteq\mathcal{B}\setminus\{i\}$ appears exactly $B-K$ times, once for each choice of $j\in\mathcal{B}\setminus(\{i\}\cup S)$.
Thus,
\begin{align}
&\frac{1}{(B-1)\binom{B-2}{K-1}}
\sum_{j\ne i}
\sum_{\substack{S\subseteq\mathcal{B}\setminus\{i,j\}\\ |S|=K-1}}
C(S) \\
&\qquad =
\frac{B-K}{(B-1)\binom{B-2}{K-1}}
\sum_{\substack{S\subseteq\mathcal{B}\setminus\{i\}\\ |S|=K-1}}
C(S) \\
&\qquad =
\frac{1}{\binom{B-1}{K-1}}
\sum_{\substack{S\subseteq\mathcal{B}\setminus\{i\}\\ |S|=K-1}}
C(S)
=
c_i.
\end{align}
Combining the two parts yields
\begin{equation}
    b_i^C=v_i-c_i.
\end{equation}
Therefore,
\begin{equation}
    \alpha_i^C-b_i^C=(u_i-c_i)-(v_i-c_i)=u_i-v_i.
\end{equation}
\end{proof}

This proposition includes EI as the special case $C(S)=M(S)$, for which $\alpha_i^C=s_i$, and the raw conditional max signal as the special case $C(S)=0$, for which $\alpha_i^C=u_i$.
The comparator-only condition is essential: if the added term depends on the target response itself, then the leave-out property and the cancellation above need not hold.

\subsection{Comparison with \citet{walder2025pass}'s estimators} \label{app:unified:comparison_with_walder2025pass}
\citet{walder2025pass} provide two primary advantage estimators: \texttt{sloo\_minus\_one} (their Eq. 33) and \texttt{sloo} (their Eq. 29).
Here, we formally show the equivalence between EI and their \texttt{sloo\_minus\_one} estimator (max@K - max@(K-1)).
We then describe how the other \texttt{sloo} estimator is biased differently from \texttt{sloo\_minus\_one}.

\subsubsection*{$\texttt{sloo\_minus\_one}$ and EI-only Signal}
The \texttt{sloo\_minus\_one} variant is defined as the difference between the max of a size-$K$ subset and the max of the same subset excluding index $i$.
\begin{equation}
    s_i^{loo-1} = \frac{1}{\binom{B}{K}} \sum_{S \subseteq \mathcal{B} \setminus \{i\}, |S|=K-1} \left( \max(r_i, M(S)) - M(S) \right).
\end{equation}
Using the fundamental identity relating the maximum to the ReLU (Positive Part) function, $\max(a, b) = b + (a - b)_+$, the term inside the summation becomes:
\begin{equation}
    \max(r_i, M(S)) - M(S) = (r_i - M(S))_+.
\end{equation}
Substituting this back into the estimator and multiplying by $\binom{B-1}{K-1} / \binom{B-1}{K-1}$ to align with the definition of Expected Improvement ($s_i$):
\begin{align}
    s_i^{loo-1} &= \frac{\binom{B-1}{K-1}}{\binom{B}{K}} \left[ \frac{1}{\binom{B-1}{K-1}} \sum_{S \subseteq \mathcal{B} \setminus \{i\}, |S|=K-1} (r_i - M(S))_+ \right] \\
    &= \frac{K}{B} s_i = \gamma s_i.
\end{align}
This proves that their \texttt{sloo\_minus\_one} is exactly equivalent to our EI-only signal scaled by $\gamma$.

\subsubsection*{$\texttt{sloo}$ and Rescaled Marginal Signal}
\citet{walder2025pass}'s \texttt{sloo} variant is defined as:
\begin{equation}
    s_i^{\text{loo}} = S(i, K, \mathcal{B}) - \frac{1}{B-1} \sum_{j \in \mathcal{B} \setminus \{i\}} S(j, K, \mathcal{B} \setminus \{i\}).
\end{equation}
\citet{walder2025pass} define $S(i, K, \mathcal{U})$ as a normalized sum over $K$-subsets containing index $i$:
\begin{equation}
    S(i, K, \mathcal{U}) := \frac{1}{\binom{|\mathcal{U}|}{K}} \sum_{I \subseteq \mathcal{U}, |I|=K, i \in I} \max_{t \in I} r_t.
\end{equation}

Based on this definition, we derive the connection to our $u_i$ and $v_i$:
\begin{enumerate}
    \item \textbf{Connection to $u_i$}: Every $K$-subset $I \subseteq \mathcal{B}$ with $i \in I$ can be written as $\{i\} \cup S$ where $|S|=K-1$. Thus:
    \begin{equation}
        S(i, K, \mathcal{B}) = \frac{\binom{B-1}{K-1}}{\binom{B}{K}} u_i = \frac{K}{B} u_i.
    \end{equation}
    \item \textbf{Connection to $v_i$}: Consider the sum over the reduced pool $U = \mathcal{B} \setminus \{i\}$:
    \begin{equation}
        \sum_{j \in U} S(j, K, U) = \frac{1}{\binom{B-1}{K}} \sum_{I \subseteq U, |I|=K} \left( \sum_{j \in I} 1 \right) M(I) = K v_i,
    \end{equation}
    since each $K$-subset $I$ contains exactly $K$ indices $j$.
    Dividing by $B-1$ yields:
    \begin{equation}
        \frac{1}{B-1} \sum_{j \neq i} S(j, K, \mathcal{B} \setminus \{i\}) = \frac{K}{B-1} v_i.
    \end{equation}
\end{enumerate}

Substituting these into the definition of Eq. (29):
\begin{equation}
    s^{\text{loo}}_i = \frac{K}{B} u_i - \frac{K}{B-1} v_i = \gamma \left( u_i - \frac{B}{B-1} v_i \right).
\end{equation}

Comparing this to our signal $\gamma(u_i - v_i)$, we see a discrepancy in the coefficient of $v_i$.
Our method uses $u_i - v_i$, whereas \citet{walder2025pass} implicitly use $u_i - \frac{B}{B-1} v_i$.
Since our EI + L2O signal $u_i - v_i$ is the unbiased advantage estimator, their $s_i^{\text{loo}}$ is biased at finite $B$: it underestimates the advantage by $\frac{\gamma}{B-1}v_i$.
Furthermore, \citet{walder2025pass} did not evaluate $\texttt{sloo}$ experimentally, and we likewise exclude it because there is no clear justification for using a biased estimator.
In summary, their $\texttt{sloo\_minus\_one}$ (max@K - max@(K-1)) estimator tends to overestimate the advantage, whereas $\texttt{sloo}$ underestimates it at finite batch sizes. This contrast highlights the theoretical motivation for our L2O-based approach.

\subsection{Comparison with the Analytical \texorpdfstring{$\mathrm{pass@}K$}{pass@K} Estimator of \citet{chen2025pass}} \label{app:unified:comparison_with_chen2025pass}

We next relate the analytical binary $\mathrm{pass@}K$ estimator of \citet{chen2025pass} to the marginal statistics $u_i$ and $v_i$.
This comparison applies to binary rewards, $r_i\in\{0,1\}$.
Let $N_{\mathrm{neg}}$ be the number of negative responses in the batch.
For binary rewards, the all-subsets mean $\bar m$ is the empirical $\mathrm{pass@}K$ value:
\begin{equation}
    \bar m
    =
    1
    -
    \frac{\binom{N_{\mathrm{neg}}}{K}}{\binom{B}{K}}.
\label{eq:chen_mbar_binary}
\end{equation}
This is the group-level mean used in the analytical derivation of \citet{chen2025pass}.

We show that the raw analytical numerator of \citet{chen2025pass}, before division by the group standard deviation, is exactly
\begin{equation}
    u_i-\bar m.
\label{eq:chen_raw_ui_mbar}
\end{equation}
There are two cases.

\paragraph{Positive response.}
If $r_i=1$, then every size-$K$ subset containing $i$ has maximum reward one.
Therefore,
\begin{equation}
    u_i=1.
\label{eq:chen_positive_ui}
\end{equation}
The raw analytical numerator for a positive response is the positive group reward minus the group mean:
\begin{equation}
    A^{\mathrm{Chen,raw}}_i
    =
    1-\bar m
    =
    u_i-\bar m.
\label{eq:chen_positive_raw}
\end{equation}

\paragraph{Negative response.}
If $r_i=0$, then a size-$K$ subset containing $i$ is positive if and only if at least one of the other $K-1$ responses is positive.
Hence
\begin{equation}
    u_i
    =
    1
    -
    \frac{\binom{N_{\mathrm{neg}}-1}{K-1}}{\binom{B-1}{K-1}}.
\label{eq:chen_negative_ui}
\end{equation}
The raw analytical numerator for a negative response is the probability that a group containing this response is positive, minus the group mean:
\begin{equation}
    A^{\mathrm{Chen,raw}}_i
    =
    1
    -
    \frac{\binom{N_{\mathrm{neg}}-1}{K-1}}{\binom{B-1}{K-1}}
    -
    \bar m
    =
    u_i-\bar m.
\label{eq:chen_negative_raw}
\end{equation}
Thus, for both positive and negative responses,
\begin{equation}
    A^{\mathrm{Chen,raw}}_i
    =
    u_i-\bar m.
\label{eq:chen_raw_equals_centered_ui}
\end{equation}

We next relate $u_i-\bar m$ to the canonical signal $u_i-v_i$.
Partitioning all size-$K$ subsets of $\mathcal{B}$ into those containing $i$ and those not, and using $\binom{B-1}{K-1}/\binom{B}{K}=K/B$ and $\binom{B-1}{K}/\binom{B}{K}=(B-K)/B$, the all-subsets mean satisfies
\begin{equation}
    \bar m
    =
    \frac{K}{B}\, u_i + \frac{B-K}{B}\, v_i.
\label{eq:mbar_decomposition}
\end{equation}
Subtracting both sides from $u_i$ yields
\begin{equation}
    u_i-\bar m
    =
    \frac{B-K}{B}\,(u_i-v_i).
\label{eq:centered_ui_equivalence}
\end{equation}

Using Eq.~\eqref{eq:centered_ui_equivalence}, we obtain
\begin{equation}
    A^{\mathrm{Chen,raw}}_i
    =
    u_i-\bar m
    =
    \frac{B-K}{B}(u_i-v_i).
\label{eq:chen_raw_equals_maxpo_direction}
\end{equation}
Therefore, in the binary setting, the raw analytical estimator of \citet{chen2025pass} has the same centered direction as MaxPO up to a fixed scale.

The estimator used in \citet{chen2025pass} further divides this numerator by a group standard deviation $\sigma_{\mathrm{group}}$.
Thus,
\begin{equation}
    A^{\mathrm{Chen,std}}_i
    =
    \frac{u_i-\bar m}{\sigma_{\mathrm{group}}}.
\label{eq:chen_std_signal}
\end{equation}
This normalization may be useful as an optimization heuristic, but it introduces a random batch-dependent scale.
Consequently, the numerator is proportional to the MaxPO marginal signal, while the standard-deviation-normalized coefficient is not the exact no-std policy-gradient estimator derived from $u_i-v_i$.

\subsection{Comparison with the All-Subsets \texorpdfstring{$\mathrm{max@}K$}{max@K} Transform of \citet{bagirov2025best}} \label{app:unified:comparison_with_bagirov2025best}

We now compare MaxPO with the on-policy all-subsets transformation of \citet{bagirov2025best}.
The derivation in this subsection is on-policy and does not include the probability-ratio correction used in their off-policy estimator.

Let $\tilde r_i$ (Eq. (9) of \citet{bagirov2025best}) denote the raw all-subsets transformed reward assigned to sample $i$:
\begin{equation}
    \tilde r_i
    :=
    \frac{1}{\binom{B}{K}}
    \sum_{\substack{I\subseteq\mathcal{B}\\ |I|=K,\ i\in I}}
    M(I).
\label{eq:bagirov_raw_transform}
\end{equation}
Every size-$K$ subset $I$ containing $i$ can be written as $I=S\cup\{i\}$, where $S\subseteq\mathcal{B}\setminus\{i\}$ and $|S|=K-1$.
Therefore,
\begin{align}
    \tilde r_i
    &=
    \frac{1}{\binom{B}{K}}
    \sum_{\substack{S\subseteq\mathcal{B}\setminus\{i\}\\ |S|=K-1}}
    M(S\cup\{i\})
    \nonumber\\
    &=
    \frac{\binom{B-1}{K-1}}{\binom{B}{K}}
    u_i
    =
    \frac{K}{B}u_i.
\label{eq:bagirov_raw_equals_ui}
\end{align}
Thus, the raw on-policy transformation of \citet{bagirov2025best} is an uncentered all-subsets marginal signal.

Next, consider ordinary mean-centering of the transformed rewards.
Since
\begin{equation}
    \frac{1}{B}\sum_{i=1}^{B}u_i
    =
    \bar m,
\label{eq:mean_ui_equals_mbar}
\end{equation}
we have
\begin{equation}
    \bar{\tilde r}
    :=
    \frac{1}{B}\sum_{i=1}^{B}\tilde r_i
    =
    \frac{K}{B}\bar m.
\label{eq:bagirov_mean_transform}
\end{equation}
Therefore,
\begin{align}
    \tilde r_i-\bar{\tilde r}
    &=
    \frac{K}{B}(u_i-\bar m)
    \nonumber\\
    &=
    \frac{K}{B}\cdot \frac{B-K}{B}(u_i-v_i)
    \nonumber\\
    &=
    \frac{K(B-K)}{B^2}(u_i-v_i).
\label{eq:bagirov_mean_centered_equals_maxpo}
\end{align}
Thus, the mean-centered on-policy all-subsets transform of \citet{bagirov2025best} recovers the same centered direction as MaxPO up to a fixed scale.

If one instead applies z-score normalization to the transformed rewards, the numerator remains proportional to $u_i-v_i$, but the denominator introduces a random batch-dependent scale:
\begin{equation}
    \frac{\tilde r_i-\bar{\tilde r}}{\mathrm{std}(\tilde r)}
    =
    \frac{\frac{K(B-K)}{B^2}(u_i-v_i)}{\mathrm{std}(\tilde r)}.
\label{eq:bagirov_zscore_signal}
\end{equation}
As with the standard-deviation-normalized analytical $\mathrm{pass@}K$ estimator, this changes the exact no-std policy-gradient coefficient.

Finally, the off-policy estimator of \citet{bagirov2025best} is not simply another baseline choice for $u_i$ or $v_i$.
It introduces probability ratios for samples drawn from an older policy and then applies a first-order approximation to the product of ratios.
This produces a different approximate estimator.
The equivalences above therefore apply only to the on-policy all-subsets transformation and its mean-centered or z-score-normalized variants.

\paragraph{Clarifying ``BoN mean'' versus mean-centered EI.}
A possible source of ambiguity is the phrase ``BoN mean'' in \citet{bagirov2025best}.
One possible reading is that BoN mean starts from the EI or LOO-1 signal and then applies the usual GRPO-style mean and standard-deviation normalization.
Under this reading, the centered signal would be
\begin{equation}
    A_i^{\mathrm{meanEI}}
    :=
    s_i-\bar s,
    \qquad
    \bar s
    :=
    \frac{1}{B}\sum_{j=1}^{B}s_j.
\label{eq:mean_centered_ei}
\end{equation}
This is not MaxPO.
Using $s_i=u_i-w_i$, we have
\begin{equation}
    A_i^{\mathrm{meanEI}}
    =
    (u_i-w_i)-(\bar u-\bar w),
\label{eq:mean_centered_ei_expanded}
\end{equation}
which does not simplify to $u_i-v_i$ in general.
Moreover, $\bar s$ can depend on $r_i$, because $r_i$ may appear as a comparator inside $s_j$ for $j\ne i$.
Thus, mean-centering EI is not the L2O correction and does not generally satisfy the leave-out condition required by the policy-gradient unbiasedness argument.

The all-subsets transform in Eq.~\eqref{eq:bagirov_raw_transform} is a different vector.
It is proportional to $u_i$, not to the EI signal $s_i=u_i-w_i$:
\begin{equation}
    \tilde r_i
    =
    \frac{K}{B}u_i.
\end{equation}
Therefore, centering $\tilde r_i$ is fundamentally different from centering $s_i$.
As shown in Eq.~\eqref{eq:bagirov_mean_centered_equals_maxpo},
\begin{equation}
    \tilde r_i-\bar{\tilde r}
    =
    \frac{K(B-K)}{B^2}(u_i-v_i),
\end{equation}
so the mean-centered all-subsets transform is proportional to the MaxPO direction.
In short,
\begin{equation}
    s_i-\bar s
    \not\equiv
    u_i-v_i,
    \qquad
    \tilde r_i-\bar{\tilde r}
    \propto
    u_i-v_i.
\label{eq:bagirov_centering_distinction}
\end{equation}
The distinction is entirely about which vector is being centered.

%% file: sec/app/toy_experiments.tex
\section{Toy Experiments}\label{app:toy_experiments}
In this section, we explain the detailed settings and additional results for the toy experiments we presented in Sec.~\ref{sec:toy_experiment}.

\subsection{Bandits}\label{app:toy_experiments:bandits}
\subsubsection{Setting}\label{app:toy_experiments:bandits:setting}

We generate random bandit instances to rigorously evaluate the statistical properties---bias and variance---of the gradient estimators.
For each problem instance (seed), we sample the reward vector $\mathbf{r} \in \mathbb{R}^{|\mathcal{A}|}$ and the policy logit vector $\boldsymbol{\theta} \in \mathbb{R}^{|\mathcal{A}|}$ from a standard normal distribution:
\begin{equation}
    \boldsymbol{\theta} \sim \mathcal{N}(0, I), \quad \mathbf{r} \sim \mathcal{N}(0, I).
\end{equation}
The policy is defined as a softmax distribution over actions, $\pi_\theta(a) = \mathrm{softmax}(\boldsymbol{\theta})_a$.
Unless otherwise stated, we fix the max@K parameter to $K=2$ and the batch size to $B=8$.
We construct the PG estimator $\hat{g}$ using EI: $s_i$, EI+L2O: $s_i - b^{\mathrm{L2O}}_{-i}$, and EI+L1O: $s_i - b^{\mathrm{L1O}}_{-i}$.
For example, the EI estimator for $\nabla J^K(\theta)$ takes the form $\hat{g} = \frac{K}{B} \sum_{i=1}^B s_i \nabla_{\theta} \log \pi_\theta(a_i)$.

\paragraph{Ground Truth and Estimators.}
For a fixed problem instance specified by $(\boldsymbol{\theta},\mathbf{r})$, we compute the ground-truth max@K policy gradient
$g_{\mathrm{true}} := \nabla J^K(\boldsymbol{\theta})$
analytically via the closed-form derivative of the expected improvement (Proposition~1 of \citet{nishimori2026emergence}).
Each gradient estimator $\hat g$ is computed from a group of $B$ i.i.d.\ actions sampled from $\pi_{\boldsymbol{\theta}}$.

\paragraph{Estimation Error (``Bias'') Protocol.}
For each fixed instance $(\boldsymbol{\theta},\mathbf{r})$, we generate $N$ independent gradient estimates
$\{\hat g_j\}_{j=1}^N$.
We measure the (finite-sample) estimation error of the Monte Carlo mean relative to the ground truth:
\begin{equation}
    \mathrm{Err}(\boldsymbol{\theta},\mathbf{r})
    := \left\| \bar g_N - g_{\mathrm{true}} \right\|_2,
    \qquad
    \bar g_N := \frac{1}{N}\sum_{j=1}^N \hat g_j.
\end{equation}

\paragraph{Empirical Total Variance Protocol.}
To quantify estimator stability, we measure the empirical total variance, defined as the trace of the sample covariance:
\begin{equation}
    \widehat{\mathrm{TV}}_N(\boldsymbol{\theta},\mathbf{r})
    := \mathrm{Tr}\!\left(\frac{1}{N}\sum_{j=1}^N(\hat g_j-\bar g_N)(\hat g_j-\bar g_N)^\top\right)
    = \frac{1}{N}\sum_{j=1}^N \|\hat g_j-\bar g_N\|_2^2.
\end{equation}
Equivalently, using the second-moment identity,
\begin{equation}
    \widehat{\mathrm{TV}}_N(\boldsymbol{\theta},\mathbf{r})
    = \frac{1}{N}\sum_{j=1}^N \|\hat g_j\|_2^2 - \|\bar g_N\|_2^2.
\end{equation}
Unless stated otherwise, we use $N=10^5$ to estimate $\widehat{\mathrm{TV}}_N$.

The total variance of score-function gradients can scale with the action-space dimensionality.
To assess variance reduction in high-dimensional regimes (relevant to LLM settings), we report $\widehat{\mathrm{TV}}_N$ across varying numbers of actions.

\paragraph{Aggregation Across Seeds.}
We repeat the above procedure over $L=100$ independent random seeds, each generating a new pair $(\boldsymbol{\theta},\mathbf{r})$,
and report the mean and standard error of $\mathrm{Err}(\boldsymbol{\theta},\mathbf{r})$ and $\widehat{\mathrm{TV}}_N(\boldsymbol{\theta},\mathbf{r})$ across seeds.

\paragraph{Sweep Configurations.}
For the bias plot, we vary the number of batches $N \in \{10^3, 10^4, 10^5, 10^6\}$ to assess whether the estimation error decreases with $N$, fixing $B=8$ and $K=2$.
To examine variance, we vary the action space size $|\cA| \in \{10, 50, 100, 1000\}$ (fixing $K=2$, $B=8$) and the comparator size $K \in \{2,3,4,5,6\}$ (fixing $B=8$, $|\cA|=100$).
Larger $|\cA|$ corresponds to LLM-relevant regimes; varying $K$ probes how the available comparator budget affects the L2O baseline.

\paragraph{Comparison Baselines.}
We compare three estimators: EI ($s_i$), EI+L2O ($s_i - b^{\mathrm{L2O}}_{-i}$), and EI+L1O ($s_i - b^{\mathrm{L1O}}_{-i}$), as defined in Sec.~\ref{sec:method}.

\subsubsection{Action Space Size and Variance in Softmax Policy}
In the bandit setting with a softmax policy, the total variance of the policy gradient estimator is inherently tied to the dimensionality of the action space, denoted by $|\cA|$.
Here, we derive this relationship analytically.

Consider the score function for a selected action $a$, given by $\nabla_\theta \log \pi_\theta(a) = \mathbf{e}_a - \boldsymbol{\pi}_\theta$, where $\mathbf{e}_a$ is the one-hot vector for action $a$ and $\boldsymbol{\pi}_\theta$ is the probability vector.
Assuming the gradient variance is dominated by sampling noise (i.e., neglecting the squared norm of the expected gradient, $\|\E{\hat{g}}\|^2 \approx 0$), we approximate the total variance by the expected squared norm of the score function:
\begin{equation}
    V_{\text{total}}(\boldsymbol{\pi}_\theta) \approx \E[a \sim \pi_\theta]{ \left\| \nabla_\theta \log \pi_\theta(a) \right\|^2 }.
\end{equation}

Expanding the squared norm $\|\mathbf{e}_a - \boldsymbol{\pi}_\theta\|^2$ yields
\begin{align}
    \| \mathbf{e}_a - \boldsymbol{\pi}_\theta \|^2
    &= (\mathbf{e}_a - \boldsymbol{\pi}_\theta)^\top (\mathbf{e}_a - \boldsymbol{\pi}_\theta) \\
    &= \|\mathbf{e}_a\|^2 - 2 \mathbf{e}_a^\top \boldsymbol{\pi}_\theta + \|\boldsymbol{\pi}_\theta\|^2 \\
    &= 1 - 2 \pi_\theta(a) + \|\boldsymbol{\pi}_\theta\|^2.
\end{align}

Taking expectation over $a \sim \pi_\theta$ gives
\begin{align}
    V_{\text{total}}(\boldsymbol{\pi}_\theta)
    &= \sum_{a=1}^{|\cA|} \pi_\theta(a)\left( 1 - 2 \pi_\theta(a) + \|\boldsymbol{\pi}_\theta\|^2 \right) \\
    &= \sum_{a=1}^{|\cA|} \pi_\theta(a) - 2 \sum_{a=1}^{|\cA|} \pi_\theta(a)^2 + \|\boldsymbol{\pi}_\theta\|^2 \sum_{a=1}^{|\cA|} \pi_\theta(a) \\
    &= 1 - 2 \|\boldsymbol{\pi}_\theta\|^2 + \|\boldsymbol{\pi}_\theta\|^2 \\
    &= 1 - \|\boldsymbol{\pi}_\theta\|^2. \label{eq:softmax_variance}
\end{align}
Eq.~\eqref{eq:softmax_variance} shows that this approximation depends only on the squared $\ell_2$-norm of the probability vector.

To see the effect of the action space size $|\cA|$, consider the uniform policy (which is common at initialization), where $\pi_\theta(a)=1/|\cA|$ for all $a$.
Then
\begin{equation}
    V_{\text{total}}(\boldsymbol{\pi}_{\text{uniform}})
    = 1 - \sum_{a=1}^{|\cA|}\left(\frac{1}{|\cA|}\right)^2
    = 1 - |\cA|\cdot\frac{1}{|\cA|^2}
    = 1 - \frac{1}{|\cA|}.
\end{equation}

This implies that the variance increases with $|\cA|$ and approaches $1$ as $|\cA|\to\infty$.
For example, a binary bandit ($|\cA|=2$) yields $V_{\text{total}}(\boldsymbol{\pi}_{\text{uniform}})=0.5$, whereas $|\cA|=1000$ yields $V_{\text{total}}(\boldsymbol{\pi}_{\text{uniform}})=0.999$.
Thus, large action spaces can exhibit substantially higher gradient variance in the early stages of learning under a softmax policy.

\subsubsection{Additional Results}\label{app:toy_experiments:bandits:additional_results}
Here, we report additional results for the bandit setting.
In particular, we plot the variance of the gradient estimator for different batch sizes $B \in \{8, 16, 32\}$.

\begin{figure}[H]
    \centering
    \includegraphics[width=0.70\textwidth]{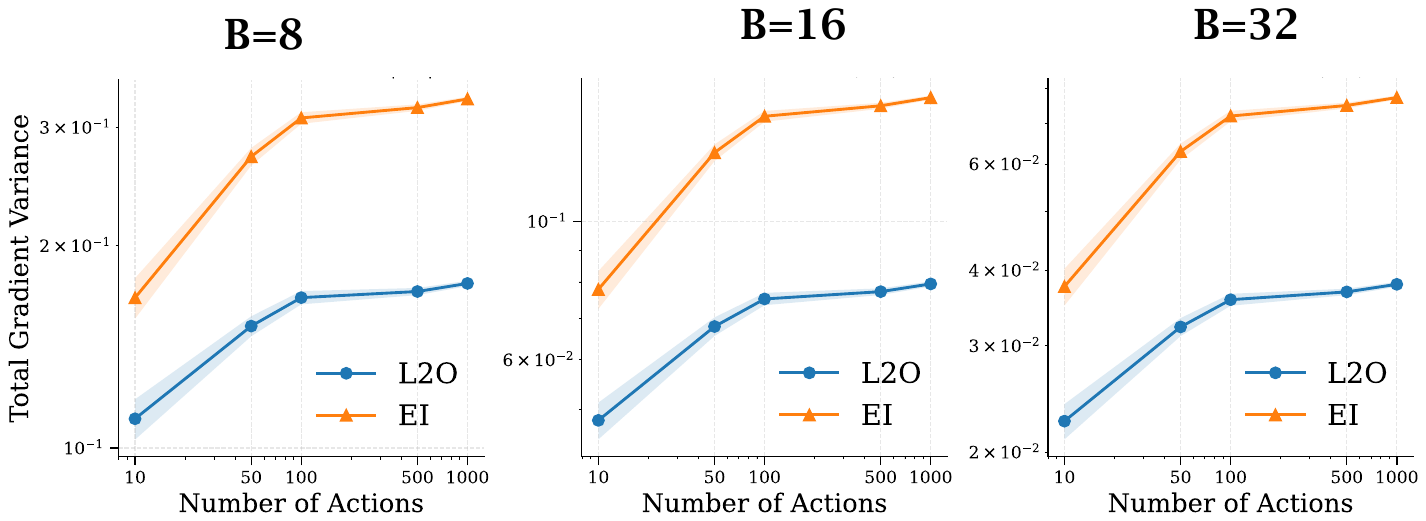}
    \caption{
        Variance vs action space size.
    }
    \label{app:toy_experiments:bandits:additional_results:variance_vs_action_space_size}
\end{figure}

\paragraph{Variance vs action space size.}
Here, we report additional results on variance versus action space size (Figure~\ref{app:toy_experiments:bandits:additional_results:variance_vs_action_space_size}).
We plot the variance of the gradient estimator for action space sizes $|\cA| \in \{10, 50, 100, 1000\}$, with batch sizes $B \in \{8, 16, 32\}$ and $K=2$.
As expected, the variance decreases as the batch size increases.
Moreover, the variance-reduction effect of the L2O baseline becomes more pronounced as the action space size increases, consistent with the results in Sec.~\ref{sec:toy_experiment:bandits}.

\begin{figure}[H]
    \centering
    \includegraphics[width=0.70\textwidth]{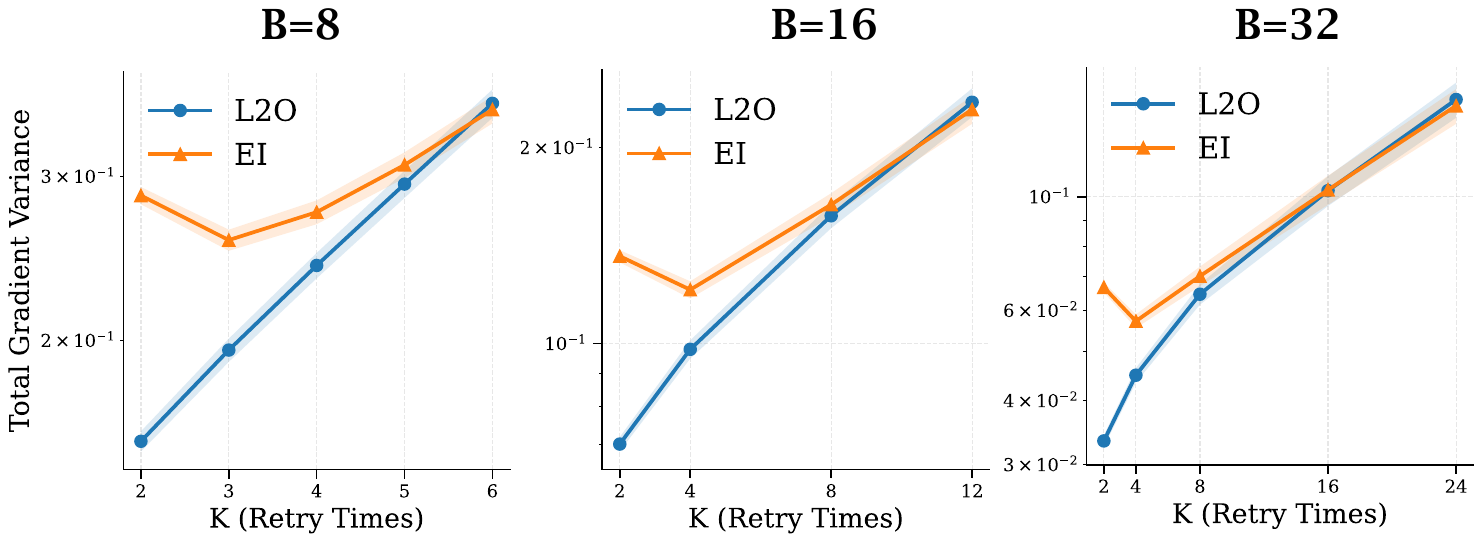}
    \caption{
        Variance vs $K$.
    }
    \label{app:toy_experiments:bandits:additional_results:variance_vs_m}
\end{figure}

\paragraph{Variance vs $K$.}\label{app:toy_experiments:bandits:additional_results:variance_vs_K}
Here, we report additional results on variance versus $K$ (Figure~\ref{app:toy_experiments:bandits:additional_results:variance_vs_m}).
We plot the variance while varying $K$, with batch sizes $B \in \{8, 16, 32\}$ and $|\cA|=100$.
As before, the variance decreases as the batch size increases.
We also observe that the variance-reduction effect of the L2O baseline is largest when $K$ is moderate relative to the batch size $B$, which is consistent with the results in Sec.~\ref{sec:toy_experiment:bandits}.

\subsection{Maze Environment}\label{app:toy_experiments:maze_environment}
Here, we report additional results for the maze environment.

\subsubsection{Setting}\label{app:toy_experiments:maze_environment:setting}

\paragraph{Environment.} 
\begin{wrapfigure}{r}{0.32\textwidth}
    \vspace{-1.0em}
    \centering
    \includegraphics[width=0.32\textwidth]{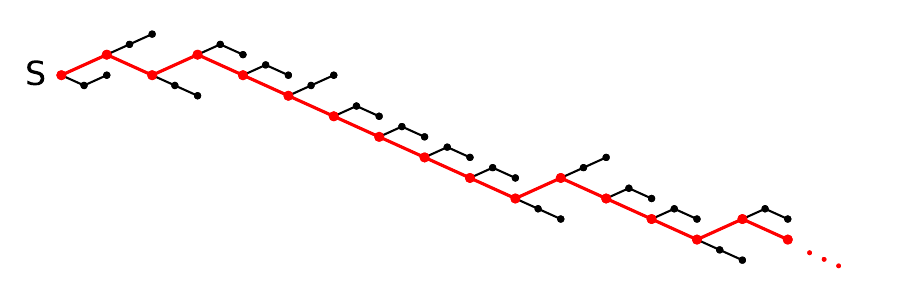}
    \caption{
        Structure of the Biased Maze Environment. The red line indicates the unique optimal path.
    }
    \label{fig:biased_maze}
    \vspace{-1.0em}
\end{wrapfigure}

The maze is deterministic, with binary actions ($0$ or $1$) available at each state.
The agent receives a reward of $+1$ for each forward step along a valid path.
There exists a single correct path to the goal (indicated by the red line in Fig.~\ref{fig:biased_maze}); if the agent selects the optimal action at every step, it can progress for up to 1000 steps.
If the agent takes an incorrect action, the episode terminates within one or two steps.
We design the maze to exhibit structural bias, where the correct action is $1$ in $75\%$ of the states.
Consequently, a policy that blindly prioritizes action $1$ can achieve moderately high returns, creating a local optimum that discourages exploration of the optimal path.

\paragraph{Policy Parameterization.}
We parameterize the policy using tabular logits for each state--action pair, augmented with a global bias vector, $\pi(a \mid s) = \mathrm{softmax}(\theta_{s,a} + \phi_a)$, where $\theta \in \R^{|S| \times |A|}$ and $\phi \in \R^{|A|}$.
The global parameter $\phi$ captures an environmental bias toward specific actions and is used as a proxy for exploratory tendency, as it represents the default policy in unvisited states (since $\theta$ is initialized near $0$).

\paragraph{Estimator Construction for Multi-Step RL.}\label{app:toy_experiments:maze_environment:estimator_construction_for_multi_step_rl}
In episodic RL settings, the optimization objective is defined over the cumulative return of an episode rather than immediate rewards.
Let $\tau_i = (s_{i,0}, a_{i,0}, r_{i,0}, \dots)$ denote the $i$-th trajectory in a batch of size $B$.
We define the episode return as $R_i = \sum_{t} r_{i,t}$ and the cumulative score function as $\Psi_i = \sum_{t} \nabla_\theta \log \pi_\theta(a_{i,t} | s_{i,t})$.
To apply the estimators derived in Section \ref{sec:method}, we simply substitute the immediate reward $r_i$ with the episode return $R_i$ and the score function $\psi(x, a_i)$ with $\Psi_i$.
The specific gradient estimators compared in our experiments are:

\begin{itemize}
    \item \textbf{Standard REINFORCE (with L1O Baseline):}
    Optimizes the standard expected return $J(\theta) = \E[x, a \sim \pi_\theta]{R(x, a)}$. We use the standard Leave-One-Out (L1O) baseline to reduce variance:
    \begin{equation}
        \hat{g}_{\mathrm{std}} := \frac{1}{B} \sum_{i=1}^{B} \Psi_i (R_i - b_{-i}), \quad \text{where } b_{-i} = \frac{1}{B-1} \sum_{j \neq i} R_j.
    \end{equation}
    For entropy-regularized REINFORCE, we added the entropy bonus term $\beta H(\pi_\theta)$ to the objective, where $H(\pi_\theta) = -\sum_{a} \pi_\theta(a) \log \pi_\theta(a)$ and $\beta$ is set to 0.01.

        \item \textbf{EI (Vanilla):}
    Optimizes the max@K objective using the raw Expected Improvement score $s_i$ (calculated via Eq.~\eqref{eq:pg_ei} in Sec.~\ref{sec:method:ei} using returns $\{R_j\}_{j=1}^B$):
        \begin{equation}
        \hat{g}_{\mathrm{ei}} := \frac{K}{B} \sum_{i=1}^{B} \Psi_i s_i.
    \end{equation}

    \item \textbf{EI + L2O (Ours):}
    Optimizes the max@K objective using the centered advantage $s_i - b_{-i}^{L2O}$:
        \begin{equation}
        \hat{g}_{\mathrm{l2o}} := \frac{K}{B} \sum_{i=1}^{B} \Psi_i (s_i - b_{-i}^{L2O}),
    \end{equation}
    where $b_{-i}^{L2O}$ is the L2O baseline computed from $\{R_j\}_{j=1}^B$ as defined in Theorem \ref{thm:l2o_baseline}.
\end{itemize}

\paragraph{Hyperparameters.}
The learning rate is set to $0.03$ after a grid search over $\{0.01, 0.03, 0.05\}$.
We collect trajectories with batch sizes $B \in \{5, 8, 16, 32\}$ and train for 3000 iterations.
We report the mean and standard error over $10$ different random seeds.

\paragraph{Metrics.}
To quantify variance reduction, we measure the total variance of the gradient estimator $\text{Tr}(\mathbb{V}[\hat{g}])$.
In Figure~\ref{fig:maze_return_all}, we report the "Grad Var Ratio", which is defined as the ratio of the gradient variance of our method (EI + L2O) to that of the vanilla EI estimator:
\begin{equation}
    \text{Ratio} = \frac{\sum_d \text{Var}(\hat{g}_{\mathrm{l2o}}^{(d)})}{\sum_d \text{Var}(\hat{g}_{\mathrm{ei}}^{(d)})},
\end{equation}
where the sum is taken over all parameters. A ratio less than $1.0$ indicates effective variance reduction.

\subsubsection{Results}\label{app:toy_experiments:maze_environment:additional_results}

\begin{figure}[H]
    \centering
    \includegraphics[width=0.90\textwidth]{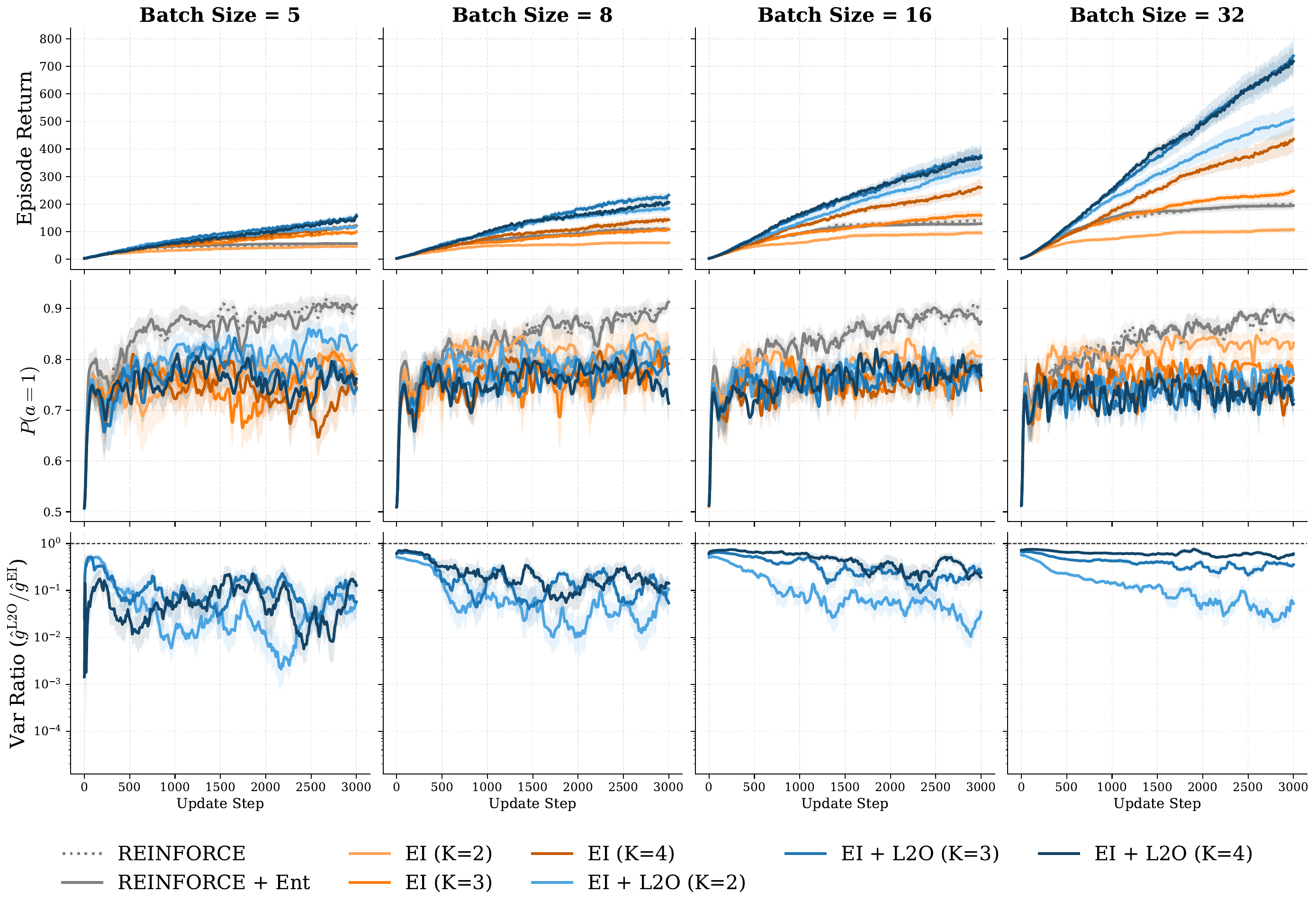}
    \caption{
        Moving average (window size 10) of the average return, probability of action 1 (via global bias $\phi_a$), and gradient variance ratio across different batch sizes $B \in \{5, 8, 16, 32\}$.
        The error bars are computed using the standard error of the moving average.
        Our method (EI + L2O) consistently outperforms standard REINFORCE, Entropy-regularized REINFORCE, and vanilla EI.
        The probability plots indicate that EI-based methods maintain an exploratory policy (prob. $\approx 0.75$) matching the environmental prior, whereas REINFORCE baselines converge to a greedy strategy (prob. $> 0.9$).
        The gradient variance ratio (EI + L2O / EI) remains consistently below 1.0, demonstrating the effectiveness of the L2O baseline.
    }
    \label{fig:maze_return_all}
\end{figure}

In Figure~\ref{fig:maze_return_all}, we present the full results for the maze environment across varying batch sizes $B \in \{5, 8, 16, 32\}$, reporting the average return, the probability of selecting action 1 (via the global bias parameter $\phi_a$), and the gradient variance ratio.
We observe that \textbf{EI + L2O consistently outperforms} all baselines—standard REINFORCE, Entropy-regularized REINFORCE, and vanilla EI—across all tested batch sizes.
The probability plots reveal that EI + L2O maintains a policy with a strong exploratory tendency (action 1 probability $\approx 0.75$), preventing the collapse to the suboptimal greedy strategy observed in the REINFORCE baselines.
Furthermore, the gradient variance ratio is consistently less than 1.0, confirming that the L2O baseline effectively reduces variance in all settings.
Notably, the variance-reduction effect appears more pronounced when the comparator set size is small (e.g., $K=2$).
This is expected, as increasing the number of comparators ($K$) tends to drive more Expected Improvement values to zero, since it becomes harder to beat the maximum of a larger set. As a result, the additional variance reduction available from L2O naturally decreases.
Nevertheless, the L2O baseline provides a consistent benefit across all configurations.

%% file: sec/app/llm_experiments.tex
\section{LLM Experiments}\label{app:llm_experiments}
In this section, we provide additional results for the LLM experiments presented in Sec.~\ref{sec:reasoning_experiments}.

\subsection{Setting}
\label{app:llm_experiments:setting}
All models are trained with a learning rate of $10^{-6}$, a batch size of 1024, and an optimization mini-batch
size of 256. For each input problem, we roll out 8 responses using a temperature of 1.0.
The Qwen model is trained on the MATH training split~\citep{hendrycks2021math}, while the Llama model is trained on a combined dataset of the GSM8K training split~\citep{cobbe2021training} and MATH Level 1 training examples.
We exclude evaluation examples from the RL training data, including all problems used in AIME24, AIME25, AMC23, MATH500, and Minerva evaluation.

\subsection{Additional Results}
\label{app:llm_experiments:additional_results}

Tables~\ref{tab:llm_passk_avg_256} and \ref{tab:llm_pass1_pass256_each} summarize the same evaluation protocol as the main text, reporting (i) task-average pass@k up to $k\le256$ and (ii) per-benchmark pass@1 and pass@256.
Figure~\ref{fig:llm_all} provides the per-benchmark pass@k curves to visualize how improvements distribute across datasets and across inference compute.
In addition, Figure~\ref{fig:llm_support_dynamics} analyzes how RL training changes the empirical support of correct solutions relative to the base model, following the taxonomy of \citet{wu2025invisible}.  
Specifically, for each problem under a matched sampling budget, we categorize outcomes into \textit{Support Preservation}, \textit{Support Shrinkage}, \textit{Support Expansion}, or \textit{Out of Support} (see the caption of Figure~\ref{fig:llm_support_dynamics}).

\begin{table}[t]
\centering
\setlength{\tabcolsep}{6pt}
\renewcommand{\arraystretch}{1.15}
\caption{\textbf{Task-average pass@k ($k\le256$).} Unweighted average over AIME24, AIME25, AMC23, MATH500, and Minerva under the evaluation protocol in Sec.~\ref{sec:reasoning_setup}.}
\label{tab:llm_passk_avg_256}
\begin{tabular}{lccccccccc}
\toprule
Method & 1 & 2 & 4 & 8 & 16 & 32 & 64 & 128 & 256 \\
\midrule
\textit{Qwen2.5-Math-7B}\\
\midrule
GRPO        & \textbf{40.7} & 45.5 & 49.2 & 52.2 & 55.4 & 58.7 & 62.1 & 65.6 & 69.3 \\
Entropy-Adv & 38.4 & 43.7 & 48.3 & 52.3 & 56.3 & 60.2 & 64.1 & 67.7 & 70.9 \\
PKPO        & 37.2 & 42.6 & 47.4 & 51.4 & 55.1 & 59.0 & 62.9 & 66.8 & 70.6 \\
MaxPO (Ours) & 39.9 & \textbf{46.5} & \textbf{51.7} & \textbf{56.2} & \textbf{60.1} & \textbf{63.8} & \textbf{67.5} & \textbf{71.1} & \textbf{74.3} \\
\midrule
\textit{Llama-3.2-3B-Instruct}\\
\midrule
GRPO        & \textbf{22.4} & \textbf{28.0} & 33.3 & 38.2 & 42.9 & 47.7 & 52.7 & 57.5 & 62.1 \\
Entropy-Adv & 22.1 & 27.7 & 33.2 & 38.5 & 43.6 & 48.5 & 53.3 & 58.2 & 63.1 \\
PKPO        & 20.1 & 25.5 & 31.3 & 37.1 & 42.5 & 47.5 & 52.7 & 58.0 & 63.5 \\
MaxPO (Ours) & 21.8 & 27.6 & \textbf{33.4} & \textbf{38.7} & \textbf{43.8} & \textbf{48.9} & \textbf{54.0} & \textbf{59.5} & \textbf{65.0} \\
\bottomrule
\end{tabular}
\end{table}

\begin{table}[t]
\centering
\setlength{\tabcolsep}{6pt}
\renewcommand{\arraystretch}{1.15}
\caption{\textbf{Per-benchmark pass@1 / pass@256.} Results for each benchmark under the same evaluation protocol as Sec.~\ref{sec:reasoning_setup}.}
\label{tab:llm_pass1_pass256_each}
\begin{tabular}{lcccccc}
\toprule
Method & AIME24 & AIME25 & AMC23 & MATH500 & Minerva & Avg. \\
\midrule
\multicolumn{7}{l}{\textit{Qwen2.5-Math-7B}}\\
\midrule
GRPO        & \textbf{30.3}/72.0 & 10.5/41.1 & \textbf{63.3}/96.1 & \textbf{76.3}/92.9 & \textbf{23.0}/44.5 & \textbf{40.7}/69.3 \\
Entropy-Adv & 25.6/72.3 & 9.8/47.3 & 59.5/97.0 & 74.7/93.6 & 22.4/44.2 & 38.4/70.9 \\
PKPO        & 19.9/72.9 & \textbf{11.1}/49.2 & 60.7/94.4 & 72.9/93.1 & 21.4/43.4 & 37.2/70.6 \\
MaxPO (Ours) & 28.5/\textbf{74.2} & 10.6/\textbf{54.7} & 62.2/\textbf{98.0} & 75.6/\textbf{95.7} & 22.9/\textbf{48.8} & 39.9/\textbf{74.3} \\
\midrule
\multicolumn{7}{l}{\textit{Llama-3.2-3B-Instruct}}\\
\midrule
GRPO        & \textbf{13.7}/50.5 & 0.5/34.3 & \textbf{30.2}/93.5 & 52.0/90.5 & \textbf{15.5}/41.6 & \textbf{22.4}/62.1 \\
Entropy-Adv & 12.6/49.2 & \textbf{0.9}/\textbf{36.5} & 29.5/94.8 & \textbf{52.4}/91.7 & 15.1/43.2 & 22.1/63.1 \\
PKPO        & 9.3/54.4 & 0.5/34.0 & 27.4/\textbf{96.8} & 49.2/90.7 & 14.2/41.4 & 20.1/63.5 \\
MaxPO (Ours) & 11.7/\textbf{58.1} & 0.6/35.4 & 29.9/96.0 & 51.7/\textbf{91.8} & 15.3/\textbf{43.8} & 21.8/\textbf{65.0} \\
\bottomrule
\end{tabular}
\end{table}

\clearpage

\begin{figure}[H]
    \centering
    \includegraphics[width=0.90\textwidth]{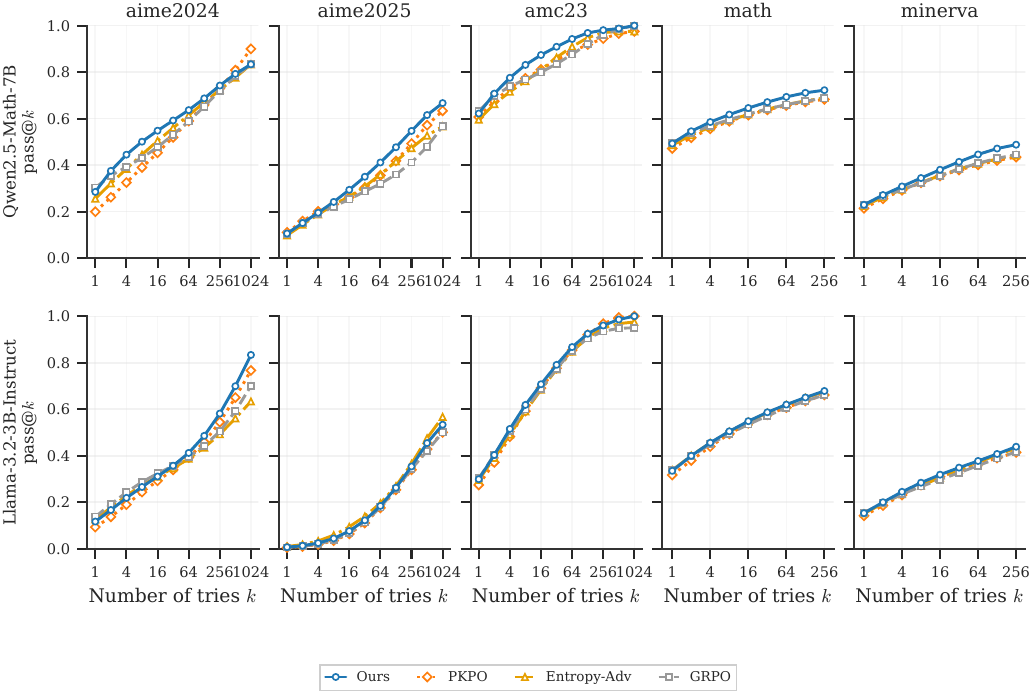}
        \caption{\textbf{Per-benchmark pass@k curves.} For each benchmark, we plot pass@k as a function of inference compute $k$ for all methods. This complements the task-average curves in Figure~\ref{fig:passk_avg_256} by showing where gains come from across datasets and inference-compute levels, and whether improvements persist as $k$ increases.}
    \label{fig:llm_all}
\end{figure}

\begin{figure}[H]
    \centering  \includegraphics[width=0.90\textwidth]{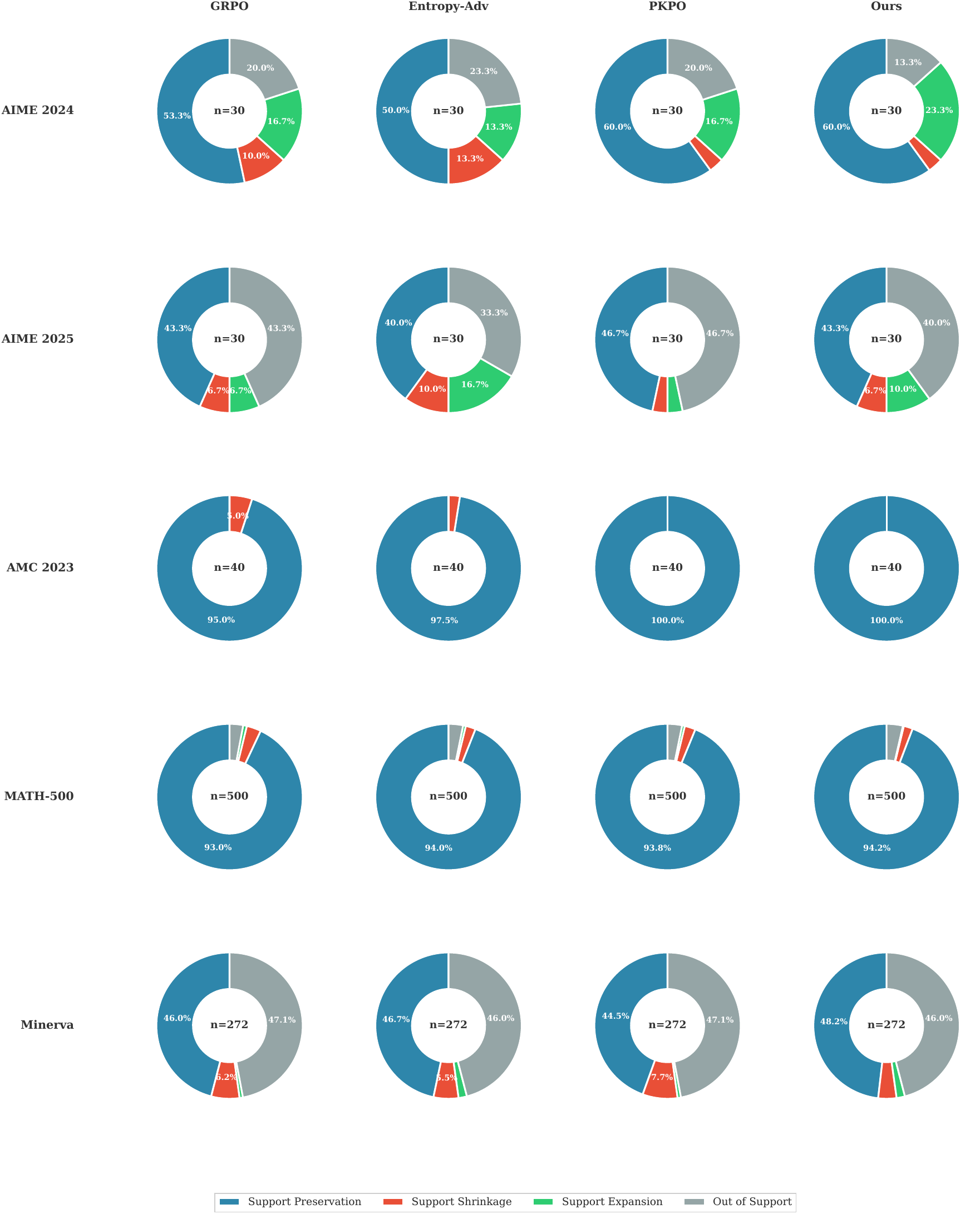}
        \caption{Empirical support dynamics relative to the base model, following the taxonomy of \citet{wu2025invisible}. For each benchmark, we categorize each problem into one of four cases under a matched sampling budget: (i) \textit{Support Preservation}: both the base model and the RL-trained model produce at least one correct completion; (ii) \textit{Support Shrinkage}: the base model succeeds but the RL-trained model fails; (iii) \textit{Support Expansion}: the RL-trained model succeeds but the base model fails; and (iv) \textit{Out of Support}: neither model produces a correct completion. Donut charts report the proportions for each RL method across benchmarks (sample sizes are shown in the plots).}
    \label{fig:llm_support_dynamics} 
\end{figure}

\subsubsection{Effect of Objective Size $K$}
\label{app:ablation_k}

We study the impact of the training objective size $K$ by comparing $K\in\{2,4\}$ while keeping other settings fixed.
As shown in Figure~\ref{fig:ablation_k}, increasing $K$ consistently decreases pass@1 but improves pass@k for large $k$ (e.g., $k=256$) for both Llama and Qwen.
This indicates an exploration-exploitation trade-off: increasing $K$ promotes broader exploration, which reduces one-shot accuracy but improves the probability of success when multiple samples are available.
Overall, $K=2$ offers the best balance in our setting and is used as the default.

\begin{figure}[t]
  \centering

  \begin{subfigure}[t]{0.49\linewidth}
    \centering
        \includegraphics[width=\linewidth]{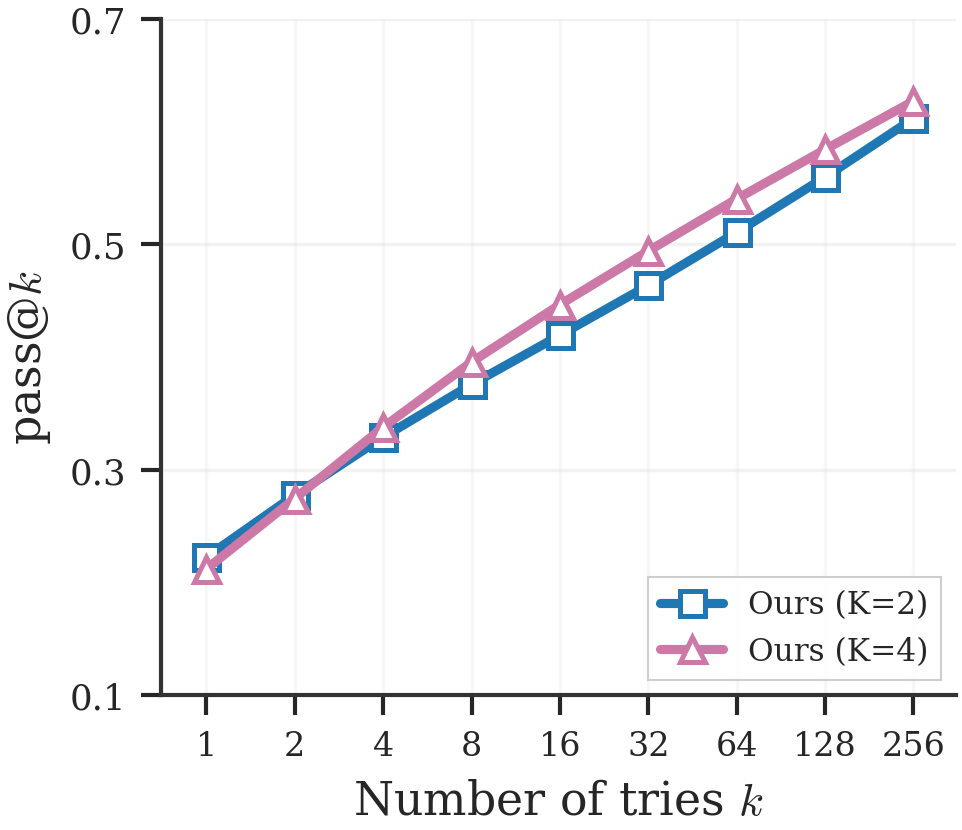}
                                \caption{Llama-3.2-3B-Instruct}
    \label{fig:ablation_k_llama}
  \end{subfigure}\hfill
  \begin{subfigure}[t]{0.49\linewidth}
    \centering
        \includegraphics[width=\linewidth]{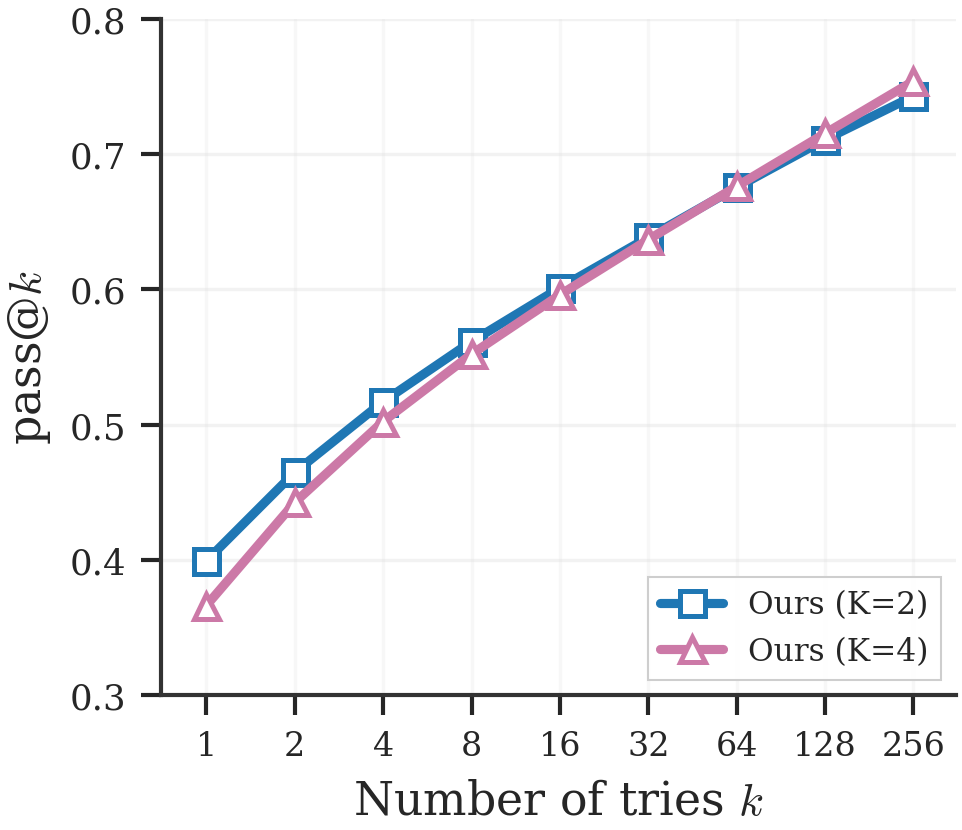}
                                \caption{Qwen2.5-Math-7B}
    \label{fig:ablation_k_qwen}
  \end{subfigure}

  \caption{Sensitivity to the training objective size $K$ ($K\in\{2,4\}$). Reported curves are task-average pass@k (unweighted average over AIME24, AIME25, AMC23, MATH500, and Minerva). For both Llama and Qwen, increasing $K$ lowers pass@1 while boosting pass@k for large $k$, indicating an exploration-exploitation trade-off: broader exploration reduces one-shot accuracy but improves success probability given more samples.}
  \label{fig:ablation_k}
\end{figure}